\newcommand{\cmark} {\ding{51}}% % Added
\newcommand{\xmark}{\ding{55}}% % Added
\tikzstyle{vertex}=[circle, draw, inner sep=0pt, minimum size=6pt]
\definecolor{mygreen}{RGB}{6, 255, 87}
\newcommand*{\tikzmk}[1]{\tikz[remember picture,overlay,] \node (#1) {};\ignorespaces}
\newcommand{\boxit}[1]{\tikz[remember picture,overlay]{\node[yshift=2pt,xshift=-68pt,fill=#1,opacity=.2,fit={(A)($(B)+(0.97\linewidth,.8\baselineskip)$)}] {};}\ignorespaces}
\newtheorem{theorem}{Theorem}
\newtheorem{lemma}{Lemma}
\newtheorem{remark}{Remark}
\newtheorem{definition}{Definition}
\newtheorem{assumption}{Assumption}
\newcommand{\R}{\mathbb{R}}
\DeclarePairedDelimiter{\nrm}{\big \Vert}{\big \Vert}
\DeclarePairedDelimiter{\norm}{\Big \Vert}{\Big \Vert}
\DeclareMathOperator*{\argmin}{arg\,min} 
\DeclareMathOperator*{\argmax}{arg\,max}
\DeclareMathOperator*{\prox}{Prox} 
\newcommand{\mc}[1]{\mathcal{#1}}
\def\ddefloop#1{\ifx\ddefloop#1\else\ddef{#1}\expandafter\ddefloop\fi}
\def\ddef#1{\expandafter\def\csname 
bb#1\endcsname{\ensuremath{\mathbb{#1}}}}
\def\ddefloop#1{\ifx\ddefloop#1\else\ddef{#1}\expandafter\ddefloop\fi}
\def\ddef#1{\expandafter\def\csname 
b#1\endcsname{\ensuremath{\mathbf{#1}}}}
\def\ddef#1{\expandafter\def\csname 
c#1\endcsname{\ensuremath{\mathcal{#1}}}}
\def\ddef#1{\expandafter\def\csname 
h#1\endcsname{\ensuremath{\widehat{#1}}}}
\def\ddef#1{\expandafter\def\csname 
hc#1\endcsname{\ensuremath{\widehat{\mathcal{#1}}}}}
\def\ddef#1{\expandafter\def\csname 
t#1\endcsname{\ensuremath{\widetilde{#1}}}}
\def\ddef#1{\expandafter\def\csname 
tc#1\endcsname{\ensuremath{\widetilde{\mathcal{#1}}}}}
\newsavebox\CBox
\newcommand{\removed}[1]{}
\DeclareSymbolFont{bbold}{U}{bbold}{m}{n}
\DeclareSymbolFontAlphabet{\mathbbold}{bbold}
\DeclareMathOperator{\Gap}{Gap}
\DeclareMathOperator{\Var}{Var}
\newcommand\numberthis{\addtocounter{equation}{1}\tag{\theequation}}
\renewcommand{\algorithmicrequire}{\textbf{Input:}}
\title{\rule{\linewidth}{1.5pt}\\ \vspace{3pt}\textbf{Federated Composite Saddle Point Optimization} \vspace{3pt} \\\rule[8pt]{\linewidth}{1pt}}
\author{
\normalsize
\begin{minipage}{0.45\textwidth}
\centering
\textbf{Site Bai} \\
Department of Computer Science \\
Purdue University \\
\small\url{bai123@purdue.edu}
\end{minipage}
\begin{minipage}{0.45\textwidth}
\centering
\textbf{Brian Bullins}\\ 
Department of Computer Science \\
Purdue University \\
\small\url{bbullins@purdue.edu}
\end{minipage}
}
\date{}
\begin{document}
\maketitle

\begin{abstract}
Federated learning (FL) approaches for saddle point problems (SPP) have recently gained in popularity due to the critical role they play in machine learning (ML). Existing works mostly target smooth unconstrained objectives in Euclidean space, whereas ML problems often involve constraints or non-smooth regularization, which results in a need for composite optimization. Addressing these issues, we propose Federated Dual Extrapolation (FeDualEx), an extra-step primal-dual algorithm, which is the first of its kind that encompasses both saddle point optimization and composite objectives under the FL paradigm. Both the convergence analysis and the empirical evaluation demonstrate the effectiveness of FeDualEx in these challenging settings. In addition, even for the sequential version of FeDualEx, we provide rates for the stochastic composite saddle point setting which, to our knowledge, are not found in prior literature.
\end{abstract}
% \vspace{-1.2em}
\section{Introduction}

A notable fraction of machine learning (ML) problems belong to saddle point problems (SPP), including adversarial robustness \citep{madry2018towards, CHEN2023119}, generative adversarial networks (GAN) \citep{NIPS2014Goodfellow}, matrix games \citep{pmlr-v75-abernethy18a}, multi-agent reinforcement learning \citep{wai2018multi}, etc. These applications call for effective distributed saddle point optimization as their scale evolves beyond centralized learning. Federated Learning (FL) \citep{mcmahan17Communication, Konecny2015Federated} is a novel distributed learning paradigm of such where a central server coordinates collaborative learning among clients through rounds of communication. In each round, clients learn a synchronized global model locally without sharing their private data, then send the model to the server for aggregation, usually through averaging \citep{mcmahan17Communication, stich2018local}, to produce a new global model. The cost of communication is known to dominate the FL process \citep{konevcny2016federated}.

While preliminary progress has been made in distributed saddle point optimization \citep{beznosikov2020distributed, hou2021efficient}, we point out that machine learning problems are commonly associated with task-specific constraints or possibly non-smooth regularization, which results in a need for composite optimization (CO). Typical ones include $\ell_1$ norm for sparsity and nuclear norm for low-rankness, which show up in examples spanning from classical LASSO \citep{Tibshirani1996LASSO}, sparse regression \citep{hastie2015statistical} to recent deep learning such as adversarial example generation \citep{Moosavi2016DeepFool, li2022review}, sparse GAN \citep{zhou2020sparse, mahdizadehaghdam2019sparse}, convexified learning \citep{sahiner2022hidden, bai2022dual} and others. Existing distributed methods for SPP fail to cover these composite scenarios as summarized in Table \ref{tab:fed}.

We present the federated learning paradigm for composite saddle point optimization defined in \eqref{eq:obj}. In particular, we propose Federated Dual Extrapolation (FeDualEx) (Algorithm \ref{alg:fed-DualEx}), which builds on Nesterov's dual extrapolation \citep{nesterov2007dual}, a classic extra-step algorithm geared for SPP. It carries out a two-step evaluation of a proximal operator \citep{censor1992proximal} defined by the Bregman Divergence \citep{BREGMAN1967200}, which allows for SPP beyond the Euclidean space. To adapt to composite regularization, FeDualEx also draws inspiration from recent progress in composite convex optimization \citep{yuan2021federated} and adopts the notion of generalized Bregman divergence \citep{flammarion2017stochastic} instead, which merges the regularization into its distance-generating function.
With some novel technical accommodations, we provide the convergence rate for FeDualEx under the homogeneous setting, which is, to the best of our knowledge, the first convergence rate for composite saddle point optimization under the FL paradigm. Furthermore, we  conduct numerical evaluations to verify the effectiveness of FeDualEx on composite SPP.

We also study some other aspects of FeDualEx. First, we notice that \citet{yuan2021federated} identified the ``curse of primal averaging'' in FL from the dichotomy between Federated Mirror Descent (FedMiD) and Federated Dual Averaging (FedDualAvg) \citep{yuan2021federated}, where the specific regularization imposed structure on the client models may no longer hold after primal averaging on the server. Thus, for completeness and comparison, we include the primal twin of FeDualEx based on mirror prox \citep{nemirovski2004prox}, namely ``Federated Mirror Prox (FedMiP)'',  as a baseline in Appendix \ref{appx:FedMiP}. It highlights that FeDualEx naturally inherits the merit of dual aggregation from FedDualAvg. In addition, we analyze FeDualEx for federated composite convex optimization and show that FeDualEx recovers the same convergence rate as FedDualAvg under the convex setting.

Last but not least, by reducing the number of clients to one, we show for the sequential version of FeDualEx that the analysis naturally yields a convergence rate for stochastic composite saddle point optimization which, to our knowledge, is not found in prior literature. Further removing the noise from gradient estimates, FeDualEx still generalizes dual extrapolation to deterministic composite saddle point optimization with a $\mc{O}(\frac{1}{T})$ convergence rate that matches the smooth case and also the pioneering composite mirror prox (CoMP) \citep{he2015mirror} as presented in Table \ref{tab:composite}.

\begin{table*}[t]
\centering
$R$: Communication Rounds. \hfill $K$: Local Steps. \hfill $\beta$: Smoothness. \hfill $B$: Diameter. \hfill $G$: Gradient Bound.
\begin{adjustbox}{max width=\textwidth}
  \begin{tabular}{ccccc}
    \toprule
    \hline
    {Method}  & {Convex} &   {{\makecell{Saddle \\ Point}}} & {\makecell{Composite \\ Objectives}} & {\makecell{Convexity \\ Assumption}} \\
    \toprule
     {\makecell{FedAvg \\ \citep{khaled2020tighter}}} & $\mc{O} \left( \frac{\beta^\frac{1}{3}\sigma^\frac{2}{3}B^\frac{4}{3}}{K^\frac{1}{3}R^\frac{2}{3}} \right)$ & \st{ \ \ \ \ \ \ } & \LARGE \makecell{\xmark} & convex  \\
    \midrule
     {\makecell{FedDualAvg \\ \citep{yuan2021federated}}} & $\mc{O} \left( \frac{\beta^\frac{1}{3}G^\frac{2}{3}B^\frac{2}{3}}{R^\frac{2}{3}} \right)$ & \st{ \ \ \ \ \ \ } & \LARGE \makecell{\cmark} & convex \\
    \bottomrule
    \toprule
    {\makecell{Extra Step Local SGD \\ \citep{beznosikov2020distributed}}} & \st{ \ \ \ \ \ \ } & $\mc{O} \left( B^2 \exp{\{ - \frac{\alpha R}{\beta}\}}{} \right)$ & \LARGE \makecell{\xmark} & \makecell{$\alpha$-strongly  \\  convex-concave} \\
    \midrule
     {\makecell{SCCAFFOLD-S \\ \citep{hou2021efficient}}}& \st{ \ \ \ \ \ \ } & $\mc{O} \left( \frac{\beta^2}{\alpha^2}B^2 \exp{\{ - \frac{\alpha R}{\beta}\}}{} \right)$ & \LARGE \makecell{\xmark} & \makecell{$\alpha$-strongly  \\  convex-concave} \\
     \bottomrule
     \rowcolor{green!20}
     {\tabularCenterstack{c}{\textbf{FeDualEx} \\ \textbf{(Ours)} }} & $\mc{O} \left( \frac{\beta^\frac{1}{3}G^\frac{2}{3}B^\frac{2}{3}}{R^\frac{2}{3}} \right)$ & $\mc{O} \left( \frac{\beta^\frac{1}{2}G^\frac{1}{2}B}{R^\frac{1}{2}} \right)$ & \LARGE \cmark & convex-concave \\
    \hline
    \bottomrule
  \end{tabular}
  \end{adjustbox}
 \caption{We list existing convergence rates on composite convex optimization and smooth saddle point optimization in FL. FedAvg is also included as a reference. Assuming the number of clients is large enough, the dominating term is taken with respect to the rounds of communication $R$. Full complexity is demonstrated in Appendix \ref{appx:review}. We further note that none of the work other than ours covers convex-concave composite SPP. They are included only for completeness.} \vspace{-0pt}
 \label{tab:fed} 
\end{table*} 

\vspace{3pt}
\textbf{\large Our Contributions:}
\begin{itemize} \vspace{-2pt}
    \item We propose FeDualEx for federated learning of SPP with composite possibly non-smooth regularization (Section \ref{sec:FeDualEx-alg}). In support of the proposed algorithm, we provide a convergence rate for FeDualEx under the homogeneous setting (Section \ref{sec:FeDualEx-convergence}). To the best of our knowledge, FeDualEx is the first of its kind that encompasses composite possibly non-smooth regularization for SPP under a federated or distributed paradigm, as shown in Table \ref{tab:fed}. We also present its primal twin FedMiP as a baseline (Appendix \ref{appx:FedMiP}).
    \item Restricting the objective to composite convex functions, FeDualEx achieves the same convergence rate as its counterpart FedDualAvg \citep{yuan2021federated} in federated composite convex optimization (Section \ref{sec:FeDualEx-convergence}).
    \item FeDualEx produces several byproducts in the CO realm, as demonstrated in Table \ref{tab:composite} : (1) The sequential version of FeDualEx leads to the stochastic dual extrapolation for CO and yields, to our knowledge, the first convergence rate for the stochastic optimization of composite SPP (Section \ref{sec:FeDualEx-stochastic}). (2) Further removing the noise reveals its deterministic version, with rate matching existing ones in smooth and composite saddle point optimization (Section \ref{sec:FeDualEx-deterministic}).
    \item We demonstrate experimentally the effectiveness of FeDualEx on composite saddle point tasks including $\ell_1$ regularization with $\ell_\infty$ ball constraint (Section \ref{sec:experiment}).
\end{itemize}

\begin{table*}[t]
\centering
\begin{adjustbox}{max width=\textwidth}
  \begin{tabular}{c|c|c|c}
    \toprule
    \hline
    {Noise}  & {Rate} &  \cellcolor{green!20} {Composite SPP} & {Smooth SPP }  \\
    \hline
     Deterministic & $\mc{O} \left( \frac{1}{T} \right)$ &  \cellcolor{green!20}\tabularCenterstack{c}{CoMP \citep{he2015mirror} \\
     \textbf{Deterministic FeDualEx (Ours)}} & \tabularCenterstack{c}{Mirror Prox \citep{nemirovski2004prox} \\ Dual Extrapolation \citep{nesterov2007dual} \\ Accelerated Proximal Gradient \citep{tseng2008accelerated} } \\
     \hline
     %\rowcolor{green!20} 
     {Stochastic} & $\mc{O} \left( \frac{1}{\sqrt{T}}\right)$ &  \cellcolor{green!20}\textbf{Sequential FeDualEx (Ours)} & \tabularCenterstack{c}{Stochastic Mirror Prox \citep{juditsky2011solving} \\ \textbf{Sequential FeDualEx (Ours)}} \\
    \hline
    \bottomrule
  \end{tabular}
  \end{adjustbox}
 \caption{Convergence rates for convex-concave SPP. The deterministic version of FeDualEx generalizes dual extrapolation (DE) to composite SPP, and the sequential version of FeDualEx generalizes DE to both smooth and composite stochastic saddle point optimization.} \vspace{-0pt}
 \label{tab:composite} 
\end{table*} 

\vspace{-3pt}
\section{Related Work}
We provide a brief overview of some related work and defer extended discussions to Appendix \ref{appx:review}.

Federated learning was first termed in the algorithm Federated Averaging (FedAvg) \citep{mcmahan17Communication}. \citet{stich2018local} provides the first convergence rate for FedAvg under the homogeneous setting. The rate has been improved with tighter analysis and also analyzed under heterogeneity, to name a few examples \citep{khaled2020tighter, woodworth2020minibatch}. Recently, \cite{yuan2021federated} extended FedAvg to composite convex optimization and proposed FedDualAvg that aggregates learned parameters in the dual space and overcomes the ``curse of primal averaging'' in federated composite optimization.

For SPP, \citet{beznosikov2020distributed} investigate the distributed extra-gradient method for strongly-convex strongly-concave SPP in the Euclidean space. \citet{hou2021efficient} propose FedAvg-S and SCAFFOLD-S based on FedAvg \citep{mcmahan17Communication} and SCAFFOLD \citep{karimireddy2020scaffold} for SPP, which yields similar convergence rate to \citep{beznosikov2020distributed}. Yet, the aforementioned works are limited to smooth and unconstrained SPP in the Euclidean space. 
The more general setting of composite SPP is only found in sequential optimization literature, where the representative composite mirror prox (CoMP) \citep{he2015mirror} generalizes 
the classic mirror prox \citep{nemirovski2004prox} yet keeps the $\mc{O}(\frac{1}{T})$ convergence rate. We will later show that the sequential analysis of our proposed algorithm also yields the same rate for dual extrapolation \citep{nesterov2007dual} in composite optimization, utilizing different proving techniques. And as a result, we focus on the federated learning of composite SPP and propose FeDualEx in this paper.

\section{Preliminaries and Definitions}
We provide some preliminaries and definitions necessary for introducing FeDualEx. More details are included in Appendix \ref{appx:prelim}. We first define the objective: Composite SPP, then briefly review the mirror prox and dual extrapolation as well as techniques for composite convex optimization. We close this section with the basic mechanism of federated learning. To begin with, we lay out the notations.

\textbf{Notations.} We use $[n]$ to represent the set $\{1,2,...,n\}$. We use $\Vert \cdot \Vert$ to denote an arbitrary norm,  $\Vert \cdot \Vert_\ast$ to denote the dual norm, and $\Vert \cdot \Vert_2$ to denote the Euclidean norm. We use $\nabla$ for gradients, $\partial$ for subgradients, and $\langle \cdot, \cdot \rangle$ for inner products. Related to the algorithm, we use English letters (e.g., $z$, $x$, $y$) to denote primal variables, Greek letters (e.g., $\omega$, $\varsigma$, $\mu$, $\nu$) to denote dual variables. We use $R$ for communication rounds, $K$ for local updates, $B$ for diameter bound, $G$ for gradient bound, $\beta$ for smoothness constant, $\sigma$ for standard deviation, $\xi$ for random samples. We use $h^\ast$ to denote the convex conjugate of a function $h$.

\subsection{Composite Saddle Point Optimization}
Due to practical interest and lack of effective methods in FL, we study composite saddle point optimization. Its objective is formally given in the following definition.
\begin{definition}[Composite SPP] \label{def:saddle-obj} The objective of composite saddle point optimization is defined as
    \begin{align} \label{eq:obj}
        \min_{x\in\mc{X}} \max_{y\in\mc{Y}} \phi(x,y) = f(x, y) + \psi_1(x) - \psi_2(y)
    \end{align}
    where $f(x, y) = \frac{1}{M}\sum_{m=1}^Mf_m(x,y)$ and $\psi_1(x)$, $\psi_2(y)$ are possibly non-smooth.
\end{definition}
It is typically evaluated by the duality gap: $\Gap(\hat{x},\hat{y}) = \max_{y\in\mc{Y}}\phi(\hat{x},y) -\min_{x\in\mc{X}}\phi(x,\hat{y})$.

\subsection{Mirror Prox and Dual Extrapolation} \label{sec:prelim-DE}
\begin{wrapfigure}{r}{0.4\textwidth}
    \centering \vspace{-28pt}
    \begin{align*}
        x_t &= \prox{}^h_{\bar{x}} (\mu_t) \\
        x_{t+1/2} &= \prox{}^h_{x_t} (\eta g(x_t)) \\
        \mu_{t+1} &= \mu_t + \eta g(x_{t+1/2})
    \end{align*} \vspace{-15pt}
    \caption{Dual Extrapolation.}
    \label{fig:DualEx} \vspace{5pt}
\end{wrapfigure} 
Mirror prox  \citep{nemirovski2004prox} and dual extrapolation \citep{nesterov2007dual} are classic methods for convex-concave SPP. Both are proximal algorithms based on the proximal operator defined as $$\prox{}^h_{x'}(\cdot) = \argmin_x\{\langle \cdot, x\rangle + V^h_{x'}(x)\},$$ in which $V^h_{x'}(x)$ is the Bregman divergence generated by some closed, strictly convex, and differentiable function $h$, and is defined as follows:
$$V^h_{x'}(x) = h(x) - h(x') - \langle\nabla h(x'), x-x'\rangle.$$
Both algorithms conduct two evaluations of the proximal operator, while dual extrapolation carries out updates in the dual space. Figure \ref{fig:DualEx} gives a brief illustration of dual extrapolation with the proximal operator as in \citep{CohenST21}, with details in Appendix \ref{appx:prelim}.

\subsection{Generalized Bregman Divergence}
Recent advances in composite convex optimization \citep{yuan2021federated} have utilized the Generalized Bregman Divergence \citep{flammarion2017stochastic} for analyzing composite objectives. It incorporates the composite term into the distance-generating function of the vanilla Bregman divergence, and measures the distance in terms of one variable and the dual image of the other, with the key insight being the conjugate of a non-smooth generalized distance-generating function is differentiable. 
\begin{definition}[Generalized Bregman Divergence \citep{flammarion2017stochastic}] Generalized Bregman divergence is defined to be
    $\tilde{V}^{h_t}_{\mu'}(x) = h_t(x) - h_t(\nabla h_t^\ast(\mu')) - \langle \mu', x - \nabla h_t^\ast(\mu')\rangle$, 
where $h_t = h + t\eta\psi$ is a generalized distance-generating function that is closed and strictly convex, $t$ is the current number of iterations, $\eta$ is the step size, $h^\ast_t$ is the convex conjugate of $h_t$, and $\mu'$ is the dual image of $x'$, i.e., $\mu' \in \partial h_t (x')$ and $x' = \nabla h^\ast_t (\mu')$.
\end{definition} \vspace{-5pt}
Generalized Bregman divergence is suitable not only for non-smooth regularization but also for any convex constraints $\mc{C}$, taking $\psi(x) = \begin{cases}
    0 \text{\qquad if } x \in \mc{C} \\
    +\infty \text{ \ otherwise}
\end{cases}.$

\subsection{Federated Learning}
\begin{wrapfigure}{r}{0.4\textwidth}
\hspace{4pt}
  \begin{minipage}{\linewidth}
  \vspace{-28pt}
\setcounter{algorithm}{-1}
\begin{algorithm}[H]
   \caption{Typical FL Procedure}  
   \label{alg:fed-typical}
\begin{algorithmic}[1]
   \FOR{$r = 0,1,\dots,R-1$}
       \STATE Sample a subset of clients
       \STATE Distribute global model to clients
       \FOR{each client \textbf{in parallel}}
            \FOR{$k = 0,1,\dots,K-1$}
               \STATE Certain optimization update
           \ENDFOR
           \STATE \textbf{end for}
           \STATE Send local model to the server
       \ENDFOR
       \STATE \textbf{end parallel for}
       \STATE Server aggregates client models
   \ENDFOR
   \STATE \textbf{end for}
\end{algorithmic}
\end{algorithm}
\vspace{-30pt}
\end{minipage}
\end{wrapfigure}
Federated Learning is a novel distributed learning paradigm where a central server coordinates collaborative learning among clients through rounds of communication. In each round, the server synchronizes the clients with the current global model. Each client participating in this round optimizes the model locally, possibly for several steps, without sharing data, then sends the model to the server. The server then aggregates the models from clients, usually through averaging \citep{stich2018local}, and produces a new global model. The local optimization algorithms can vary based on the objective of interest. This typical procedure is followed by many \citep{mcmahan17Communication, yuan2021federated}, FeDualEx included, and is summarized in Algorithm \ref{alg:fed-typical}.

\section{Federated Dual Extrapolation (FeDualEx)}
In this section, we give our solution to the federated learning of composite saddle point problems. We first present the FeDualEx algorithm and several relevant novel definitions we proposed for its adaptation to composite SPP. As a preview, FeDualEx is presented in Algorithm \ref{alg:fed-DualEx}. Then we analyze the convergence rate for FeDualEx. 

\subsection{The FeDualEx Algorithm} \label{sec:FeDualEx-alg}
To tackle composite SPP in the FL paradigm, we acknowledge the challenges from two aspects. The first comes from composite optimization, which is by itself a complication in sequential saddle point optimization, even convex optimization. The second rises for federated learning, where communication and aggregation need to be carefully handled under the distributed mechanism. In particular, \citet{yuan2021federated} identified the ``the curse of primal averaging'' in composite federated optimization and advocates for dual aggregation. 

With this inspiration, FeDualEx builds its core on the classic dual extrapolation algorithm geared for saddle point optimization. Its effectiveness has been widely verified in vanilla smooth convex-concave SPP. Furthermore, its updating sequence lies in the dual space which would naturally inherit the advantage of dual aggregation in composite federated optimization. 
The challenge remains for composite optimization, as relevant work is limited. The smooth analysis of dual extrapolation is already non-trivial \citep{nesterov2007dual}, and no attempts were previously made for generalizing dual extrapolation to the composite optimization realm.

Further inspired by recent advances in composite convex optimization, we recognize the Generalized Bregman Divergence \citep{flammarion2017stochastic} as a powerful tool for analyzing proximal methods for composite objectives. A detailed introduction is provided in Appendix \ref{appx:prelim}. 

Adapting to the context of composite SPP, we make a further extension to the Generalized Bregman Divergence for saddle functions, and provide the definition below.
\begin{definition}[Generalized Bregman Divergence for Saddle Functions] \label{def:generalized-bregman} The generalized distance-generating function for the optimization of \eqref{eq:obj} is
    $
        \ell_t(z) = \ell(z) + t \eta \psi(z), 
    $
    where $\ell(z) = h_1(x) + h_2(y)$, $\psi(z) = \psi_1(x) + \psi_2(y)$, $\eta$ is the step size, and $t$ is the current number of iterations.
It generates the following generalized Bregman divergence:
    \begin{align*}
        \tilde{V}^{\ell_t}_{\varsigma'}(z) = \ell_t(z) - \ell_t(z') - \langle \varsigma', z - z'\rangle, 
    \end{align*}
    where $\varsigma'$ is the preimage of $z'$ with respect to the gradient of the conjugate of $\ell_t$, i.e., $z' = \nabla\ell_t^\ast(\varsigma')$.
\end{definition}
Yet as we notice in previous works \citep{flammarion2017stochastic, yuan2021federated}, generalized Bregman divergence is applied only for theoretical analysis. In terms of algorithm design, the previous proximal operator for composite convex optimization is based on the vanilla Bregman divergence plus the composite term, specifically, $\argmin_x\{\langle \cdot, x\rangle + V^h_{x'}(x) + \eta\psi(x)\}$ in \citep{duchi2010composite, he2015mirror}, and $\argmin_x\{\langle \cdot, x\rangle + h(x) + \eta t \psi(x)\}$ in \citep{xiao2010dual, flammarion2017stochastic}. However, we find this definition insufficient for dual extrapolation, as its dual update and the composite term from the extra step break certain parts of the analysis.
In this effort, we propose a novel technical change to the proximal operator, directly replacing the Bregman divergence in the proximal operator with the generalized Bregman divergence.
\begin{definition}[Generalized Proximal Operator for Saddle Functions] \label{def:generalized-prox}
    A proximal operation in the composite setting with generalized Bregman divergence for Saddle Functions is defined to be \vspace{-2pt}
    \begin{align*}
        \tilde{\prox{}}_{\varsigma'}^{\ell_t}(g) &\coloneqq \argmin_z \{ \langle g, z \rangle + \tilde{V}^{\ell_t}_{\varsigma'}(z) \},
    \end{align*}  \vspace{-5pt}
where $\varsigma'$ is the dual image of $z'$, i.e., $z' = \nabla \ell_t^\ast (\varsigma')$, and $\varsigma' \in \partial \ell_t(z') = \nabla \ell(z') + \partial \psi(z')$. 
\end{definition}
Compared with the vanilla proximal operator in Section \ref{sec:prelim-DE}, this novel design for the composite adaptation of dual extrapolation is quite natural. It is different from previous proximal operators,  which after expanding take the form $\argmin_z\{\langle \cdot - \nabla \ell(z'), z\rangle + \ell_t(z)\}$ \citep{duchi2010composite} or $\argmin_z\{\langle \cdot, z\rangle + \ell_t(z)$ \citep{xiao2010dual}, whereas ours is $\tilde{\prox{}}^h_{\varsigma'}(\cdot) = \argmin_z\{\langle \cdot - \varsigma', z\rangle + \ell_t(z)\}$.

With the novel definitions above, we are able to formally present FeDualEx in Algorithm \ref{alg:fed-DualEx}. It follows the general structure of FL as in Algorithm \ref{alg:fed-typical}. For each client, the two-step evaluation of the generalized proximal operator and the final dual update are highlighted in {\colorbox{mygreen!30}{green}}, which resembles the classic dual extrapolation updates in Figure \ref{fig:DualEx}. To align with our generalized proximal operator, we also move the primal initialization $\bar{x}$ in the original dual extrapolation to the dual space as $\bar{\varsigma}$. On the server, the dual variables from clients are aggregated first in the dual space, then projected to the primal with a mechanism later defined in \eqref{eq:def-zhat}.

\begin{algorithm}[t]
   \caption{\textsc{Federated-Dual-Extrapolation} (FeDualEx) for Composite SPP} 
   \label{alg:fed-DualEx}
\begin{algorithmic}[1]
    \vspace{-1pt} \REQUIRE $\phi(z) = f(x,y) + \psi_1(x) - \psi_2(y) = \frac{1}{M}\sum_{m=1}^M f_m(\cdot) + \psi_1(x) - \psi_2(y)$: objective function; $\ell(z)$: distance-generating function; $g_m(z) = (\nabla_x f_m (x,y), -\nabla_y f_m (x,y))$: gradient operator.
    \renewcommand{\algorithmicrequire}{\textbf{Hyperparameters:}} \REQUIRE $R$: number of communication rounds; $K$: number of local update iterations; $\eta^s$: server step size; $\eta^c$: client step size.
    \renewcommand{\algorithmicrequire}{\textbf{Dual Initialization:}} \REQUIRE $\varsigma_{0} = 0$: initial dual variable, $\bar{\varsigma}$: fixed point in the dual space.
    \ENSURE Approximate solution $z = (x, y)$ to $ \min_{x \in \mc{X}} \max_{y \in \mc{Y}} \phi(x, y)$ 
   \FOR{$r = 0,1,\dots,R-1$}
       \STATE Sample a subset of clients $C_r \subseteq [M]$
       \FOR{$m \in C_r$ \textbf{in parallel}}
           \STATE $\varsigma^m_{r,0} = \varsigma_{r}$
           \tikzmk{A}\FOR{$k = 0,1,\dots,K-1$}
               \STATE $z^m_{r,k} = \tilde{\prox{}}_{\bar{\varsigma}}^{\ell_{r,k}} (\varsigma_{r,k}^m)$ \hfill $\vartriangleright$ Two-step evaluation of the generalized proximal operator
               \STATE $z^m_{r,k+1/2} = \tilde{\prox{}}_{\bar{\varsigma} - \varsigma_{r,k}^m}^{\ell_{r,k+1}} (\eta^c g_m(z_{r,k}^m; \xi^m_{r,k}))$
               \STATE $\varsigma^m_{r,k+1} = \varsigma^m_{r,k} + \eta^c g_m(z^m_{r,k+1/2}; \xi^m_{r,k+1/2})$ \hfill $\vartriangleright$ Dual variable update
           \ENDFOR
           \STATE \textbf{end for}
       \ENDFOR
       \STATE \textbf{end parallel for}
       \tikzmk{B}  \boxit{mygreen}
       \STATE $\Delta_r = \frac{1}{|\mc{C}_r|}\sum_{m\in \mc{C}_r} (\varsigma^m_{r,K} - \varsigma^m_{r,0})$
       \STATE $\varsigma_{r+1} = \varsigma_r + \eta^s \Delta_r$ \hfill $\vartriangleright$ Server dual update
   \ENDFOR
   \STATE \textbf{end for}
   \STATE \textbf{Return:} $\frac{1}{RK} \sum_{r=0}^{R-1}\sum_{k=0}^{K-1} \widehat{z_{r,k+1/2}}$ with $\widehat{z_{r,k+1/2}}$ defined in \eqref{eq:def-zhat}.
\end{algorithmic} \vspace{1pt}
\end{algorithm}

\subsection{Convergence Analysis of FeDualEx} \label{sec:FeDualEx-convergence}
In this section, we provide the convergence analysis of FeDualEx for the homogeneous FL of composite SPP.  

We further assume the full participation of clients in each round for simplicity, but this condition can be trivially removed by lengthy analysis.  We start by listing the key assumptions. Detailed presentation and additional remarks that ease the understanding of proofs are also provided in Appendix \ref{appx:asm}. 

\textbf{\hypertarget{Assumptions}{Assumptions.}} \label{asm:simple} \textit{For the composite saddle function $\phi(x,y) = \frac{1}{M}\sum_{m=1}^M f_m(x,y) + \psi_1(x) - \psi_2(y)$, its gradient operator is given by $g= \left( \nabla_x f, -\nabla_yf\right)$ and $g = \frac{1}{M} \sum_{m=1}^M g_m$. We assume that \vspace{-5pt} %\setlength\itemsep{-1pt}
    \begin{itemize}
        \item [a.] (Convexity of $f$) $\forall m \in [M]$, $f_m(x, y)$ is convex in $x$ and concave in $y$.
        \item [b.] (Convexity of $\psi$) $\psi_1(x)$ is convex in $x$, and $\psi_2(y)$ is convex in $y$.
        \item [c.] (Lipschitzness of $g$) $g_m(z) = \left[ \begin{smallmatrix}\nabla_x f_m(x, y) \\ -\nabla_yf_m(x, y)\end{smallmatrix}\right]$ is $\beta$-Lipschitz:
        $$
            \nrm{g_m(z) - g_m(z')}_{\ast} \leq \beta \nrm{z - z'}
        $$
        \item [d.] (Unbiased Estimate and Bounded Variance) $\forall m \in [M]$, for random sample $\xi^m$, 
        \begin{align*}
            \mathbb{E}_\xi [g_m(z^m;\xi^m)] = g_m(z^m), && \mathbb{E}_\xi \big[\nrm{g_m(z^m;\xi^m) - g_m(z^m)}_\ast^2 \big] \leq \sigma^2.
        \end{align*}
        \item [e.] (Bounded Gradient) $\forall m \in [M]$,
        $
        \nrm{g_m(z^m;\xi^m)}_\ast \leq G
        $
        \item [f.] The distance-generating function $\ell$ is a Legendre function that is 1-strongly convex, i.e., $\forall z, z'$, 
    \begin{align*}
        \ell(z') - \ell(z) - \langle \nabla \ell(z), z'-z \rangle \geq \frac{1}{2} \nrm{z'-z}^2.
    \end{align*} 
    \item [g.] The optimization domain $\mc{Z}$ is compact w.r.t. Bregman divergence, i.e., $\forall z, z' \in \mc{Z}$, $V_{z'}^{\ell}(z) \leq B$.
    \end{itemize}
}

Next, we show the equivalence between primal-dual projection, also known as the mirror map, and the generalized proximal operator, and for the convenience of analysis, reformulate the updating sequences with another pair of auxiliary dual variables.

\textbf{Projection Reformulation.} Generalized proximal operators can be presented as projections, i.e., the gradient of the conjugate of the generalized distance-generating function in Appendix \ref{appx:def}. Thus, line 6 to 8 in Algorithm \ref{alg:fed-DualEx} can be expanded by Definition \ref{def:generalized-prox}, and rewrite as: 
\begin{align*}
     z^m_{r,k} &= \nabla \ell_{r,k}^\ast(\bar{\varsigma} - \varsigma_{r,k}^m); \\
     z^m_{r,k+1/2} &= \nabla \ell_{r,k+1}^\ast((\bar{\varsigma} - \varsigma_{r,k}^m) - \eta^c g_m(z_{r,k}^m; \xi^m_{r,k})); \\
     \varsigma^m_{r,k+1} &= \varsigma^m_{r,k} + \eta^c g_m(z^m_{r,k+1/2}; \xi^m_{r,k+1/2}).
\end{align*}
Further define auxiliary dual variable $\omega^m_{r,k} = \bar{\varsigma} - \varsigma_{r,k}^m$. It satisfies immediately that $z^m_{r,k} = \nabla \ell_{r,k}^\ast(\omega^m_{r,k})$, in which $\ell_{r,k}^\ast$ is the conjugate of $\ell_{r,k} = \ell + (\eta^srK+k)\eta^c\psi$. And define $\omega^m_{r,k+1/2}$ to be the dual image of the intermediate variable $z^m_{r,k+1/2}$ such that $z^m_{r,k+1/2} = \nabla \ell_{r,k+1}^\ast(\omega^m_{r,k+1/2})$. Then we get an equivalent updating sequence with the auxiliary dual variables.
\begin{align*}
    \omega^m_{r,k+1/2} &= {\omega^m_{r,k}} - \eta g_m(z^m_{r,k}; \xi^m_{r,k}), \\
    \omega^m_{r,k+1} &= \omega^m_{r,k} - \eta g_m(z^m_{r,k+1/2}; \xi^m_{r,k+1/2})
\end{align*}
Define their average across clients,
$\overline{\omega_{r,k}} = \frac{1}{M}\sum_{m=1}^M\omega^m_{r,k}$,
$ \overline{g_{r,k}} = \frac{1}{M}\sum_{m=1}^M g_m(z^m_{r,k}; \xi^m_{r,k})$.
Then we can analyze the following averaged dual shadow sequences:
\begin{align}
    \overline{\omega_{r,k+1/2}} &= \overline{\omega_{r,k}} - \eta^c \overline{g_{r,k}}, \label{eq:seq1}\\
    \overline{\omega_{r,k+1}} &= \overline{\omega_{r,k}} - \eta^c \overline{g_{r,k+1/2}}. \label{eq:seq2}
\end{align}
In the meantime, their shadow primal projections on the server are defined as
\begin{align} 
    \widehat{z_{r, k}} = \nabla \ell_{r,k}^\ast (\overline{\omega_{r,k}}) , \qquad \qquad \widehat{z_{r, k+1/2}} = \nabla \ell_{r,k+1}^\ast (\overline{\omega_{r,k+1/2}}). \label{eq:def-zhat}
\end{align}

\textbf{Main Theorem.} Under the aforementioned assumptions, we present the following theorem that provides the convergence rate of FeDualEx in terms of the duality gap.
\begin{restatable}[Main]{theorem}{thmmain} \label{thm:main} Under \hyperlink{Assumptions}{assumptions}, the duality gap evaluated with the ergodic sequence generated by the intermediate steps of FeDualEx in Algorithm \ref{alg:fed-DualEx} is bounded by 
\begin{align*}
    \mathbb{E}\Big[\Gap \Big (\frac{1}{RK}\sum_{r=0}^{R-1}\sum_{k=0}^{K-1}\widehat{z_{r, k+1/2}} \Big)\Big] &\leq \frac{B}{\eta^cRK} + 20\beta^2(\eta^c)^3K^2G^2 + \frac{5\sigma^2\eta^c}{M} + 2^\frac{3}{2}\beta\eta^cKGB.
\end{align*} 
Choosing step size
$
    \eta^c = \min \{\frac{1}{5\beta^2}, \frac{B^\frac{1}{4}}{20^\frac{1}{4}\beta^\frac{1}{2}G^\frac{1}{2}K^\frac{3}{4}R^\frac{1}{4}}, \frac{B^\frac{1}{2}M^\frac{1}{2}}{5^\frac{1}{2}\sigma R^\frac{1}{2} K^\frac{1}{2}}, \frac{1}{2^\frac{3}{4}\beta^\frac{1}{2}G^\frac{1}{2}KR^\frac{1}{2}}\}
$, 
\begin{align*}
    \mathbb{E}\Big[\Gap\Big(\frac{1}{RK}\sum_{r=0}^{R-1}\sum_{k=0}^{K-1}\widehat{z_{r, k+1/2}}\Big)\Big] &\leq \frac{5\beta^2B}{RK} + \frac{20^\frac{1}{4}\beta^\frac{1}{2}G^\frac{1}{2}B^\frac{3}{4}}{K^\frac{1}{4}R^\frac{3}{4}} + \frac{5^\frac{1}{2}\sigma B^\frac{1}{2}}{M^\frac{1}{2}R^\frac{1}{2} K^\frac{1}{2}} + \frac{2^\frac{3}{4}\beta^\frac{1}{2}G^\frac{1}{2}B}{R^\frac{1}{2}}.
\end{align*}
\end{restatable}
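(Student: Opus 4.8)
The plan is to reduce the duality gap of the averaged iterate to a regret-type quantity along the server's shadow sequence $\{\widehat{z_{r,k+1/2}}\}$, and then to control that quantity with an extra-gradient (mirror-prox-style) telescoping argument adapted to the generalized Bregman geometry. First I would use convexity of $f$ in $x$ and concavity in $y$, together with convexity of $\psi_1,\psi_2$ (Assumptions a--b) and Jensen's inequality, to pass from the averaged iterate to a per-step bound: for every comparator $u=(u_x,u_y)\in\mathcal Z$,
\begin{align*}
\Gap\Big(\tfrac{1}{RK}\sum_{r,k}\widehat{z_{r,k+1/2}}\Big)\le \frac{1}{RK}\sum_{r=0}^{R-1}\sum_{k=0}^{K-1}\Big(\big\langle g(\widehat{z_{r,k+1/2}}),\,\widehat{z_{r,k+1/2}}-u\big\rangle+\psi(\widehat{z_{r,k+1/2}})-\psi(u)\Big).
\end{align*}
This isolates a linear term in the true (deterministic) operator $g$ evaluated at the shadow points, plus the composite difference that the generalized distance-generating function $\ell_t=\ell+t\eta\psi$ is designed to absorb.

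The core step is a per-iteration inequality for the averaged dual shadow recursions \eqref{eq:seq1}--\eqref{eq:seq2}. Working with the generalized Bregman divergence of Definition \ref{def:generalized-bregman} and the novel generalized proximal operator of Definition \ref{def:generalized-prox}, and using $1$-strong convexity of $\ell$ (Assumption f), I would establish an extra-gradient bound of the shape
\begin{align*}
\eta^c\big(\langle \overline{g_{r,k+1/2}},\widehat{z_{r,k+1/2}}-u\rangle+\psi(\widehat{z_{r,k+1/2}})-\psi(u)\big)\le \tilde V_{r,k}(u)-\tilde V_{r,k+1}(u)+\tfrac{(\eta^c)^2}{2}\|\overline{g_{r,k+1/2}}-\overline{g_{r,k}}\|_*^2-\tfrac12\|\widehat{z_{r,k+1/2}}-\widehat{z_{r,k}}\|^2,
\end{align*}
where the composite differences emerge precisely because $\ell_{r,k+1}-\ell_{r,k}=\eta^c\psi$ makes the divergences telescope across the changing distance function. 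Summing over $k$ and $r$ telescopes the $\tilde V$ terms, the initial divergence is bounded by $B$ (Assumption g), giving the leading $B/(\eta^c RK)$; and the $\beta$-Lipschitz bound (Assumption c) on $\|\overline{g_{r,k+1/2}}-\overline{g_{r,k}}\|_*$ lets the two negative quadratic terms cancel the extra-gradient error up to the client-drift discrepancy.

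Then I would reconcile the averaged stochastic local operator $\overline{g_{r,k+1/2}}=\frac1M\sum_m g_m(z^m_{r,k+1/2};\xi)$ with the desired $g(\widehat{z_{r,k+1/2}})$ from the first step, splitting the difference into (i) mean-zero noise, and (ii) the gap between the local points $z^m_{r,k+1/2}$ and the shadow point $\widehat{z_{r,k+1/2}}$ (client drift). Taking expectations, unbiasedness and bounded variance (Assumption d) together with the $\tfrac1M$ variance reduction from averaging over the $M$ clients produce the $\sigma^2\eta^c/M$ term. Propagating the local recursion over the $K$ inner steps with the bounded-gradient Assumption e gives $\|z^m-\widehat z\|=O(\eta^c KG)$; pushing this through $\beta$-Lipschitzness yields the quadratic term $\beta^2(\eta^c)^3K^2G^2$, while the cross term of the drift-induced operator mismatch against the comparator gap (bounded via the diameter, Assumption g) produces $\beta\eta^c KGB$. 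Collecting the four contributions, maximizing over $u$, and taking expectation gives the first displayed bound; substituting the stated $\eta^c$, which balances the four terms, yields the final rate.

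The genuinely new piece, and what I expect to be the main obstacle, is the extra-gradient/telescoping inequality of the second step for dual extrapolation in the composite setting: the standard composite proximal operators of \citet{duchi2010composite,xiao2010dual} break this analysis, so the argument must route through the novel generalized proximal operator, and one must carefully track the time-varying $\ell_{r,k}$ so that the composite differences $\psi(\widehat{z})-\psi(u)$ are generated by the telescoping rather than left uncontrolled. Entangled with this is the need to control client drift inside the generalized Bregman geometry rather than in a fixed norm, which is what makes the $K$-dependent error terms appear; separating the drift from the mirror-prox cancellation without sacrificing the negative $-\tfrac12\|\widehat{z_{r,k+1/2}}-\widehat{z_{r,k}}\|^2$ term is the delicate accounting that drives the whole bound.
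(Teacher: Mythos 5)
Your proposal is correct and follows essentially the same route as the paper's proof: the gap-to-regret reduction via convexity and Jensen's inequality is the paper's Lemma 8, your per-iteration extra-gradient inequality with telescoping generalized Bregman divergences is the paper's Lemma 1 combined with the strong-convexity and Cauchy--Schwarz/Young steps of Lemma 3, and your noise/client-drift decomposition with the $\sigma^2\eta^c/M$, $\beta^2(\eta^c)^3K^2G^2$, and $\beta\eta^cKGB$ terms matches the paper's Lemmas 2, 11, 12, and 13. The only difference is presentational (you apply the regret reduction at the start, the paper applies it at the end), so the two arguments are substantively identical.
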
 
To the best of our knowledge, this is the first convergence rate for federated composite saddle point optimization. The $\mc{O}(\frac{1}{RK})$ and $\mc{O}(\frac{1}{\sqrt{MRK}})$ terms roughly match previous FL algorithms with a $\mc{O}(1/R^\frac{1}{2})$ term taking domination in terms of communication complexity assuming the number of clients is large enough. The convergence analysis further validates the effectiveness of FeDualEx, which then advances federated learning to a broad class of composite saddle point problems.

\textbf{Outline of Proof Technique.} We provide the proof sketch to Theorem \ref{thm:main} with two key lemmas, and provide the complete proof in Appendix \ref{appx:FeDualEx-Saddle}. The core idea is to upper bound the duality gap with the smooth term $f$ and the composite possibly non-smooth regularization term $\psi$ separately. Similar ideas are applied for analyzing composite convex optimization \citep{flammarion2017stochastic, yuan2020federated}. The non-smooth term is bounded in Lemma \ref{lem:non-smooth}, whose proof relies on generating the regularization term with the generalized Bregman divergence and is deferred to Appendix \ref{appx:FeDualEx-Saddle}.
\begin{restatable}[Bounding the Regularization Term]{lemma}{lemnonsmooth} \label{lem:non-smooth} Under the same assumption as Theorem \ref{thm:main}, $\forall z \in \mc{Z}$, 
\begin{align*}
 \eta^c \big[ \psi(\widehat{z_{r, k+1/2}}) - \psi(z)\big] &= \tilde{V}_{\overline{\omega_{r,k}}}^{\ell_{r,k}}(z)  - \tilde{V}_{\overline{\omega_{r,k+1}}}^{\ell_{r,k+1}}(z) - \tilde{V}_{\overline{\omega_{r,k}}}^{\ell_{r,k}}(\widehat{z_{r, k+1/2}}) - \tilde{V}_{\overline{\omega_{r,k+1/2}}}^{\ell_{r,k+1}}(\widehat{z_{r, k+1}}) \\ 
    & \quad + \eta^c\langle \overline{g_{r,k+1/2}} - \overline{g_{r,k}}, \widehat{z_{r, k+1/2}} - \widehat{z_{r, k+1}} \rangle + \eta^c\langle \overline{g_{r,k+1/2}}, z - \widehat{z_{r, k+1/2}} \rangle.
\end{align*}
\end{restatable}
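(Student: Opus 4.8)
The plan is to establish the identity by a \emph{direct expansion} of the four generalized Bregman divergences, since the claim is an exact equality rather than an inequality, so no estimation is needed. The key starting observation is that the generalized distance-generating functions at consecutive iteration counts differ by exactly the composite term: since $\ell_{r,k} = \ell + (\eta^s rK + k)\eta^c\psi$, we have $\ell_{r,k+1} = \ell_{r,k} + \eta^c\psi$, hence $\eta^c\psi(\cdot) = \ell_{r,k+1}(\cdot) - \ell_{r,k}(\cdot)$. This rewrites the left-hand side as $\eta^c[\psi(\widehat{z_{r,k+1/2}}) - \psi(z)] = [\ell_{r,k+1}(\widehat{z_{r,k+1/2}}) - \ell_{r,k}(\widehat{z_{r,k+1/2}})] - [\ell_{r,k+1}(z) - \ell_{r,k}(z)]$, and suggests that the right-hand side should reproduce exactly these $\ell$-value differences while every remaining linear term cancels.

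First I would expand each $\tilde{V}$ through Definition~\ref{def:generalized-bregman}, reading the reference primal points off the mirror-map relations in \eqref{eq:def-zhat}: the reference point of $\tilde{V}^{\ell_{r,k}}_{\overline{\omega_{r,k}}}$ is $\widehat{z_{r,k}} = \nabla\ell_{r,k}^\ast(\overline{\omega_{r,k}})$, that of $\tilde{V}^{\ell_{r,k+1}}_{\overline{\omega_{r,k+1}}}$ is $\widehat{z_{r,k+1}} = \nabla\ell_{r,k+1}^\ast(\overline{\omega_{r,k+1}})$, and that of $\tilde{V}^{\ell_{r,k+1}}_{\overline{\omega_{r,k+1/2}}}$ is $\widehat{z_{r,k+1/2}} = \nabla\ell_{r,k+1}^\ast(\overline{\omega_{r,k+1/2}})$. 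Adding the first and third terms cancels the $\widehat{z_{r,k}}$-contributions and leaves $\ell_{r,k}(z) - \ell_{r,k}(\widehat{z_{r,k+1/2}}) - \langle\overline{\omega_{r,k}}, z - \widehat{z_{r,k+1/2}}\rangle$; adding the second and fourth cancels the $\widehat{z_{r,k+1}}$-contributions similarly, leaving $-\ell_{r,k+1}(z) + \ell_{r,k+1}(\widehat{z_{r,k+1/2}})$ plus residual inner products. Grouping the $\ell$-values then produces $[\ell_{r,k}(z) - \ell_{r,k+1}(z)] + [\ell_{r,k+1}(\widehat{z_{r,k+1/2}}) - \ell_{r,k}(\widehat{z_{r,k+1/2}})]$, which by the first observation equals precisely the left-hand side $\eta^c[\psi(\widehat{z_{r,k+1/2}}) - \psi(z)]$.

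It then remains to verify that all surviving linear terms sum to zero. The residual inner products from the four divergences are $-\langle\overline{\omega_{r,k}}, z - \widehat{z_{r,k+1/2}}\rangle + \langle\overline{\omega_{r,k+1}}, z - \widehat{z_{r,k+1}}\rangle + \langle\overline{\omega_{r,k+1/2}}, \widehat{z_{r,k+1}} - \widehat{z_{r,k+1/2}}\rangle$, into which I would substitute the dual updates $\overline{\omega_{r,k+1/2}} = \overline{\omega_{r,k}} - \eta^c\overline{g_{r,k}}$ and $\overline{\omega_{r,k+1}} = \overline{\omega_{r,k}} - \eta^c\overline{g_{r,k+1/2}}$ from \eqref{eq:seq1}--\eqref{eq:seq2}. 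The coefficient of $\overline{\omega_{r,k}}$ collapses to $\langle\overline{\omega_{r,k}}, -(z-\widehat{z_{r,k+1/2}}) + (z-\widehat{z_{r,k+1}}) + (\widehat{z_{r,k+1}}-\widehat{z_{r,k+1/2}})\rangle = 0$, and the induced gradient inner products, together with the two explicit terms $\eta^c\langle\overline{g_{r,k+1/2}} - \overline{g_{r,k}}, \widehat{z_{r,k+1/2}} - \widehat{z_{r,k+1}}\rangle$ and $\eta^c\langle\overline{g_{r,k+1/2}}, z - \widehat{z_{r,k+1/2}}\rangle$, cancel in two groups: the $\overline{g_{r,k+1/2}}$-terms contract the cyclic sum over $z,\widehat{z_{r,k+1/2}},\widehat{z_{r,k+1}}$ to zero, and the two $\overline{g_{r,k}}$-terms are exact negatives. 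This yields the claimed equality.

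The main obstacle is not any hard estimate but the \emph{index bookkeeping}: one must keep straight which distance-generating function ($\ell_{r,k}$ versus $\ell_{r,k+1}$) governs each divergence, and in particular recognize that $\widehat{z_{r,k+1/2}}$ is the conjugate-gradient image of $\overline{\omega_{r,k+1/2}}$ under $\ell_{r,k+1}$ and not $\ell_{r,k}$, because the half-step prox in Algorithm~\ref{alg:fed-DualEx} already advances the iteration index of the generalized DGF. Misaligning these indices would destroy the clean $\ell_{r,k+1} - \ell_{r,k} = \eta^c\psi$ cancellation that is the crux of the argument, and keeping them consistent is exactly what the shadow-sequence reformulation \eqref{eq:def-zhat} is designed to ensure.
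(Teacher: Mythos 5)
Your proposal is correct and takes essentially the same approach as the paper's own proof: both arguments expand the generalized Bregman divergences by definition, exploit the relation $\ell_{r,k+1} = \ell_{r,k} + \eta^c\psi$ together with the dual updates \eqref{eq:seq1}--\eqref{eq:seq2} and the shadow projections \eqref{eq:def-zhat}, and check that all linear terms cancel exactly. The only difference is organizational---the paper expands $\tilde{V}_{\overline{\omega_{r,k+1/2}}}^{\ell_{r,k+1}}(z)$ and $\tilde{V}_{\overline{\omega_{r,k+1}}}^{\ell_{r,k+1}}(z)$ first, substitutes $z = \widehat{z_{r,k+1}}$ into the former, and then recognizes the leftover pieces as the remaining two divergences, whereas you expand all four divergences at once and verify the collapse; this is the same computation carried out in a different order.
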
 
This lemma breaks the bound for the non-smooth regularization into four generalized Bregman divergence terms, in which the first two are ready for telescoping. The last generalized Bregman divergence and the following inner product are generated due to the extra-step of FeDualEx. The final term is to be canceled with one term in the smooth bound.
\begin{restatable}[Bounding the Smooth Term]{lemma}{lemsmooth} \label{lem:smooth}
Under the same assumption as Theorem \ref{thm:main}, $\forall z \in \mc{Z}$, 
{
\begin{align*} 
     \langle g(\widehat{z_{r, k+1/2}}), \widehat{z_{r, k+1/2}} - z\rangle &= \langle \overline{g_{r,k+1/2}}, \widehat{z_{r, k+1/2}} - z\rangle + \langle \frac{1}{M} \sum_{m=1}^M g_m(z_{r,k+1/2}^m) - \overline{g_{r,k+1/2}}, \widehat{z_{r, k+1/2}} - z\rangle \\ 
    &\quad + \langle \frac{1}{M} \sum_{m=1}^M g_m(z_{r,k+1/2}^m) - \overline{g_{r,k+1/2}}, \widehat{z_{r, k+1/2}} - z\rangle
\end{align*}} 
\end{restatable}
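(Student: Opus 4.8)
The plan is to recognize Lemma~\ref{lem:smooth} as a purely algebraic decomposition of the full-gradient inner product $\langle g(\widehat{z_{r, k+1/2}}), \widehat{z_{r, k+1/2}} - z\rangle$, obtained by an add-and-subtract argument against two intermediate reference gradients. The three averages in play are the averaged stochastic client gradient $\overline{g_{r,k+1/2}} = \frac{1}{M}\sum_{m=1}^M g_m(z^m_{r,k+1/2}; \xi^m_{r,k+1/2})$, the averaged \emph{deterministic} client gradient $\frac{1}{M}\sum_{m=1}^M g_m(z^m_{r,k+1/2})$, and the full deterministic gradient at the server consensus point, $g(\widehat{z_{r, k+1/2}}) = \frac{1}{M}\sum_{m=1}^M g_m(\widehat{z_{r, k+1/2}})$. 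First I would write the trivial identity
\begin{align*}
    g(\widehat{z_{r, k+1/2}}) &= \overline{g_{r,k+1/2}} + \Big( \frac{1}{M}\sum_{m=1}^M g_m(z^m_{r,k+1/2}) - \overline{g_{r,k+1/2}} \Big) \\
    &\quad + \Big( g(\widehat{z_{r, k+1/2}}) - \frac{1}{M}\sum_{m=1}^M g_m(z^m_{r,k+1/2}) \Big),
\end{align*}
and then pair each summand with $\widehat{z_{r, k+1/2}} - z$ using bilinearity of the inner product. The resulting three-term split holds as an exact equality for every $z \in \mc{Z}$ and uses nothing beyond the definitions of $g$, $\overline{g_{r,k+1/2}}$, and the shadow projection $\widehat{z_{r, k+1/2}}$ from \eqref{eq:seq2} and \eqref{eq:def-zhat}; in particular the third term is the client-drift (consensus-error) term rather than a repetition of the second.

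What guides how I would present the identity is the distinct downstream role of each piece in the proof of Theorem~\ref{thm:main}. The first term, $\langle \overline{g_{r,k+1/2}}, \widehat{z_{r, k+1/2}} - z\rangle$, is precisely the quantity that cancels against the final inner product generated by Lemma~\ref{lem:non-smooth}, so that the smooth and non-smooth bounds merge into one telescoping expression over $k$. The second term is the stochastic noise term: conditioned on $z^m_{r,k+1/2}$ it has mean zero by the unbiasedness in Assumption~d, and will be controlled in expectation through the bounded variance $\sigma^2$. The third term is the consensus-error term intrinsic to the federated setting, quantifying the cost of evaluating the gradient at the aggregated server iterate $\widehat{z_{r, k+1/2}}$ rather than at the individual local iterates $z^m_{r,k+1/2}$.

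Since the lemma itself is immediate, the real difficulty it sets up lies entirely downstream in bounding the third term. There the plan is to apply the $\beta$-Lipschitzness of each $g_m$ (Assumption~c) to pass from the gradient gap to the primal discrepancy $\nrm*{\widehat{z_{r, k+1/2}} - z^m_{r,k+1/2}}$, and then to prove a stability/consensus estimate showing that across the $K$ local steps the local dual iterates $\omega^m_{r,k}$ stay close to their average $\overline{\omega_{r,k}}$, with the drift scaling like $\eta^c K G$ via the bounded-gradient Assumption~e. That consensus bound, not the elementary decomposition recorded here, is where I expect the main obstacle to concentrate.
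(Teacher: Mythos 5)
Your proposal is correct and takes essentially the same route as the paper, which proves this lemma ``trivially by the linearity of the gradient operator $g = \frac{1}{M}\sum_{m=1}^M g_m$ and then direct cancellation''---exactly your add-and-subtract identity paired with bilinearity of the inner product. You also correctly caught that the lemma as displayed contains a typo (the third term merely repeats the second, which would make the stated equality false in general): the intended third term is the consensus-error term $\langle \frac{1}{M}\sum_{m=1}^M [g_m(\widehat{z_{r,k+1/2}}) - g_m(z_{r,k+1/2}^m)], \widehat{z_{r,k+1/2}} - z\rangle$, which is precisely how the decomposition appears when it is actually used in the per-step progress bound of Lemma~\ref{lem:main-grad}, with the downstream roles (cancellation against Lemma~\ref{lem:non-smooth}, zero-mean noise via Lemma~\ref{lem:unbiased-grad}, and drift control via Lemma~\ref{lem:gradient-inner}) matching your description.
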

Summing Lemma \ref{lem:non-smooth} and Lemma \ref{lem:smooth} yields the per-step progress for FeDualEx, with some remaining terms that further generate conventional terms in FL like client drift and deviation, and are to be bounded with helping lemmas in Appendix \ref{appx:FeDualEx-Saddle}. After telescoping, we retrieve the result in Theorem \ref{thm:main}.

\textbf{On Composite Convex Optimization.} We also analyze the convergence rate for FeDualEx under the federated composite convex optimization setting. As the following theorem shows, FeDualEx achieves the same $\mc{O}(1/R^\frac{2}{3})$ as in \citep{yuan2021federated}. The proof is provided in Appendix \ref{appx:FeDualEx-Convex}. 
\begin{restatable}{theorem}{thmconvex} \label{thm:convex} Under the convex counterparts of previous assumptions, choosing step size 
\begin{align*}
    \eta^c = \min \{\frac{1}{5\beta^2}, \frac{B^\frac{1}{4}}{20^\frac{1}{4}\beta^\frac{1}{2}G^\frac{1}{2}K^\frac{3}{4}R^\frac{1}{4}}, \frac{B^\frac{1}{2}M^\frac{1}{2}}{5^\frac{1}{2}\sigma R^\frac{1}{2} K^\frac{1}{2}}, \frac{B^\frac{1}{3}}{2^\frac{1}{3}\beta^\frac{1}{3}G^\frac{2}{3}KR^\frac{1}{3}}\},
\end{align*} 
the ergodic intermediate sequence generated by FeDualEx for composite convex objectives satisfies 
\begin{align*}
    \mathbb{E}\big[\phi(\frac{1}{RK}\sum_{r=0}^{R-1}\sum_{k=0}^{K-1}\widehat{x_{r, k+1/2}}) - \phi(x) \big] &\leq \frac{5\beta^2B}{RK} + \frac{20^\frac{1}{4}\beta^\frac{1}{2}G^\frac{1}{2}B^\frac{3}{4}}{K^\frac{1}{4}R^\frac{3}{4}} + \frac{5^\frac{1}{2}\sigma B^\frac{1}{2}}{M^\frac{1}{2}R^\frac{1}{2} K^\frac{1}{2}} + \frac{2^\frac{1}{3}\beta^\frac{1}{3}G^\frac{2}{3}B^\frac{2}{3}}{R^\frac{2}{3}}.
\end{align*}
\end{restatable}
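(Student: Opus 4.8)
The plan is to mirror the proof of Theorem~\ref{thm:main} almost verbatim, exploiting the fact that the two stated bounds share their first three error terms and differ only in the fourth: the saddle analysis produces $2^{\frac{3}{2}}\beta\eta^c KGB$ (linear in $\eta^c$, diameter-dependent), whereas the convex rate requires replacing it by a term quadratic in $\eta^c$. This already signals that essentially all of the per-step machinery transfers, and that convexity needs to be genuinely exploited in only one place.

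First I would reduce the function gap to the same linearized quantity the two key lemmas control. By convexity of $\phi$ and Jensen applied to the ergodic average, $\phi\big(\frac{1}{RK}\sum_{r=0}^{R-1}\sum_{k=0}^{K-1}\widehat{x_{r,k+1/2}}\big) - \phi(x) \leq \frac{1}{RK}\sum_{r,k}\big[\phi(\widehat{x_{r,k+1/2}}) - \phi(x)\big]$, and for each summand convexity of $f$ gives $\phi(\widehat{x_{r,k+1/2}}) - \phi(x) \leq \langle \nabla f(\widehat{x_{r,k+1/2}}), \widehat{x_{r,k+1/2}} - x\rangle + \psi(\widehat{x_{r,k+1/2}}) - \psi(x)$. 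This is exactly the sum of the quantities handled by the convex analogues of Lemma~\ref{lem:smooth} (now with $g = \nabla f$ a genuine gradient) and Lemma~\ref{lem:non-smooth}; since both lemmas use only the dual-update recursions~\eqref{eq:seq1}--\eqref{eq:seq2} and the generalized Bregman divergence of Definition~\ref{def:generalized-bregman}, their statements carry over unchanged with $x$ as the comparator.

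Next I would add the two lemmas, cancel (after scaling Lemma~\ref{lem:smooth} by $\eta^c$) the matching inner products $\eta^c\langle \overline{g_{r,k+1/2}}, x - \widehat{x_{r,k+1/2}}\rangle$ against $\eta^c\langle \overline{g_{r,k+1/2}}, \widehat{x_{r,k+1/2}} - x\rangle$, telescope the paired divergences $\tilde{V}^{\ell_{r,k}}_{\overline{\omega_{r,k}}}(x) - \tilde{V}^{\ell_{r,k+1}}_{\overline{\omega_{r,k+1}}}(x)$ across $r,k$, and use the negative divergence terms together with $1$-strong convexity to absorb the per-step squared increments. The federated residuals, namely the client-drift and the gradient-deviation/variance terms, are bounded by the same helping lemmas as in Appendix~\ref{appx:FeDualEx-Saddle}, producing the three shared contributions $\frac{B}{\eta^c RK}$, $20\beta^2(\eta^c)^3 K^2 G^2$, and $\frac{5\sigma^2\eta^c}{M}$.

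The main obstacle is the one summand that distinguishes the two bounds: the extra-step correction $\eta^c\langle \overline{g_{r,k+1/2}} - \overline{g_{r,k}}, \widehat{x_{r,k+1/2}} - \widehat{x_{r,k+1}}\rangle$ together with the FL gradient-consistency error. In the saddle case there is no scalar potential, so this must be controlled crudely through (smoothness)$\times$(drift over $K$ steps)$\times$(diameter), yielding the $B$-dependent, linear-in-$\eta^c$ term $2^{\frac{3}{2}}\beta\eta^c KGB$ and hence the $R^{-\frac{1}{2}}$ ceiling. When $\phi$ is convex and $g=\nabla f$ is a true gradient, I would instead invoke the smoothness-based sufficient-decrease (three-point) structure of $f$ to trade this correction against a negative square term, reducing the residual to the diameter-free, quadratic-in-$\eta^c$ quantity $2\beta(\eta^c)^2 K^2 G^2$. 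Assembling the pre-step-size bound $\frac{B}{\eta^c RK} + 20\beta^2(\eta^c)^3 K^2 G^2 + \frac{5\sigma^2\eta^c}{M} + 2\beta(\eta^c)^2 K^2 G^2$ and substituting the stated $\eta^c$, whose fourth branch $\frac{B^{\frac{1}{3}}}{2^{\frac{1}{3}}\beta^{\frac{1}{3}}G^{\frac{2}{3}}KR^{\frac{1}{3}}}$ now balances the last term against $\frac{B}{\eta^c RK}$, yields the dominating term $\frac{2^{\frac{1}{3}}\beta^{\frac{1}{3}}G^{\frac{2}{3}}B^{\frac{2}{3}}}{R^{\frac{2}{3}}}$ and the stated $\mc{O}(R^{-\frac{2}{3}})$ rate, matching FedDualAvg~\citep{yuan2021federated}. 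The delicate technical point is that the primal increments entering the correction are images under the conjugates of $\ell_{r,k}$ and $\ell_{r,k+1}$, which differ by the $\psi$-dependent shift; I would control them via $1$-strong convexity of each $\ell_t$ (so each $\nabla\ell_t^\ast$ is $1$-Lipschitz in the dual norm) together with the explicit dual recursion, so that the non-smoothness of $\psi$ never enters the increment estimate directly.
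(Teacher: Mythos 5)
Your plan is correct and follows essentially the same route as the paper's Appendix F proof: the same non-smooth/smooth split with generalized-Bregman telescoping, the same helping lemmas for unbiasedness, variance, and client drift, and the same key convexity step — exploiting that $g=\nabla f$ is a genuine gradient of a smooth convex function so that the gradient-inconsistency residual becomes the diameter-free quantity $2\beta(\eta^c)^2K^2G^2$, which the fourth step-size branch balances against $\frac{B}{\eta^c RK}$ to give the $R^{-2/3}$ rate. One clarification: you should drop your opening linearization $\phi(\widehat{x_{r,k+1/2}})-\phi(x)\le\langle\nabla f(\widehat{x_{r,k+1/2}}),\widehat{x_{r,k+1/2}}-x\rangle+\psi(\widehat{x_{r,k+1/2}})-\psi(x)$, since committing to the gradient at the server point re-creates the comparator-distance term $\langle \frac{1}{M}\sum_m [g_m(\widehat{x_{r,k+1/2}})-g_m(x^m_{r,k+1/2})],\widehat{x_{r,k+1/2}}-x\rangle$ that forces the saddle-type diameter bound; the paper's Lemma 15 instead bounds $f(\widehat{x_{r,k+1/2}})-f(x)$ directly through the client points (smoothness from $x^m_{r,k+1/2}$ to $\widehat{x_{r,k+1/2}}$, convexity from $x^m_{r,k+1/2}$ to $x$), which is what your ``three-point'' step must amount to, and it produces a \emph{positive} squared-drift term $\frac{\beta}{2M}\sum_{m}\Vert\widehat{x_{r,k+1/2}}-x^m_{r,k+1/2}\Vert^2\le 2\beta(\eta^c)^2K^2G^2$ controlled by the client-drift lemma, rather than anything traded against a negative square term.
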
 
Even though this rate is not preserved in composite saddle point optimization, we note that the optimization of SPP is much more general, and convexity itself is a stronger assumption. More specifically, the complicated setting, including the non-smooth term, the primal-dual projection, the extra-step saddle point optimization, etc., together limit the tools available for analysis. We leave possible improvements as future work. 

\textbf{Remark On Heterogeneity.} Even for federated composite optimization \citep{yuan2021federated}, the heterogeneous setting presents significant hurdles. Specifically, the involvement of heterogeneity is limited to quadratic functions, under which assumption the is gradient linear, and this simplifies the analysis. It further relies on the norm generated by its Hessian. For saddle functions, ``quadraticity'' (as well as a matrix-induced norm) is less well-defined, as the Jacobian of their gradient operator is not (symmetric) positive semidefinite in general. Such further advancements go beyond the scope of this paper. Thus, we regard the rate in Theorem \ref{thm:main} as a significant start for federated composite saddle point optimization.

\section{FeDualEx in Sequential Settings}
In this section, we briefly exhibit the results that come naturally by applying FeDualEx to sequential settings in the composite optimization realm, namely stochastic and deterministic composite saddle point optimization.
\subsection{Stochastic Composite Saddle Point Optimization} \label{sec:FeDualEx-stochastic}
FeDualEx can be naturally reduced to sequential stochastic optimization of composite SPP. We term this algorithm \textit{Sequential FeDualEx} or \textit{Stochastic Dual Extrapolation}. Relevant algorithms or theoretical convergence rates under the same setting, to the best of our knowledge, are not found in prior literature. By reducing the number of clients $M$ to one, thus eliminating the need for communication, and further denoting the local updates $K$ as general iterations $T$, the convergence analysis follows through smoothly and yields $\mc{O}(\frac{1}{\sqrt{T}})$ rate expected for first-order stochastic algorithms by the following theorem. The proof can be found in Appendix \ref{appx:stochastic}.
\begin{theorem} \label{thm:stochastic} Under the sequential versions of previous assumptions, $\forall z \in \mc{Z}$, choosing step size
$\eta = \min \{\frac{1}{3\beta^2}, \frac{B^\frac{1}{2}}{3^\frac{1}{2}\sigma T^\frac{1}{2}}\}$, the ergodic intermediate sequence of stochastic dual extrapolation satisfies 
\begin{align*}
    \mathbb{E}\big[\phi(\frac{1}{T}\sum_{t=0}^{T-1}z_{t+1/2}) - \phi(z) \big] &\leq \frac{3\beta^2B}{T} + \frac{3^\frac{1}{2}\sigma B^\frac{1}{2}}{T^\frac{1}{2}}.
\end{align*}
\end{theorem}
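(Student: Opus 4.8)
The plan is to specialize the machinery behind Theorem \ref{thm:main} to a single client ($M=1$) and reinterpret the local counter $k$ as the global iteration $t$, with $\ell_t = \ell + t\eta\psi$. The first observation is that with $M=1$ the averaged shadow sequences collapse onto the single trajectory: $\overline{\omega_t}=\omega_t$, $\widehat{z_t}=z_t$, $\widehat{z_{t+1/2}}=z_{t+1/2}$, and $\overline{g_{t+1/2}}$ is exactly the single stochastic gradient $g(z_{t+1/2};\xi_{t+1/2})$. Consequently every inter-client quantity — the client-drift and cross-client-deviation terms that feed the $20\beta^2(\eta^c)^3K^2G^2$ contribution of Theorem \ref{thm:main} — vanishes identically, which is precisely why the final bound carries only two terms. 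I would first reduce the duality gap to a sum of per-step quantities: using convexity of $\phi(\cdot,y)$ and concavity of $\phi(x,\cdot)$ together with Jensen's inequality on the ergodic average, $\Gap\big(\tfrac1T\sum_t z_{t+1/2}\big)\le \tfrac1T\sum_t\big[\langle g(z_{t+1/2}),z_{t+1/2}-z\rangle + \psi(z_{t+1/2})-\psi(z)\big]$ for the maximizing comparator $z\in\mc{Z}$, thereby splitting the analysis into the smooth and regularization parts governed by the $M=1$ specializations of Lemma \ref{lem:smooth} and Lemma \ref{lem:non-smooth}. Here the single residual in Lemma \ref{lem:smooth} is the gap between the true gradient $g(z_{t+1/2})$ and the stochastic $\overline{g_{t+1/2}}$, a zero-mean term rather than a deviation across machines.

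Next I would add the two lemmas. The inner product $\eta\langle\overline{g_{t+1/2}},z-z_{t+1/2}\rangle$ at the end of Lemma \ref{lem:non-smooth} cancels against the matching term produced by Lemma \ref{lem:smooth}, leaving a per-step inequality of the form
\begin{align*}
\eta\big[\langle g(z_{t+1/2}),z_{t+1/2}-z\rangle + \psi(z_{t+1/2})-\psi(z)\big] &\le \tilde{V}^{\ell_t}_{\omega_t}(z) - \tilde{V}^{\ell_{t+1}}_{\omega_{t+1}}(z) - \tilde{V}^{\ell_t}_{\omega_t}(z_{t+1/2}) - \tilde{V}^{\ell_{t+1}}_{\omega_{t+1/2}}(z_{t+1}) \\
&\quad + \eta\langle \overline{g_{t+1/2}}-\overline{g_t},\, z_{t+1/2}-z_{t+1}\rangle + (\text{noise}),
\end{align*}
where the first two generalized Bregman terms are arranged to telescope and the next two are the negative self-divergences generated by the extra step.

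The crux — and the step I expect to be the main obstacle — is controlling the extra-step cross term $\eta\langle \overline{g_{t+1/2}}-\overline{g_t},\,z_{t+1/2}-z_{t+1}\rangle$. I would bound it by Cauchy--Schwarz in the $\nrm{\cdot}$/$\nrm{\cdot}_\ast$ pair, pass $\overline{g_{t+1/2}}-\overline{g_t}=g(z_{t+1/2};\xi_{t+1/2})-g(z_t;\xi_t)$ through the true gradients, and invoke $\beta$-Lipschitzness (Assumption c) to produce $\beta\nrm{z_{t+1/2}-z_t}$ plus two stochastic fluctuations. Young's inequality then trades the cross term for $\tfrac{\beta^2}{2}\nrm{z_{t+1/2}-z_t}^2 + \tfrac12\nrm{z_{t+1/2}-z_{t+1}}^2$, and by the $1$-strong convexity of $\ell$ (Assumption f) the negative divergences $-\tilde{V}^{\ell_t}_{\omega_t}(z_{t+1/2})$ and $-\tilde{V}^{\ell_{t+1}}_{\omega_{t+1/2}}(z_{t+1})$ dominate exactly these quadratics provided $\eta\le 1/(3\beta^2)$; this is where the first branch of the step-size rule and the leading $\tfrac{3\beta^2 B}{T}$ term originate. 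The delicate points are that $\ell_t$ and $\ell_{t+1}$ differ (the generalized distance-generating function silently absorbs the time-varying $t\eta\psi$ regularization), so the strong-convexity lower bounds must be attached to the correct index at each use, and that the samples $\xi_t,\xi_{t+1/2}$ are independent, so I must take conditional expectations to annihilate the zero-mean fluctuations (unbiasedness, Assumption d) while retaining only an $\mc{O}(\sigma^2\eta)$ residual from the variance bound (Assumption d).

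Finally I would telescope the surviving divergences, $\sum_t\big[\tilde{V}^{\ell_t}_{\omega_t}(z)-\tilde{V}^{\ell_{t+1}}_{\omega_{t+1}}(z)\big]\le \tilde{V}^{\ell_0}_{\omega_0}(z)\le B$ using compactness (Assumption g) and nonnegativity of the terminal divergence, take total expectation, and divide by $\eta T$ to reach a bound of the form $\E[\Gap]\le \tfrac{B}{\eta T}+c\,\sigma^2\eta$ for an absolute constant $c$. The min step-size rule $\eta=\min\{\tfrac{1}{3\beta^2},\tfrac{B^{1/2}}{3^{1/2}\sigma T^{1/2}}\}$ then yields $\tfrac{B}{\eta T}\le \tfrac{3\beta^2 B}{T}+\tfrac{3^{1/2}\sigma B^{1/2}}{T^{1/2}}$ while the variance term is absorbed into the $\mc{O}(1/\sqrt{T})$ contribution, producing the claimed rate; setting $\sigma=0$ collapses the second branch and recovers the $\mc{O}(1/T)$ deterministic guarantee.
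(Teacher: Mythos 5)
Your proposal follows essentially the same route as the paper's own proof: the paper likewise treats the sequential algorithm as the $M=1$ specialization of the federated analysis, invokes the sequential analog of Lemma \ref{lem:non-smooth}, cancels the matching inner product with the smooth part, controls the extra-step cross term via Cauchy--Schwarz, Young's inequality, a three-way split into the true-gradient difference (handled by $\beta$-Lipschitzness) plus two zero-mean fluctuations, and dominates the resulting quadratics by the negative generalized Bregman divergences through Lemma \ref{lem:bregman-larger} and $1$-strong convexity of $\ell$ under $\eta \leq \frac{1}{3\beta^2}$, then annihilates the unbiased noise term in expectation, telescopes to $\frac{B}{\eta T} + 3\sigma^2\eta$, and tunes $\eta$ exactly as you describe. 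The proposal is correct and contains no gaps relative to the paper's argument.
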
 

\subsection{Deterministic Composite Saddle Point Optimization} \label{sec:FeDualEx-deterministic}
Further removing the noise in gradient, FeDualEx reduces to a deterministic algorithm for composite SPP. We emphasize that even so, we are still generalizing the classic dual extrapolation algorithm to composite optimization, and thus term the algorithm \textit{Deterministic FeDualEx} or \textit{Composite Dual Extrapolation}. Following a similar analysis, we are able to get the $\mc{O}(\frac{1}{T})$ rate as in previous work for composite optimization \citep{he2015mirror} as well as the smooth dual extrapolation \citep{nesterov2007dual}. The proof for the following theorem is in Appendix \ref{appx:deterministic}, which is, in particular, a much simpler one as we utilize the recently proposed Relative Lipschitzness condition \citep{CohenST21}.
\begin{restatable}{theorem}{thmdeterministic} \label{thm:deterministic}
    Under the basic convexity assumption and $\beta$-Lipschitzness of $g$, $\forall z \in \mc{Z}$ and $\eta \leq \frac{1}{\beta}$, composite dual extrapolation satisfies 
\begin{align*} 
    \mathbb{E}\big[\phi(\frac{1}{T}\sum_{t=0}^{T-1}z_{t+1/2}) - \phi(z) \big] &\leq \frac{\beta B}{T}.
\end{align*}
\end{restatable}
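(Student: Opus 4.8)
The plan is to specialize the two-lemma decomposition behind Theorem~\ref{thm:main} to the single-client, noise-free regime and then close the per-step bound with the relative Lipschitzness condition of \citet{CohenST21}, which is exactly what makes the $\mc{O}(1/T)$ rate fall out cleanly. Setting $M=1$ and deleting the sampling noise, the recursion becomes $\omega_{t+1/2} = \omega_t - \eta g(z_t)$ and $\omega_{t+1} = \omega_t - \eta g(z_{t+1/2})$ with $z_t = \nabla\ell_t^\ast(\omega_t)$ and $z_{t+1/2} = \nabla\ell_{t+1}^\ast(\omega_{t+1/2})$, where $\ell_t = \ell + t\eta\psi$. Crucially, with a single client the shadow iterate coincides with the actual one, so $\widehat{z_{r,k+1/2}}$ is just $z_{t+1/2}$ and the two client-deviation inner products in Lemma~\ref{lem:smooth} vanish identically, leaving only the intrinsic saddle term.

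First I would assemble the per-step inequality. For the smooth part, convex-concavity of $f$ gives $f(x_{t+1/2},y)-f(x,y_{t+1/2}) \le \langle g(z_{t+1/2}), z_{t+1/2}-z\rangle$ for every $z=(x,y)$; for the nonsmooth part I would invoke the deterministic single-client instance of Lemma~\ref{lem:non-smooth}, an exact identity for $\eta[\psi(z_{t+1/2})-\psi(z)]$ in terms of four generalized Bregman divergences, the cross term $\eta\langle g(z_{t+1/2})-g(z_t), z_{t+1/2}-z_{t+1}\rangle$, and the term $\eta\langle g(z_{t+1/2}), z-z_{t+1/2}\rangle$. Adding the two and writing $\phi=f+\psi$, the term $\eta\langle g(z_{t+1/2}), z-z_{t+1/2}\rangle$ cancels against $\eta\langle g(z_{t+1/2}), z_{t+1/2}-z\rangle$ (this is the cancellation flagged in the proof outline), leaving
\[
\eta[\phi(x_{t+1/2},y)-\phi(x,y_{t+1/2})] \le \tilde{V}^{\ell_t}_{\omega_t}(z) - \tilde{V}^{\ell_{t+1}}_{\omega_{t+1}}(z) - \tilde{V}^{\ell_t}_{\omega_t}(z_{t+1/2}) - \tilde{V}^{\ell_{t+1}}_{\omega_{t+1/2}}(z_{t+1}) + \eta\langle g(z_{t+1/2})-g(z_t), z_{t+1/2}-z_{t+1}\rangle.
\]

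Next I would dispatch the remaining cross term via relative Lipschitzness. The step is to show that $\beta$-Lipschitzness of $g$ and $1$-strong convexity of $\ell$ imply, for $\eta\le 1/\beta$, that $\eta\langle g(z_{t+1/2})-g(z_t), z_{t+1/2}-z_{t+1}\rangle \le \tilde{V}^{\ell_t}_{\omega_t}(z_{t+1/2}) + \tilde{V}^{\ell_{t+1}}_{\omega_{t+1/2}}(z_{t+1})$, so the cross term is exactly absorbed by the two negative divergences. The enabling sub-claim I would establish is that each generalized divergence dominates the vanilla one: expanding $\tilde{V}^{\ell_t}_{\omega_t}(z')$ with $\omega_t \in \partial\ell_t(z_t) = \nabla\ell(z_t)+\partial(t\eta\psi)(z_t)$, the excess over $V^\ell_{z_t}(z')$ is precisely the (nonnegative) Bregman divergence of the convex function $t\eta\psi$, whence $\tilde{V}^{\ell_t}_{\omega_t}(z') \ge V^\ell_{z_t}(z') \ge \tfrac12\|z'-z_t\|^2$. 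Combining this with Cauchy--Schwarz, the Lipschitz bound $\|g(z_{t+1/2})-g(z_t)\|_\ast \le \beta\|z_{t+1/2}-z_t\|$, Young's inequality, and $\eta\beta\le 1$ delivers the absorption.

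With the cross term gone, the per-step bound collapses to the telescoping form $\eta[\phi(x_{t+1/2},y)-\phi(x,y_{t+1/2})] \le \tilde{V}^{\ell_t}_{\omega_t}(z) - \tilde{V}^{\ell_{t+1}}_{\omega_{t+1}}(z)$. Summing over $t=0,\dots,T-1$, discarding the nonnegative terminal divergence, noting $\ell_0=\ell$ so that $\tilde{V}^{\ell_0}_{\omega_0}(z)=V^\ell_{z_0}(z)\le B$ by Assumption~(g), dividing by $\eta T$, and applying Jensen's inequality through convex-concavity of $\phi$ yields $\Gap(\tfrac1T\sum_t z_{t+1/2}) \le B/(\eta T)$; taking $\eta=1/\beta$ gives the claimed $\beta B/T$. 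I expect the main obstacle to be the absorption step: one must verify that the generalized divergence---built from dual images and subgradients of the nonsmooth $\psi$ rather than from gradients of a smooth potential---still lower-bounds $\tfrac12\|\cdot\|^2$ with the correct constant, and that the footpoints $(z_t, z_{t+1/2}, z_{t+1})$ of the divergences produced by Lemma~\ref{lem:non-smooth} align exactly with those demanded by relative Lipschitzness, so that the cancellation is exact rather than merely up to slack.
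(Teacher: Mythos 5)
Your proposal is correct and follows essentially the same route as the paper's proof: the sequential, noise-free instance of Lemma~\ref{lem:non-smooth}, absorption of the cross term $\eta\langle g(z_{t+1/2})-g(z_t), z_{t+1/2}-z_{t+1}\rangle$ into the two negative generalized divergences using the fact that they dominate the vanilla Bregman divergences (Lemma~\ref{lem:bregman-larger}) together with $\eta\beta\le 1$, then telescoping, the bound $\tilde{V}^{\ell_0}_{\omega_0}(z)=V^\ell_{z_0}(z)\le B$, and Jensen. The only cosmetic difference is that you re-derive the absorption via Cauchy--Schwarz and Young's inequality, whereas the paper invokes it as a black box through the relative Lipschitzness lemma of \citet{CohenST21} (Lemma~\ref{lem:relative-lipschitz}), which is proved by exactly that computation.
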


\section{Experiments} \label{sec:experiment}
\begin{figure}[b!] 
\begin{minipage}{.48\textwidth}
    \centering 
    {\begin{align*}
       \min_{\mathbf{x} \in \mc{X}} \ \max_{\mathbf{y} \in \mc{Y}} \langle {\mathbf{Ax}} - \mathbf{b}, \mathbf{y} \rangle + \lambda {\nrm{\mathbf{x}}}_1 - \lambda \nrm{\mathbf{y}}_1 
    \end{align*} \vspace{-10pt}
    \begin{align*} 
        &{\mathbf{A}} \in \mathbb{R}^{n\times m},
        &\mc{X} =  \{\mathbb{R}^{m}: \Vert \mathbf{x} \Vert_\infty \leq D\}, \\ &{\mathbf{b}} \in \mathbb{R}^{n}, &\mc{Y} =  \{\mathbb{R}^{n}: \Vert \mathbf{y} \Vert_\infty \leq D\}.
    \end{align*} 
    \\
    \noindent
    \rule{0.9\textwidth}{0.4pt} \vspace{5pt}
    \begin{align*}
        &\Gap(\mathbf{x}, \mathbf{y}) = D\nrm{\max\{\vert \mathbf{Ax}-\mathbf{b}\vert - \lambda, 0\}}_1 \\
        & \qquad + \lambda \nrm{\mathbf{x}}_1 + D\nrm{\max\{\vert \mathbf{A}^\top\mathbf{y} \vert - \lambda, 0\}}_1 \\
        & \qquad + \langle \mathbf{b}, \mathbf{y} \rangle + \lambda \nrm{\mathbf{y}}_1.
    \end{align*}}
    \caption{The composite saddle point optimization problem with $\ell_1$ norm sparsity regularization from \citep{jiang2022generalized}, and the evaluation of its duality gap given in the closed-form.}
    \label{fig:saddle-problem}
\end{minipage} 
\hfill
\begin{minipage}{.48\textwidth}
\centering
    \centering 
    {\begin{align*}
       \min_{\mathbf{X} \in \mc{X}} \ \max_{\mathbf{Y} \in \mc{Y}} \mathrm{Tr} \big( (\mathbf{AX} - \mathbf{B})^\top \mathbf{Y} \big) + \lambda {\nrm{\mathbf{X}}}_\ast - \lambda \nrm{\mathbf{Y}}_\ast 
    \end{align*} \vspace{-10pt}
    \begin{align*} 
        & {\mathbf{A}} \in \mathbb{R}^{n\times m},
        \qquad \mc{X} =  \{\mathbb{R}^{m \times p}: \Vert \mathbf{X} \Vert_2 \leq D\}, \\ &{\mathbf{B}} \in \mathbb{R}^{n \times p}, \qquad \  \mc{Y} =  \{\mathbb{R}^{n \times p}: \Vert \mathbf{Y} \Vert_2 \leq D\}.
    \end{align*} 
    \\
    \noindent
    \rule{0.9\textwidth}{0.4pt} \vspace{5pt}
    \begin{align*}
        &\Gap(\mathbf{X}, \mathbf{Y}) = D\nrm{\mathrm{diag} \big((\vert \sigma_i ( \mathbf{AX}-\mathbf{B} ) \vert - \lambda)_+\big)}_\ast \\
        & \qquad + \lambda \nrm{\mathbf{X}}_\ast + D\nrm{\mathrm{diag} \big((\vert \sigma_j (  \mathbf{A}^\top\mathbf{Y} ) \vert - \lambda)_+\big)}_\ast \\
        & \qquad + \mathrm{Tr} \big(\mathbf{B^\top Y}\big) + \lambda \nrm{\mathbf{Y}}_\ast.
    \end{align*}}
    \caption{The composite saddle point optimization problem with nuclear norm low-rank regularization, and the evaluation of its duality gap given in the closed-form.}
    \label{fig:saddle-problem2}
\end{minipage} 
\end{figure}

In this section, we verify the effectiveness of FeDualEx by numerical evaluation. We compare FeDualEx against FedDualAvg and FedMiD \citep{yuan2021federated}, as well as FedMiP proposed in Algorithm \ref{alg:fed-MiP} in Appendix \ref{appx:FedMiP}. We present problem formulations and experiment results here and defer detailed settings to Appendix \ref{appx:exp}.

\subsection{Saddle Point Problem with Sparsity Regularization and Ball Constraint}

We test all methods on the bilinear problem with $\ell_1$ regularization and $\ell_\infty$ ball constraint from \citep{jiang2022generalized}, which is presented in Figure \ref{fig:saddle-problem}.  The purpose of $\ell_1$ regularization is to encourage sparsity. We take the distance-generating function to be $\ell = \frac{1}{2}\Vert\mathbf{x}\Vert_2^2 + \frac{1}{2}\Vert\mathbf{y}\Vert_2^2$, so the generalized proximal operator instantiates to the soft-thresholding operator \citep{hastie2015statistical, jiang2022generalized}. We generate a fixed pair of ${\mathbf{A}}$ and ${\mathbf{b}}$ with each entry independently following the uniform distribution $\mc{U}_{[-1,1]}$. Each entry of the variables ${\mathbf{x}}$ and ${\mathbf{y}}$ is initialized independently from the distribution $\mc{U}_{[-D,D]}$. As in \citep{jiang2022generalized}, we take $m=600$, $n=300$, $\lambda = 0.1$, $D = 0.05$.  For federated learning, we simulate $M=100$ clients. For the gradient query of each client in each local update, we inject a Gaussian noise from $\mc{N}(0, \sigma^2)$.  All $M=100$ clients participate in each round; noise on each client is i.i.d. with $\sigma = 0.1$. 

We evaluate the convergence in terms of the duality gap and also demonstrate the sparsity of the solution. The duality gap for the problem of interest can be evaluated in closed form, which is also provided in Figure \ref{fig:saddle-problem}. The sparsity is measured by the ratio of non-zero entries to the parameter size, and we regard numbers less than $10^{-5}$ as zeros. The evaluation is conducted for two different settings: (a) $K=1$ local update for $R=5000$ rounds; (b) $K=10$ local updates for $R=500$ rounds. The results are demonstrated in Figure \ref{fig:l1_res} correspondingly. 

\textbf{Discussions.} From the duality gap curves, we see that extra-step methods, i.e., FeDualEx and FedMiP converge to the order of $10^{-1}$ whereas FedDualAvg and FedMiD stay above $10^0$. Thus, it is evident that methods for composite convex optimization are no longer suited for composite saddle point optimization, and FeDualEx provides the first effective solution addressing the challenge. From the sparsity of the solution, we see that the dual methods demonstrate better adherence to regularization. Among the methods superior in saddle point optimization, FeDualEx reaches a sparsity of around $0.7$ while FedMiP around $0.95$. This aligns with the previous analysis on the advantage of dual aggregation and further validates the effectiveness of FeDualEx for solving composite SPP.

\begin{figure*}[t]
\centering
        \begin{subfigure}{0.49\textwidth}
        \centering
        \includegraphics[width=\textwidth]{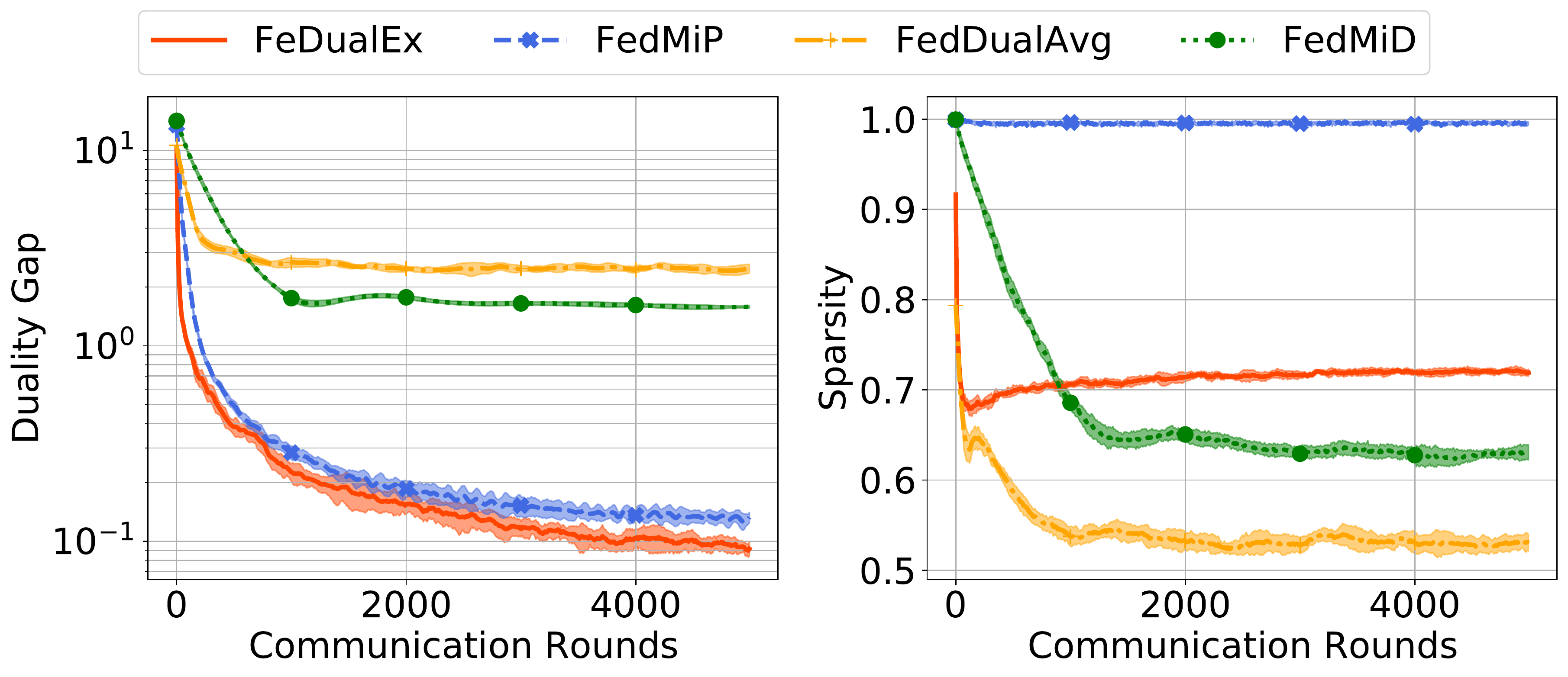}
        \caption{One Local Update} 
        \end{subfigure}
        \hfill
        \begin{subfigure}{0.49\textwidth}
        \centering
        \includegraphics[width=\textwidth]{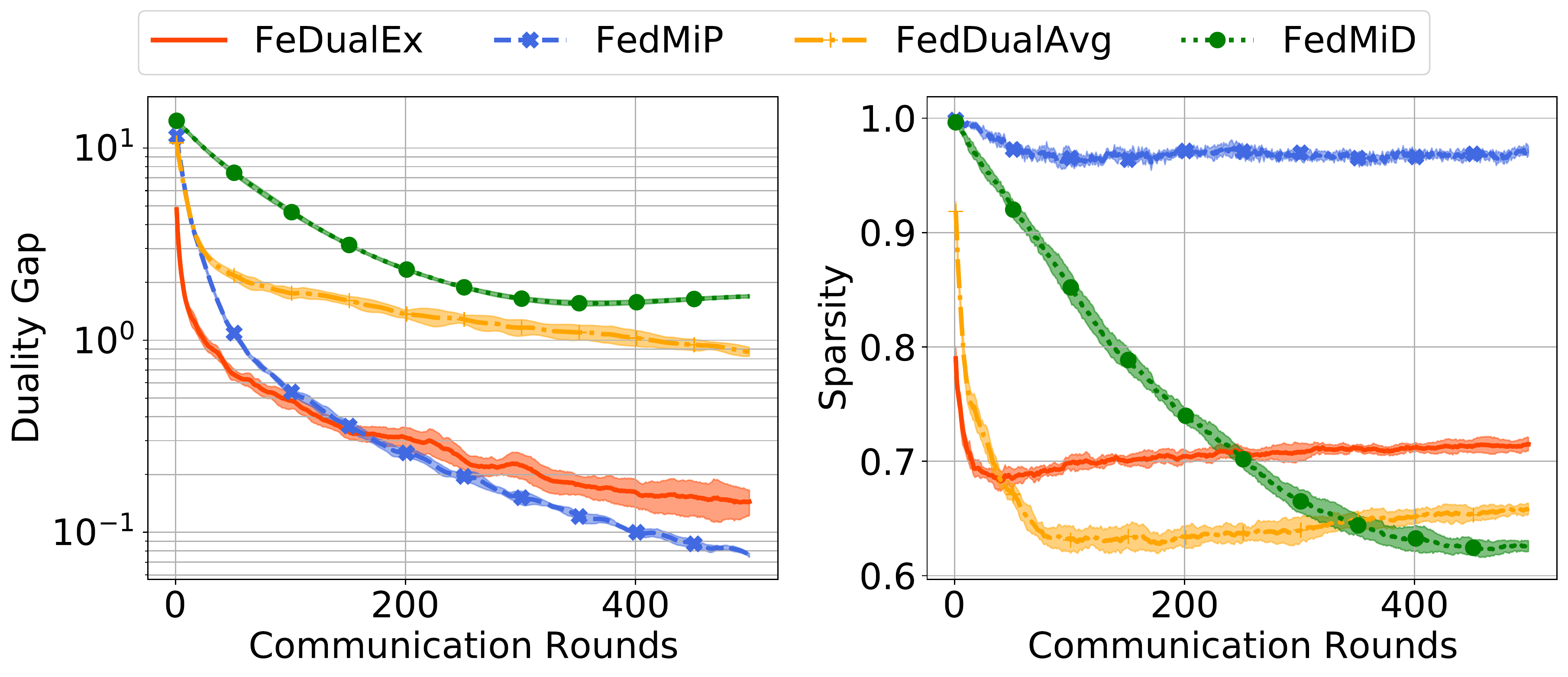}
        \caption{Ten Local Updates}
        \end{subfigure}%    
        \caption{Duality gap and sparsity of the solution to the SPP in Figure \ref{fig:saddle-problem}.}
        \label{fig:l1_res}
\end{figure*} 

\subsection{Saddle Point Problem with Nuclear Norm Regularization and Spectral Norm Constraint}

\begin{figure*}[t]
\centering
        \begin{subfigure}{\textwidth}
        \centering
        \includegraphics[width=\textwidth]{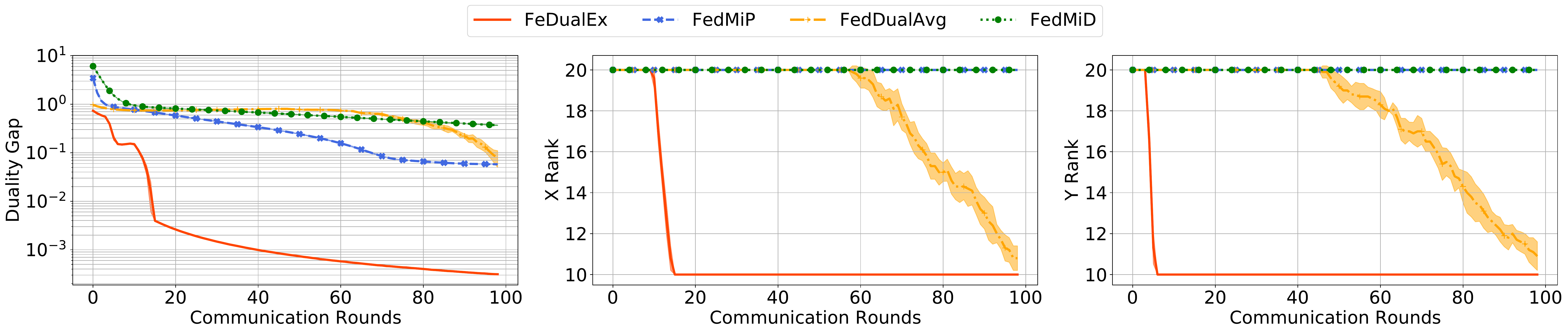}
        \caption{One Local Update} \vspace{5pt}
        \end{subfigure}
        \begin{subfigure}{\textwidth}
        \centering
        \includegraphics[width=\textwidth]{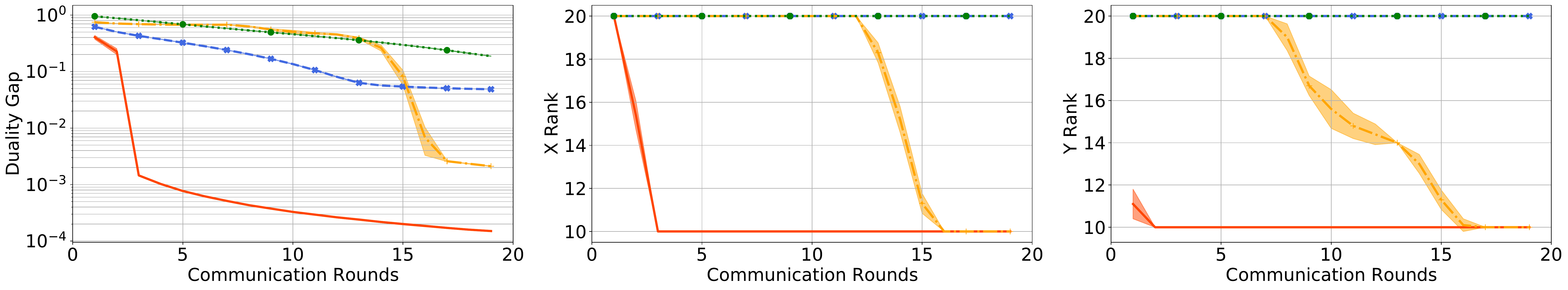}
        \caption{Ten Local Updates}
        \end{subfigure}%    
        \caption{Duality gap and rank of the solution to the nuclear norm regularized SPP in Figure \ref{fig:saddle-problem2}.}
        \label{fig:nuc20_res}
\end{figure*}

We also test FeDualEx on the SPP with nuclear norm regularization for low-rankness, as shown in Figure \ref{fig:saddle-problem2}, in which we overuse the notation $\Vert \cdot \Vert_\ast$ for the matrix nuclear norm and $\Vert \cdot \Vert_2$ for the matrix spectral norm. We use $\mathrm{Tr}(\cdot)$ to denote the trace of a square matrix. And for the purpose of feasibility and convenience, we impose spectral norm constraints on the variables as well. By choosing the distance-generating function to be $\ell = \frac{1}{2}\Vert\mathbf{X}\Vert_\mathrm{F}^2 + \frac{1}{2}\Vert\mathbf{Y}\Vert_\mathrm{F}^2$ where $\Vert \cdot \Vert_\mathrm{F}$ denotes the Frobenius norm, the projection $\nabla \ell_{r,k}^\ast (\cdot)$ instantiates to the singular value soft-thresholding operator \citep{cai2010singular}. 

The data-generating process is similar to that in the previous SPP. The key difference is, for the feasibility of low-rankness, we generate ${\mathbf{B}}$ to be of rank $\frac{p}{2}$, i.e. half of the columns of $B$ is linearly dependent on the other half. We take $p=20$, so the optimal rank for the solution would most likely be $10$.

We evaluate the convergence in terms of the duality gap and also demonstrate the rank of the solution, for both $\mathbf{X}$ and $\mathbf{Y}$. The duality gap can be evaluated in closed form as presented in Figure \ref{fig:saddle-problem2}. The evaluation is conducted for two different settings: (a) $K=1$ local update for $R=100$ rounds; (b) $K=10$ local updates for $R=20$ rounds. The results are demonstrated in Figure \ref{fig:nuc20_res} correspondingly.

\textbf{Discussions.} 
  From Figure \ref{fig:nuc20_res}, we can see that in the setting for low-rankness regularization, dual methods tend to perform better both in minimizing the duality gap and in encouraging a low-rank solution. In particular, FeDualEx, as a method geared for saddle point optimization, demonstrates better convergence in the duality gap than FedDualAvg. In the meantime, the solution given by FeDualEx quickly reaches the optimal rank of $10$. This further reveals the potential of FeDualEx in coping with a variety of regularization and constraints.

\section{Conclusion and Future Work}
We advance federated learning to the broad class of composite SPP by proposing FeDualEx and providing, to our knowledge, the first convergence rate of its kind. We also show that the sequential version of FeDualEx provides a solution to composite stochastic saddle point optimization, and such analysis, to our knowledge, was previously not found. We recognize further study of the heterogeneous federated setting of composite saddle point optimization would be a challenging direction for future work.

\bibliography{reference}

\newpage

\appendix
{\Large{\textbf{Appendices}}}

\flushbottom

In Appendix \ref{appx:exp}, we provide details on experiment settings and additional experiments on saddle point optimization with low-rank nuclear norm regularization. In Appendix \ref{appx:review}, an extended literature review on various related subfields is included. Appendix \ref{appx:additional} and \ref{appx:tech-lemmas} provide additional theoretical background, including relevant preliminaries, definitions, remarks, and technical lemmas. Appendix \ref{appx:FeDualEx-Saddle}, \ref{appx:FeDualEx-Convex}, and \ref{appx:FeDualEx-Others} provide the convergence rates and complete proofs for FeDualEx in federated composite saddle point optimization, federated composite convex optimization, sequential stochastic composite optimization, and sequential deterministic composite optimization respectively. Finally, the algorithm of FedMiP is presented in Appendix \ref{appx:FedMiP}.

\startcontents[appendices]
\printcontents[appendices]{l}{1}%{\section{Appendices}\setcounter{tocdepth}{2}}

\newpage
\section{Experiment Setup Details} \label{appx:exp}

\subsection{Setup Details for Saddle Point Optimization with Sparsity Regularization}
We provide additional details for the SPP with the sparsity regularization demonstrated in the main text. We start by restating its formulation below:
\begin{align*}
       &\min_{\mathbf{x} \in \mc{X}} \ \max_{\mathbf{y} \in \mc{Y}} \langle {\mathbf{Ax}} - \mathbf{b}, \mathbf{y} \rangle + \lambda {\nrm{\mathbf{x}}}_1 - \lambda \nrm{\mathbf{y}}_1 \\ 
        &{\mathbf{A}} \in \mathbb{R}^{n\times m},
        \qquad \mc{X} =  \{\mathbb{R}^{m}: \Vert \mathbf{x} \Vert_\infty \leq D\}, \\ &{\mathbf{b}} \in \mathbb{R}^{n}, \qquad \quad \ \  \mc{Y} =  \{\mathbb{R}^{n}: \Vert \mathbf{y} \Vert_\infty \leq D\}.
\end{align*} 

\textbf{Soft-Thresholding Operator for $\ell_1$ Norm Regularization.} By choosing the distance-generating function to be $\ell = \frac{1}{2}\Vert\mathbf{x}\Vert_2^2 + \frac{1}{2}\Vert\mathbf{y}\Vert_2^2$, the projection $\nabla \ell_{r,k}^\ast (\cdot)$ instantiates to the following element-wise soft-thresholding operator \citep{hastie2015statistical, jiang2022generalized}: 
\begin{align*}
    T_{\lambda'}(\omega) \coloneqq 
    \begin{cases}
      0 & \text{if } \vert \omega \vert \leq \lambda' \\
      (\vert \omega \vert - \lambda') \cdot \mathrm{sgn}(\omega) & \text{if } \lambda' < \vert \omega \vert \leq  \lambda' + D \\
      D \cdot \mathrm{sgn}(\omega) & \text{otherwise}
    \end{cases}, 
\end{align*}
in which $\lambda' = \lambda \eta^c (\eta^s r K + k)$.

\textbf{Closed-Form Duality Gap.} The closed-form duality gap is given by 
\begin{align*}
        &\Gap(\mathbf{x}, \mathbf{y}) = D\nrm{(\vert \mathbf{Ax}-\mathbf{b}\vert - \lambda)_+}_1 + \lambda \nrm{\mathbf{x}}_1 + D\nrm{(\vert \mathbf{A}^\top\mathbf{y} \vert - \lambda)_+}_1 + \langle \mathbf{b}, \mathbf{y} \rangle + \lambda \nrm{\mathbf{y}}_1,
\end{align*}
where $|\cdot|$ and $()_+ = \max\{\cdot, 0\}$ are element-wise. We provide a brief derivation below. Since a constraint is equivalent to an indicator regularization, we move the $\ell_\infty$ constraint into the objective and denote $g_1(\cdot) = \Vert \cdot \Vert_1$, $g_2(\cdot) = \begin{cases}
    0 \text{ \quad if } \Vert \cdot \Vert_\infty \leq D \\
    \infty \text{ \ otherwise}
\end{cases}.$ By the definitions of duality gap in Definition \ref{def:saddle-obj} and convex conjugate in Definition \ref{def:conjugate}, the duality gap equals to 
\begin{align*}
    \Gap(\mathbf{x}, \mathbf{y}) &= \max_{\mathbf{y}} \lambda \{\langle \frac{1}{\lambda}(\mathbf{Ax} - \mathbf{b}), \mathbf{y} \rangle - g_1(\mathbf{y}) - g_2(\mathbf{y}) + {\nrm{\mathbf{x}}}_1 \} \\
    & \qquad - \min_{\mathbf{x}} \lambda \{\langle \frac{1}{\lambda}(\mathbf{A^\top y}), \mathbf{x} \rangle + g_1(\mathbf{x}) + g_2(\mathbf{x}) - {\nrm{\mathbf{y}}}_1 - \frac{1}{\lambda}\mathbf{b^\top y}\} \\
    &= \lambda (g_1+g_2)^\ast (\frac{1}{\lambda}(\mathbf{Ax} - \mathbf{b})) + \lambda (g_1+g_2)^\ast (\frac{1}{\lambda}(\mathbf{A^\top y})) + \lambda {\nrm{\mathbf{x}}}_1 + \lambda {\nrm{\mathbf{y}}}_1 + \mathbf{b^\top y} \\
    &= \inf_{\mathbf{u}} \{ \lambda g_1^\ast(\mathbf{u}) + \lambda g_2^\ast(\frac{1}{\lambda}(\mathbf{Ax} - \mathbf{b}) - \mathbf{u})\} + \inf_{\mathbf{v}} \{ \lambda g_1^\ast(\mathbf{v}) + \lambda g_2^\ast(\frac{1}{\lambda}(\mathbf{A^\top y}) - \mathbf{v})\} \\
    & \qquad + \lambda {\nrm{\mathbf{x}}}_1 + \lambda {\nrm{\mathbf{y}}}_1 + \mathbf{b^\top y},
\end{align*}
in which the last equality holds by Theorem 2.3.2, namely infimal convolution, in Chapter E of \citet{hiriart2004fundamentals}. By definition of the convex conjugate, the convex conjugate of a norm $g(\cdot) = \Vert \cdot \Vert_p$ is defined to be $g^\ast(\cdot) = \begin{cases}
    0 \text{ \quad if } \Vert \cdot \Vert_q \leq 1 \\
    \infty \text{ \ otherwise}
\end{cases},$ in which $\Vert \cdot \Vert_q$ is the dual norm of $\Vert \cdot \Vert_p$.  Given that $\ell_1$ and $\ell_\infty$ are dual norms to each other, $g_1^\ast(\cdot) = \begin{cases}
    0 \text{ \quad if } \Vert \cdot \Vert_\infty \leq 1 \\
    \infty \text{ \ otherwise}
\end{cases},$ $g_2^\ast (\cdot) = D\Vert \cdot \Vert_1$. Therefore the infimum is achieved when $\forall i \in [m]$, $\forall j \in [n]$,
\begin{align*}
    u_i = 
    \begin{cases}
        \frac{1}{\lambda}(\mathbf{Ax} - \mathbf{b})_i & \text{if } \vert \frac{1}{\lambda}(\mathbf{Ax} - \mathbf{b})_i \vert \leq 1 \\
        \mathrm{sgn} (\frac{1}{\lambda}(\mathbf{Ax} - \mathbf{b})_i) & \text{otherwise}
    \end{cases},
    &&
    v_j = 
    \begin{cases}
        \frac{1}{\lambda}(\mathbf{A^\top y})_j & \text{if } \vert \frac{1}{\lambda}(\mathbf{A^\top y})_j \vert \leq 1 \\
        \mathrm{sgn} (\frac{1}{\lambda}(\mathbf{A^\top y})_j) & \text{otherwise}
    \end{cases},
\end{align*}
which yields the closed-form duality gap.

\textbf{Additional Experiment Details.} We only tune the global step size $\eta^s$ and the local step size $\eta^c$. For all experiments, the parameters are searched from the combination of $\eta^s \in \{1, 3e-1, 1e-1, 3e-2, 1e-2\}$ and $\eta^c \in \{1, 3e-1, 1e-1, 3e-2, 1e-2, 3e-3, 1e-3\}$. We run each setting for 10 different random seeds and report the mean and standard deviation in Figure \ref{fig:l1_res}.

\subsection{Setup Details for Saddle Point Optimization with Low-Rank Regularization} \label{sec:exp-lowrank}

We provide additional details for the SPP with the low-rank regularization demonstrated in the main text. We start by restating its formulation below:
\begin{align*}
       &\min_{\mathbf{X} \in \mc{X}} \ \max_{\mathbf{Y} \in \mc{Y}} \mathrm{Tr} \big( (\mathbf{AX} - \mathbf{B})^\top \mathbf{Y} \big) + \lambda {\nrm{\mathbf{X}}}_\ast - \lambda \nrm{\mathbf{Y}}_\ast \\ 
        &{\mathbf{A}} \in \mathbb{R}^{n\times m},
        \qquad \mc{X} =  \{\mathbb{R}^{m \times p}: \Vert \mathbf{X} \Vert_2 \leq D\}, \\ &{\mathbf{B}} \in \mathbb{R}^{n \times p}, \qquad \  \mc{Y} =  \{\mathbb{R}^{n \times p}: \Vert \mathbf{Y} \Vert_2 \leq D\}.
\end{align*} 

\textbf{Soft-Thresholding Operator for Nuclear Norm Regularization.} By choosing the distance-generating function to be $\ell = \frac{1}{2}\Vert\mathbf{X}\Vert_\mathrm{F}^2 + \frac{1}{2}\Vert\mathbf{Y}\Vert_\mathrm{F}^2$ where $\Vert \cdot \Vert_\mathrm{F}$ denotes the Frobenius norm, the projection $\nabla \ell_{r,k}^\ast (\cdot)$ instantiates to the following element-wise singular value soft-thresholding operator \citep{cai2010singular}: 
\begin{align*}
    T_{\lambda'}(\mathbf{W}) \coloneqq  \mathbf{U} T_{\lambda'}(\mathbf{\Sigma})\mathbf{V}^\top, && T_{\lambda'}(\mathbf{\Sigma}) = \mathrm{diag}( \mathrm{sgn} (\sigma_i (\mathbf{W})) \cdot \min \{ \max \{ \sigma_i (\mathbf{W}) - \lambda', 0\}, D\}),
\end{align*}
in which $\lambda' = \lambda \eta^c (\eta^s r K + k)$, $\mathbf{W} = \mathbf{U}\mathbf{\Sigma}\mathbf{V}^\top$ is the singular value decomposition (SVD) of $\mathbf{W}$, and we overuse the notation $\sigma_i (\cdot)$ to represent the singular values.

\textbf{Closed-Form Duality Gap.} The closed-form duality gap is given by 
\begin{align*}
        \Gap(\mathbf{X}, \mathbf{Y}) &= D\nrm{\mathrm{diag} \big((\vert \sigma_i ( \mathbf{AX}-\mathbf{B} ) \vert - \lambda)_+\big)}_\ast + \lambda \nrm{\mathbf{X}}_\ast \\
        & \qquad + D\nrm{\mathrm{diag} \big((\vert \sigma_j (  \mathbf{A}^\top\mathbf{Y} ) \vert - \lambda)_+\big)}_\ast + \mathrm{Tr} \big(\mathbf{B^\top Y}\big) + \lambda \nrm{\mathbf{Y}}_\ast,
\end{align*}
We provide a brief derivation below. Since a constraint is equivalent to an indicator regularization, we move the spectral norm constraint into the objective and denote $g_1(\cdot) = \Vert \cdot \Vert_\ast$, $g_2(\cdot) = \begin{cases}
    0 \text{ \quad if } \Vert \cdot \Vert_2 \leq D \\
    \infty \text{ \ otherwise}
\end{cases}.$ By the definitions of duality gap in Definition \ref{def:saddle-obj} and convex conjugate in Definition \ref{def:conjugate}, the duality gap equals to 
\begin{align*}
    \Gap(\mathbf{X}, \mathbf{Y}) &= \max_{\mathbf{Y}} \lambda \{\mathrm{Tr} \big( \frac{1}{\lambda}(\mathbf{AX} - \mathbf{B})^\top \mathbf{Y} \big) -  g_1(\mathbf{Y}) - g_2(\mathbf{Y}) + \nrm{\mathbf{X}}_\ast \} \\
    & \qquad - \min_{\mathbf{X}} \lambda \{\{\mathrm{Tr} \big( \frac{1}{\lambda}(\mathbf{A^\top Y})^\top \mathbf{X} \big) + g_1(\mathbf{X}) + g_2(\mathbf{X}) -  {\nrm{\mathbf{Y}}}_\ast - \frac{1}{\lambda}\mathrm{Tr} \big(\mathbf{B^\top Y}\big) \} \\
    &= \lambda (g_1+g_2)^\ast (\frac{1}{\lambda}(\mathbf{AX} - \mathbf{B})) + \lambda (g_1+g_2)^\ast (\frac{1}{\lambda}(\mathbf{A^\top Y})) \\
    & \qquad + \lambda {\nrm{\mathbf{X}}}_\ast + \lambda {\nrm{\mathbf{Y}}}_\ast + \mathrm{Tr} \big(\mathbf{B^\top Y}\big) \\
    &= \inf_{\mathbf{P}} \{ \lambda g_1^\ast(\mathbf{P}) + \lambda g_2^\ast(\frac{1}{\lambda}(\mathbf{AX} - \mathbf{B}) - \mathbf{P})\} + \inf_{\mathbf{Q}} \{ \lambda g_1^\ast(\mathbf{Q}) + \lambda g_2^\ast(\frac{1}{\lambda}(\mathbf{A^\top Y}) - \mathbf{Q})\} \\
    & \qquad + \lambda {\nrm{\mathbf{X}}}_\ast + \lambda {\nrm{\mathbf{Y}}}_\ast + \mathrm{Tr} \big(\mathbf{B^\top Y}\big),
\end{align*}
in which the last equality holds by Theorem 2.3.2, namely infimal convolution, in Chapter E of \citet{hiriart2004fundamentals}. By definition of the dual norm, we know that the nuclear norm and the spectral norm are dual norms to each other. Therefore, $g_1^\ast(\cdot) = \begin{cases}
    0 \text{ \quad if } \Vert \cdot \Vert_2 \leq 1 \\
    \infty \text{ \ otherwise}
\end{cases},$ $g_2^\ast (\cdot) = D\Vert \cdot \Vert_\ast$. And the infimum is achieved when
\begin{align*}
    \sigma_i(\mathbf{P}) = 
    \begin{cases}
        \sigma_i\big(\frac{1}{\lambda}(\mathbf{Ax} - \mathbf{B})\big) & \text{if } \vert \sigma_i\big(\frac{1}{\lambda}(\mathbf{Ax} - \mathbf{B})\big) \vert \leq 1 \\
        \mathrm{sgn} \big( \sigma_i\big(\frac{1}{\lambda}(\mathbf{Ax} - \mathbf{B})\big) \big) & \text{otherwise}
    \end{cases},
    \\
    \sigma_j(\mathbf{Q}) = 
    \begin{cases}
        \sigma_j\big(\frac{1}{\lambda}(\mathbf{A^\top y})\big) & \text{if } \vert \sigma_j\big(\frac{1}{\lambda}(\mathbf{A^\top y})\big) \vert \leq 1 \\
        \mathrm{sgn} \big( \sigma_j\big(\frac{1}{\lambda}(\mathbf{A^\top y})\big) \big) & \text{otherwise}
    \end{cases},
\end{align*}
which yields the closed-form duality gap.

\textbf{Experiment Settings.} We generate a fixed pair of ${\mathbf{A}}$ and ${\mathbf{B}}$. Each entry of ${\mathbf{A}}$ and half of the columns in ${\mathbf{B}}$ follows the uniform distribution $\mc{U}_{[-1,1]}$ independently. Each entry of the variables ${\mathbf{X}}$ and ${\mathbf{Y}}$ is initialized independently from the distribution $\mc{U}_{[-1,1]}$. We take $m=600$, $n=300$, $p = 20$, $\lambda = 0.1$, $D = 0.05$. For federated learning, we simulate $M=100$ clients. For the gradient query of each client in each local update, we inject a Gaussian noise from $\mc{N}(0, \sigma^2)$.  All $M=100$ clients participate in each round; noise on each client is i.i.d. with $\sigma = 0.1$. We only tune the global step size $\eta^s$ and the local step size $\eta^c$. For all experiments, the parameters are searched from the combination of $\eta^s \in \{1, 3e-1, 1e-1, 3e-2, 1e-2\}$ and $\eta^c \in \{10, 3, 1, 3e-1, 1e-1, 3e-2, 1e-2, 3e-3, 1e-3\}$. We run each setting for 10 different random seeds and plot the mean and the standard deviation.

\section{Extended Literature Review} \label{appx:review}
\subsection{Federated Learning}
 In recent years, federated learning has received increasing attention in practice and theory. Earlier works in the field were known as ``parallel'' \citep{zinkevich2010parallelized} or ``local'' \citep{ijcai2018Convergence, stich2018local}, which are later recognized as the homogeneous case of FL where data across clients are assumed to be balanced and i.i.d. (independent and identically distributed). Generalizing with heterogeneity, federated learning was first termed in the algorithm Federated Averaging (FedAvg) \citep{mcmahan17Communication}, and it has been found appealing ever since in various applications \citep{li2020review}. On the theoretical front, \citep{stich2018local} provides the first convergence rate for FedAvg under the homogeneous setting. The rate has been improved with tighter analysis \citep{haddadpour2019local, khaled2020tighter, woodworth2020local, glasgow2022sharp} and acceleration techniques \citep{yuan2020federated, mishchenko2022proxskip}. Others also analyze FedAvg under heterogeneity \citep{haddadpour2019local, khaled2020tighter, woodworth2020minibatch} and non-i.i.d. data \citep{li2019convergence} or in light propose improvements \citep{karimireddy2020scaffold}. Recently, the idea of FL is further extended to higher-order methods \citep{bullins2021stochastic, gupta21LocalNewton, safaryan2022fednl}. Due to the page limit, we refer the readers to \citet{wang2021field} and \citet{kairouz2021advances} for more comprehensive reviews of FL. In the meantime, we point out that none of the work mentioned above covers saddle point problems or non-smooth composite or constrained problems. For distributed saddle point optimization and federated composite optimization, we defer to the following subsections.

\subsection{Saddle Point Optimization} The study of Saddle Point Optimization dates back to the very early gradient descent ascent \citep{Arrow1958Studies}. It was later improved by the important ideas of extra-gradient \citep{korpelevich1976extragradient} and optimism \citep{popov1980modification}. In light of these ideas, many algorithms were proposed for SPP \citep{Solodov1999AHA, nemirovski2004prox, nesterov2007dual, Chambolle2011AFP, mertikopoulos2018optimistic, jiang2022generalized}. Among them, in the convex-concave setting in particular, the most relevant and prominent ones are Nemirovski's mirror prox \cite{nemirovski2004prox} and Nesterov's dual extrapolation \cite{nesterov2007dual}. They generalize respectively Mirror Descent \citep{nemirovskij1983problem} and Dual Averaging \citep{nesterov2009primal} from convex optimization to monotone variational inequalities (VIs) which include SPP as one realization. Along with Tseng's Accelerated Proximal Gradient \citep{tseng2008accelerated}, they are the three methods that converge to an $\epsilon$-approximate solution in terms of duality gap at $\mc{O}(\frac{1}{T})$, the known best rate for a general convex-concave SPP \citep{ouyang2021lower, lin2020near}. Mirror prox inspired many papers \citep{antonakopoulos2019adaptive, chen2020efficient} and is later extended to the stochastic setting \citep{juditsky2011solving, mishchenko2020revisiting}, the higher-order setting \citep{bullins2022higher}, and even the composite setting \citep{he2015mirror}, whose introduction we defer to the review of composite optimization. Dual extrapolation is later extended to non-monotone VIs \citep{song2020optimistic}, yet its stochastic and composite versions are, to the best of our knowledge, not found.

From the perspective of distributed optimization, several works have made preliminary progress for smooth and unconstrained SPP in the Euclidean space. \citet{beznosikov2020distributed} investigate the distributed extra-gradient method under various conditions and provide upper and lower bounds under strongly-convex strongly-concave and non-convex non-concave assumptions. \citet{hou2021efficient} proposed FedAvg-S and SCAFFOLD-S based on FedAvg \citep{mcmahan17Communication} and SCAFFOLD \citep{karimireddy2020scaffold} for SPP, which achieves similar convergence rate to the distributed extra-gradient algorithm \citep{beznosikov2020distributed} under the strong-convexity-concavity assumption. The topic of distributed or federated saddle point optimization is also found in recent applications of interest, e.g. adversarial domain adaptation \citep{shen2023fedmm}. Yet, none of the existing works includes the study for SPP with constraints or composite possibly non-smooth regularization.

\subsection{Composite Optimization} Composite optimization has been an important topic due to its reflection of real-world complexities. Representative works include composite mirror descent \citep{duchi2010composite} and regularized dual averaging \citep{xiao2010dual, flammarion2017stochastic} that generalize mirror descent \citep{nemirovskij1983problem} and dual averaging \citep{nesterov2009primal} in the context of composite convex optimization. Composite saddle point optimization, in comparison, appears dispersedly in early-day problems in practice \citep{buades2005review, aujol2005dual}, often as a primal-dual reformulation of composite convex problems. Solving techniques such as smoothing \citep{nesterov2005smooth} and primal-dual splitting \citep{combettes2012primal} were proposed, and numerical speed-ups were studied \citep{he2015accelerating, he2016accelerated}, while systematic convergence analysis on general composite SPP came later in time \citep{he2015mirror, chambolle2016ergodic, jiang2022generalized}. Recently, \cite{tominin2021accelerated, borodich2022accelerated} also propose acceleration techniques for composite SPP.

Most related among them, the pioneering composite mirror prox (CoMP) \citep{he2015mirror} constructs auxiliary variables for the composite regularization terms as an upper bound and thus moves the non-smooth term into the problem domain. Observing that the gradient operator for the auxiliary variable is constant, CoMP operates ``as if'' there were no composite components at all \citep{he2015mirror}, and exhibits a $\mc{O}(\frac{1}{T})$ convergence rate that matches its smooth version \citep{nemirovski2004prox}. In this paper, we take a different approach that utilizes the generalized Bregman divergence and get the same rate for composite dual extrapolation.

For federated composite optimization, \cite{yuan2021federated} study Federated Mirror Descent, a natural extension of FedAvg that adapts to composite optimization under the convex setting. Along the way, they identified the ``curse of primal averaging'' specific to composite optimization in the federated learning paradigm, where the regularization imposed structure on the client models may no longer hold after server primal averaging. To resolve this issue, they further proposed Federated Dual Averaging which brings the averaging step to the dual space. On the less related constrained optimization topic,  \citet{tong2020federated} proposed a federated learning algorithm for nonconvex sparse learning under $\ell_0$ constraint. To the best of our knowledge, the field of federated learning for composite SPP remains blank, which we regard as the main focus of this paper.

\subsection{Other Tangentially Related Work} Parallel to federated learning, there is another line of work that studies \textit{decentralized optimization} or \textit{consensus optimization over networks}, in which machines communicate directly with each other based on their topological connectivity \citep{nedich2015convergence}. Classic algorithms mentioned previously are widely applied as well under this paradigm, for example, decentralized mirror descent \citep{rabbat2015multi} and decentralized (composite) dual averaging over networks \citep{duchi2011dual, liu2022decentralized}. Saddle point optimization has also been studied under this setting, including for proximal point-type methods \citep{liu2019decentralized} and extra-gradient methods \citep{rogozin2021decentralized, beznosikov2021distributed, beznosikov2022decentralized}. In particular, \citet{rogozin2021decentralized} studies decentralized ``mirror prox'' in the Euclidean space. We would like to point out that mirror prox in the Euclidean space reduces to vanilla extra-gradient methods. In addition, \citet{aybat2016primal, Xu2021Distributed} study the saddle point reformulation for composite convex objectives over decentralized networks, which essentially focus on composite convex optimization. In the general context of distributed learning of composite SPP, by the judgment of the authors, we came across no paper in decentralized optimization similar to ours. More importantly, decentralized optimization focuses on topics like time-varying network topology \citep{Kovalev2021LowerBA, kovalev2021adom} or gossip schema \citep{dimakis2006geographic}, which are fundamentally different from federated learning in terms of motivations, communication protocols, and techniques \citep{kairouz2021advances}.

For nonconvex-nonconcave saddle point problems, several federated learning methods have recently been proposed, including extra-gradient methods \citep{lee2021fast} and the Local Stochastic Gradient Descent Ascent (Local SGDA) \citep{sharma2022federated}. Yet we emphasize that our object of analysis is composite SPP with possibly non-smooth regularization, and as remarked by \citet{yuan2021federated}, non-convex optimization for composite possibly non-smooth functions is in itself intricate even for sequential optimization, involving additional assumptions and sophisticated algorithm design \citep{li2015global, bredies2015minimization}, let alone federated learning of SPP.  Thus we focus on convex-concave analysis in this paper.

\section{Additional Preliminaries, Definitions, and Remarks on Assumptions} \label{appx:additional}
In this section, we provide supplementary theoretical backgrounds for the algorithm and the convergence analysis of FeDualEx. We start by providing a more detailed introduction to the related algorithms, then list additional definitions necessary for the analysis. Before moving on to the main proof for FeDualEx, we state formally the assumptions made and provide additional remarks on the assumptions that better link them to their usage in the proof.

\subsection{Additional Preliminaries} \label{appx:prelim}
To make this paper as self-contained as possible, in this section, we provide a brief overview of mirror descent, dual averaging, and their advancement in saddle point optimization, i.e., mirror prox and dual extrapolation. More comprehensive introductions can be found in the original papers and in \citep{bubeck2015convex, CohenST21}. We slide into mirror descent from the simple and widely known projected gradient descent, namely vanilla gradient descent with constraint, therefore plus another projection of the updated sequence back to the feasible set.

\subsubsection{Mirror Descent and Dual Averaging} \label{appx:MD-DA}
We start by introducing projected gradient descent. Projected gradient descent first takes the gradient update, then projects the updated point back to the constraint by finding a feasible solution within the constraint that minimizes its Euclidean distance to the current point. The updating sequence is given below: $\forall t \in [T]$, $x_t \in \mc{X}$ whereas not necessarily for $x'_t$,
\begin{align*}
    x'_{t+1} &= x_t - \eta g(x_t) \\
    x_{t+1} &= \argmin_{x \in \mc{X}} \frac{1}{2} \nrm{x - x'_{t+1}}^2_2.
\end{align*}
\paragraph{Mirror Descent \citep{nemirovskij1983problem}.}  Mirror descent generalizes projected gradient descent to non-Euclidean space with the Bregman divergence \citep{BREGMAN1967200}. We provide the definition of the Bregman divergence below.
\begin{definition}[Bregman Divergence \citep{BREGMAN1967200}] \label{def:bregman} Let $h: \R^d \rightarrow \R \cup \{\infty\}$ be a prox function or a distance-generating function that is closed, strictly convex, and differentiable in $\mathbf{int \, dom} \, h$. The Bregman divergence for $x \in \mathbf{dom} \, h$ and $y \in \mathbf{int \, dom} \, h$ is defined to be
\begin{equation*}
    V^h_y(x) = h(x) - h(y) - \langle \nabla h(y), x-y \rangle.
\end{equation*}
\end{definition}
Mirror descent regards $\nabla h$ as a mirror map to the dual space, and follows the procedure below:
\begin{align*}
    \nabla h (x'_{t+1}) &= \nabla h(x_t) - \eta g(x_t) \\
    x_{t+1} &= \argmin_{x \in \mc{X}} V^h_{x'_{t+1}}(x).
\end{align*}
By choosing $h(\cdot) = \frac{1}{2} \Vert \cdot \Vert^2_2$ in the Euclidean space whose dual space is itself, mirror descent reduces to projected gradient descent.

Mirror descent can be presented from a proximal point of view, or in the online setting as in \citet{beck2003mirror}:
\begin{align*}
    x_{t+1} = \argmin_{x \in \mc{X}} \langle \eta g(x_t), x \rangle + V^h_{x_t}(x).
\end{align*}
 Such proximal operation with Bregman divergence is studied by others \citep{censor1992proximal}, and is recently represented by a neatly defined proximal operator \citep{CohenST21}.
\begin{definition}[Proximal Operator \citep{CohenST21}] \label{def:prox-operator} The Bregman divergence defined proximal operator is given by \begin{align*}
    \prox{}_{x'}^{h}(\cdot) \coloneqq \argmin_{x\in \mc{X}} \{ \langle \cdot, x \rangle + V^{h}_{x'}(x) \}.
\end{align*}
\end{definition}
In this spirit, the mirror descent algorithm can be written with one proximal operation: 
$$x_{t+1} = \prox{}_{x_t}^{h}(\eta g(x_t)).$$

\paragraph{Composite Mirror Descent \citep{duchi2010composite}.} Mirror descent was later generalized to composite convex functions, i.e., the ones with regularization. The key modification is to include the regularization term in the proximal operator, yet not linearize the regularization term, since it could be non-smooth and thus non-differentiable. The updating sequence is given by
\begin{align*}
    x_{t+1} = \argmin_{x \in \mc{X}} \langle \eta g(x_t), x \rangle + V^h_{x_t}(x) + \eta \psi(x).
\end{align*}
It can also be represented with a composite mirror map as in \citep{yuan2021federated}:
\begin{align*}
    x_{t+1} = \nabla (h+\eta \psi)^\ast (\nabla h (x_t) - \eta g(x_t)).
\end{align*}

\textbf{Dual Averaging \citep{nesterov2009primal}. } Compared with mirror descent, dual averaging moves the updating sequence to the dual space. The procedure of dual averaging is as follows \citep{bubeck2015convex}: 
\begin{align*}
    \nabla h (x'_{t+1}) &= \nabla h(x'_t) - \eta g(x_t) \\
    x_{t+1} &= \argmin_{x \in \mc{X}} V^h_{x'_{t+1}}(x),
\end{align*}
or equivalently as presented in \citep{nesterov2009primal} with the sequence of dual variables:
$\forall t \in [T]$, $x_t \in \mc{X}$, $\mu_t \in \mc{X}^\ast$,
\begin{align*}
    \mu_{t+1} &= \mu_t - \eta g(x_t) \\
    x_{t+1} &= \nabla h^\ast (\mu_{t+1}).
\end{align*}
This can be further simplified to 
\begin{align*}
    x_{t+1} &= \argmin_{x\in \mc{X}}\langle \eta \sum_{\tau=0}^t g(x_t), x \rangle + h(x).
\end{align*}

\paragraph{Composite Dual Averaging \citep{xiao2010dual}.} Around the same time as composite mirror descent, composite dual averaging, also known as regularized dual averaging, was proposed with a similar idea of including the regularization term in the proximal operator. As presented in the original paper \citep{xiao2010dual}:
\begin{align*}
    x_{t+1} = \argmin_{x \in \mc{X}} \langle \eta \sum_{\tau=0}^t g(x_\tau), x \rangle + \eta \beta_t h(x) + t \eta \psi(x),
\end{align*}
in which $\{\beta_t\}_{t \geq 1}$ is a non-negative and non-decreasing input sequence. \citet{flammarion2017stochastic} adopted the case with constant sequence $\beta_t = \frac{1}{\eta}$,
\begin{align*}
    x_{t+1} = \argmin_{x \in \mc{X}} \langle \eta \sum_{\tau=0}^t g(x_\tau), x \rangle + h(x) +  t \eta \psi(x),
\end{align*}
and equivalently with composite mirror map:
\begin{align*}
    \mu_{t+1} &= \mu_t - \eta g(x_t) \\
    x_{t+1} &= \nabla (h + t \eta \psi)^\ast (\mu_{t+1}),
\end{align*}
which is also presented in \citep{yuan2021federated}.

\subsubsection{Mirror Prox and Dual Extrapolation} \label{appx:MP-DE}
\paragraph{Mirror Prox \citep{nemirovski2004prox}.} Mirror prox generalizes the extra-gradient method to non-Euclidean space as mirror descent compared with projected gradient descent. It was proposed for variational inequalities (VIs), including SPP. We first present the corresponding Bregman divergence in the saddle point setting, whose definition was not included in detail in \citep{nemirovski2004prox} but was later more clearly stated in \citep{nesterov2007dual, shi2017bregman}.

\begin{definition} [Bregman Divergence for Saddle Functions \citep{nesterov2007dual}] \label{def:saddle-bregman}
Let $\ell: \mc{X} \times \mc{Y} \rightarrow \R \cup \{\infty\}$ be a distance-generating function that is closed, strictly convex, and differentiable in $\mathbf{int \, dom} \, \ell$. For $z = (x, y) \in \mc{Z} = \mc{X} \times \mc{Y}$, the function and its gradient are defined as
    \begin{align*}
        \ell(z) = h_1(x) + h_2(y), && \nabla\ell(z) = \left[ \begin{matrix} \nabla_x h_1(x) \\ \nabla_y h_2(y) \end{matrix} \right].
    \end{align*}
The Bregman divergence for $z = (x, y) \in \mathbf{dom} \, \ell$ and $z'=(x', y') \in \mathbf{int \, dom} \, \ell$ is defined to be
    \begin{align*}
        V_{z'}^\ell (z) \vcentcolon= \ell(z) - \ell(z') - \langle \nabla\ell(z'), z-z'\rangle.
    \end{align*}
Notice that our notion of $\ell$ is not a saddle function, slightly different from that in \citet{shi2017bregman}, but the Bregman divergence defined is the same as Eq. (6) in \citet{shi2017bregman} and Eq. (4.9) in \citet{nesterov2007dual}.
\end{definition}

Mirror prox can also be viewed as an extra-step mirror descent. Most intuitively, by introducing an intermediate variable $z_{t+1/2}$, its procedure is as follows:
\begin{align*}
    \nabla h (z'_{t+1/2}) &= \nabla h(z_t) - \eta g(z_t) \\
    z_{t+1/2} &= \argmin_{z \in \mc{Z}} V^h_{z'_{t+1/2}}(z) \\
    \nabla h (z'_{t+1}) &= \nabla h(z_t) - \eta g(z_{t+1/2}) \\
    z_{t+1} &= \argmin_{z \in \mc{Z}} V^h_{z'_{t+1}}(z).
\end{align*}
And it can be represented with the proximal operator in Definition \ref{def:prox-operator} as well. Following \citep{CohenST21}, $\forall t \in [T]$, $z_t, z_{t+1/2} \in \mc{Z}$,
\begin{align*}
    z_{t+1/2} &= \prox{}_{z_t}^{\ell}(\eta g(z_t)) \\
    z_{t+1} &= \prox{}_{z_t}^{\ell}(\eta g(z_{t+1/2})).
\end{align*}

\textbf{Dual Extrapolation \citep{nesterov2007dual}.} As in dual averaging, dual extrapolation moves the updating sequence of mirror prox to the dual space. Slightly different from a two-step dual averaging, dual extrapolation further initialize a fixed point in the primal space $\bar{z}$, and as presented in \citep{CohenST21}, its procedure is as follows: $\forall t \in [T]$, $z_t, z_{t+1/2} \in \mc{Z}$, $\omega_t \in \mc{Z}^\ast$,
\begin{align*}
    z_{t} &= \prox{}_{\bar{z}}^{\ell}(\omega_t) \\
    z_{t+1/2} &= \prox{}_{z_t}^{\ell}(\eta g(z_{t})) \\
    \omega_{t+1} &= \omega_t + \eta g(z_{t+1/2}).
\end{align*}
The updating sequence presented above is equivalent to that defined in the original paper \citep{nesterov2007dual}, simply replacing the $\argmax$ with $\argmin$, and the dual variables with its additive inverse in the dual space.

\subsection{Additional Definitions} \label{appx:def}

In this subsection, we list additional definitions involved in the theoretical analysis in subsequent sections.

\begin{definition}[Legendre function \citep{Rockafellar70}] A proper, convex, closed function $h:\mathbb{R}^d \rightarrow \mathbb{R} \cup \{\infty\}$ is called a Legendre function or a function of Legendre-type if  (a) $h$ is strictly convex; (b) $h$ is essentially smooth, namely h is differentiable on $\mathbf{int \ dom} \ h$, and $\nrm{\nabla h (x_t)} \rightarrow \infty$ for every sequence $\{x_t\}_{t=0}^\infty \subset \mathbf{int \ dom} \ h$ converging to a boundary point of $\mathbf{dom} \ h$ as $t \rightarrow \infty$.
\end{definition}

\begin{definition}[Convex Conjugate or Legendre–Fenchel Transformation \citep{boyd2004convex}] \label{def:conjugate} The convex conjugate of a function $h$ is defined as
\begin{align*}
    h(s) = \sup_z \{ \langle s, z \rangle - h(z) \}.
\end{align*}
\end{definition}

\begin{definition} [Differentiability of the conjugate of strictly convex function (Chapter E, Theorem 4.1.1 in \citet{hiriart2004fundamentals})] \label{def:conjugate-h} For a strictly convex function $h$, $\mathbf{int \, dom} \, h^\ast \neq \varnothing$ and $h^\ast$ is continuously differentiable on $\mathbf{int \, dom} \, h^\ast$, with gradient defined as:
    \begin{align}
        \nabla h^\ast (s) = \argmin_z \{ \langle -s, z \rangle + h(z) \}
    \end{align}
\end{definition}

\subsection{Formal Assumptions and Remarks} \label{appx:asm}
In this subsection, we state the assumptions formally and provide additional remarks that may help in understanding the theoretical analysis.
\begin{assumption}[Assumptions on the objective function] \label{asm:obj-func} 
For the composite saddle function $\phi(z) = f(x,y) + \psi_1(x) - \psi_2(y) = \frac{1}{M}\sum_{m=1}^M f_m(x,y) + \psi_1(x) - \psi_2(y)$, we assume that
    \begin{itemize}
        \item [a.] (Local Convexity of $f$) $\forall m \in [M]$, $f_m(x, y)$ is convex in $x$ and concave in $y$.
        \item [b.] (Convexity of $\psi$) $\psi_1(x)$ is convex in $x$, and $\psi_2(y)$ is convex in $y$.
    \end{itemize}
\end{assumption}

\begin{assumption}[Assumptions on the gradient operator] \label{asm:grad-op} 
For $f$ in the objective function, its gradient operator is given by $g= \left[ \begin{smallmatrix}\nabla_x f \\ -\nabla_yf\end{smallmatrix}\right]$. By the linearity of gradient operators, $g = \frac{1}{M}\sum_{m=1}^Mg_m$, and we assume that
    \begin{itemize}
        \item [a.] (Local Lipschitzness of $g$) $\forall m \in [M]$, $g_m(z) = \left[ \begin{smallmatrix}\nabla_x f_m(x, y) \\ -\nabla_yf_m(x, y)\end{smallmatrix}\right]$ is $\beta$-Lipschitz:
        \begin{align*}
            \nrm{g_m(z) - g_m(z')}_{\ast} &\leq \beta \nrm{z - z'}
        \end{align*}
        \item [b.] (Local Unbiased Estimate and Bounded Variance) For any client $m \in [M]$, the local gradient queried by some local random sample $\xi^m$ is unbiased and also bounded in variance, i.e., $\mathbb{E}_\xi [g_m(z^m;\xi^m)] = g_m(z^m)$, and
        \begin{align*}
            \mathbb{E}_\xi \big[\nrm{g_m(z^m;\xi^m) - g_m(z^m)}_\ast^2 \big] \leq \sigma^2
        \end{align*}
        \item [c.] (Bounded Gradient) $\forall m \in [M]$,
        \begin{align*}
            \nrm{g_m(z^m;\xi^m)}_\ast \leq G
        \end{align*}
    \end{itemize}
\end{assumption}

\begin{assumption} [Assumption on the distance-generating function] \label{asm:strong-convex}
    The distance-generating function $h$ is a Legendre function that is 1-strongly convex, i.e., $\forall x, y$,
    \begin{align*}
        h(y) - h(x) - \langle \nabla h(x), y-x \rangle \geq \frac{1}{2} \nrm{y-x}^2.
    \end{align*}
\end{assumption}

\begin{assumption} \label{asm:compactness}
    The domain of the optimization problem $\mc{Z}$ is compact in terms of Bregman Divergence, i.e., $\forall z, z' \in \mc{Z}$, $V_{z'}^{\ell}(z) \leq B$.
\end{assumption}

\begin{remark} \label{rmk:convexity}
    An immediate result of Assumption \ref{asm:obj-func}a is that, $\forall z = (x,y), z' = (x', y') \in \mc{Z}$
    \begin{align*}
         f(x', y') - f(x, y') &\leq \langle \nabla_x f (x', y'), x' - x\rangle, \\
        f(x', y) - f(x', y') &\leq \langle - \nabla_y f (x', y'), y' - y\rangle.
    \end{align*}
    Summing them up, 
    \begin{align*}
        f(x', y) - f(x, y') \leq \langle g(z'), z' - z \rangle.
    \end{align*}
\end{remark}

\begin{remark} \label{rmk:martingale}For any sequence of i.i.d. random variables $\xi^m_{0,0}, \xi^m_{0,1/2}, ... , \xi^m_{1,0}, \xi^m_{1,1/2},... , \xi^m_{r,k},  \xi^m_{r,k+1/2}$, let $\mc{F}_{r,k}$ denote the $\sigma$-field generated by the set $\{\xi^m_{j,t}: \forall m \in [M] \ and \ ((j = r, t \leq k) \ or \ (j < r, k \in \{0, 1/2, ... ,K-1, K-1/2\})) \}$. Then any $\xi^m_{r,k}$ is independent of $\mc{F}_{r,k-1/2}$, and Assumption \ref{asm:grad-op}b implies
\begin{align*}
    \mathbb{E}_{\mc{F}_{r,k}} \big[\nrm{g_m(z^m_{r,k};\xi^m_{r,k}) - g_m(z^m_{r,k})}_\ast^2 \mid \mc{F}_{r, k-1/2} \big] \leq \sigma^2.
\end{align*}
\end{remark}

\begin{remark} [Corollary 23.5.1. and Theorem 26.5. in \citet{Rockafellar70}] \label{rmk:bijection} For a closed convex (not necessarily differentiable) function $h$,   $\partial h$ is the inverse of $\partial h^\ast$ in the sense of multi-valued mappings, i.e., $z \in \partial h^\ast(\varsigma)$ if and only if $\varsigma \in \partial h(z)$. Furthermore, if $h$ is of Legendre-type, meaning it is essentially strictly convex and essentially smooth, then $\partial h$ yields a well-defined $\nabla h$ that acts as a bijection, i.e., $(\nabla h)^{-1} = \nabla h^\ast$.
\end{remark}

\begin{remark} \label{rmk:extension} Assumption \ref{asm:strong-convex} and Remark \ref{rmk:bijection} also trivially hold for $\ell$ from Definition \ref{def:saddle-bregman} in the saddle point setting, and eventually, the generalized distance-generating function $\ell_t$ from Definition \ref{def:generalized-bregman}. Due to the strong convexity of $\ell_t$, $\nabla\ell_t^\ast$ is well-defined as noted in Definition \ref{def:conjugate-h}. Together with the potential non-smoothness of $\ell_t$, Remark \ref{rmk:bijection} implies that $z = \nabla\ell_t^\ast(\varsigma)$ if and only if $\varsigma \in \partial\ell_t(z)$.
\end{remark}

\section{Additional Technical Lemmas} \label{appx:tech-lemmas}
In this section, we list some technical lemmas that are referenced in the proofs of the main theorem and its helping lemmas.
\setcounter{lemma}{3}
\begin{lemma}[Jensen's inequality] \label{lem:jensen}
    For a convex function $\varphi(x)$, variables $x_1, ..., x_n$ in its domain, and positive weights $a_1, ...,a_n$,
    \begin{align*}
        \varphi\Big( \frac{\sum_{i=1}^n a_i x_i}{\sum_{i=1}^n a_i} \Big) \leq \frac{\sum_{i=1}^n a_i\varphi (x_i)}{\sum_{i=1}^n a_i}, 
    \end{align*}
    and the inequality is reversed if $\varphi(x)$ is concave.
\end{lemma}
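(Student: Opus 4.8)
The plan is to reduce to a statement about convex combinations and then induct on $n$. First I would set $S = \sum_{i=1}^n a_i > 0$ and define normalized weights $\lambda_i = a_i/S$, so that $\lambda_i > 0$ and $\sum_{i=1}^n \lambda_i = 1$. Dividing both sides of the claimed inequality by $S$ in the appropriate places, the statement becomes the familiar form
\begin{align*}
    \varphi\Big( \sum_{i=1}^n \lambda_i x_i \Big) \leq \sum_{i=1}^n \lambda_i \varphi(x_i),
\end{align*}
so it suffices to establish this for every finite convex combination. The base case $n=1$ is trivial, and the case $n=2$ is exactly the defining inequality of convexity, namely $\varphi(\lambda_1 x_1 + \lambda_2 x_2) \leq \lambda_1 \varphi(x_1) + \lambda_2 \varphi(x_2)$ whenever $\lambda_1 + \lambda_2 = 1$ and $\lambda_1, \lambda_2 \geq 0$.

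For the inductive step, I would assume the inequality holds for any convex combination of $n-1$ points and consider $n$ points. The idea is to peel off the last term: write $\sum_{i=1}^n \lambda_i x_i = (1-\lambda_n)\, \bar{x} + \lambda_n x_n$, where $\bar{x} = \sum_{i=1}^{n-1} \frac{\lambda_i}{1-\lambda_n} x_i$ is itself a convex combination of the first $n-1$ points with renormalized weights $\mu_i = \lambda_i/(1-\lambda_n)$ satisfying $\sum_{i=1}^{n-1}\mu_i = 1$ and $\mu_i \geq 0$. Applying the $n=2$ convexity inequality to the point $(1-\lambda_n)\bar{x} + \lambda_n x_n$ gives $\varphi(\sum_{i=1}^n \lambda_i x_i) \leq (1-\lambda_n)\varphi(\bar{x}) + \lambda_n \varphi(x_n)$, and then applying the induction hypothesis to $\varphi(\bar{x})$ yields $\varphi(\bar{x}) \leq \sum_{i=1}^{n-1} \mu_i \varphi(x_i)$. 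Substituting back and using $(1-\lambda_n)\mu_i = \lambda_i$ recovers $\sum_{i=1}^n \lambda_i \varphi(x_i)$, completing the induction. The concave case follows at once by applying the convex result to $-\varphi$ and reversing the inequality.

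The argument is almost entirely routine, so the only real point requiring care is the bookkeeping in the inductive step: one must verify that the renormalized weights $\mu_i$ again form a genuine convex combination and, more importantly, that the intermediate point $\bar{x}$ lies in the domain of $\varphi$ so that $\varphi(\bar{x})$ is defined. The latter is guaranteed because the domain of a convex function is convex, so any convex combination of the $x_i$ remains in the domain. A degenerate subtlety to handle is the case $\lambda_n = 1$ (equivalently all other weights zero), which falls under the $n=1$ base case and must be excluded before forming $\mu_i$; since the weights here are assumed strictly positive, $1-\lambda_n > 0$ and the division is always valid.
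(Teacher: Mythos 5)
Your proof is correct. Note that the paper itself states this lemma without proof, listing it among classical technical facts (alongside Cauchy--Schwarz and Young's inequality) in its appendix of auxiliary lemmas, so there is no paper proof to compare against; your argument is the standard one: normalize the weights to reduce to convex combinations, use the two-point definition of convexity as the base case, and induct by peeling off the last point, with the concave case following by applying the result to $-\varphi$. You also correctly handle the two points where such proofs are usually sloppy --- the validity of the renormalization (guaranteed here since the weights are strictly positive, so $1-\lambda_n>0$ for $n\geq 2$) and the fact that the intermediate point $\bar{x}$ lies in the domain of $\varphi$, which requires convexity of that domain.
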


\begin{lemma}[Cauchy-Schwarz inequality \citep{strang2006linear}] \label{lem:cauchy-schwarz}
    For any $x$ and $y$ in an inner product space,
    \begin{align*}
        \langle x, y\rangle \leq \nrm{x}\nrm{y}.
    \end{align*}
\end{lemma}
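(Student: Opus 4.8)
The plan is to prove this classical result by the standard ``completing the square'' (discriminant) argument, using only bilinearity, symmetry, and positive-definiteness of the inner product, together with the fact that the relevant norm is the one it induces, $\nrm{v}^2 = \langle v, v\rangle$.

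First I would dispose of the degenerate case $y = 0$: then both sides vanish, since $\langle x, 0\rangle = 0$ and $\nrm{y} = 0$, so the inequality holds with equality. Henceforth assume $y \neq 0$, hence $\nrm{y} > 0$. Next, for an arbitrary scalar $t \in \R$, positive-definiteness gives $0 \leq \nrm{x - ty}^2 = \langle x - ty, x - ty\rangle$. Expanding by bilinearity and symmetry yields the scalar quadratic $q(t) = \nrm{x}^2 - 2t\langle x, y\rangle + t^2\nrm{y}^2 \geq 0$, valid for every $t$. I would then specialize to the minimizing choice $t = \langle x, y\rangle / \nrm{y}^2$ (equivalently, invoke that a non-negative quadratic has non-positive discriminant), which gives $\nrm{x}^2 - \langle x, y\rangle^2 / \nrm{y}^2 \geq 0$, i.e. $\langle x, y\rangle^2 \leq \nrm{x}^2 \nrm{y}^2$. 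Taking square roots delivers $\abs{\langle x, y\rangle} \leq \nrm{x}\nrm{y}$, and dropping the absolute value on the left yields the stated $\langle x, y\rangle \leq \nrm{x}\nrm{y}$.

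There is no genuine obstacle here, as this is a textbook inequality whose entire content lies in the single non-negative quadratic $q(t)$; the only point deserving care is the interpretation of the norm. Throughout the rest of the paper $\nrm{\cdot}$ may denote an arbitrary norm, but in this lemma (and its cited source \citep{strang2006linear}) the relevant norm is the inner-product-induced one, $\nrm{v} = \sqrt{\langle v, v\rangle}$, which is exactly what legitimizes the expansion $\nrm{x - ty}^2 = \nrm{x}^2 - 2t\langle x, y\rangle + t^2\nrm{y}^2$. I would flag this explicitly so the statement is not conflated with the Hölder-type pairing $\langle x, y\rangle \leq \nrm{x}\nrm{y}_\ast$ that governs the dual-norm estimates elsewhere in the analysis.
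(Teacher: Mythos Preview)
Your proof is the standard discriminant argument and is correct. The paper itself does not supply a proof of this lemma: it is listed among the ``Additional Technical Lemmas'' with only a citation to \citep{strang2006linear} and no proof environment, so there is nothing to compare against beyond noting that your argument is the textbook one the citation points to.

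Your closing caveat is well taken and worth keeping. As you observe, the paper's global convention is that $\nrm{\cdot}$ is an arbitrary norm with dual $\nrm{\cdot}_\ast$, and indeed every invocation of Lemma~\ref{lem:cauchy-schwarz} in the proofs (e.g.\ bounding $A4$ in Lemma~\ref{lem:main-grad}, or the first step of Lemma~\ref{lem:gradient-inner}) actually uses the dual-norm pairing $\langle u,v\rangle \le \nrm{u}_\ast\nrm{v}$ rather than the inner-product-induced bound $\langle u,v\rangle \le \nrm{u}\nrm{v}$. So while your proof establishes exactly the lemma as stated, the version the paper \emph{uses} is the definition of the dual norm, not Cauchy--Schwarz proper; flagging that distinction, as you do, is the right call.
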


\begin{lemma}[Young's inequality (Lemma 1.45. in \citet{sofonea2009variational})] \label{lem:young} Let $p, q \in \mathbb{R}$ be two conjugate exponents, that is $1 < p < \infty$, and $\frac{1}{p} + \frac{1}{q} = 1$. Then $\forall a, b \geq 0$,
\begin{align*}
    ab \leq \frac{a^p}{p} + \frac{b^q}{q}.
\end{align*}
\end{lemma}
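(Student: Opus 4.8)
The plan is to exploit the fact that $\frac{1}{p}$ and $\frac{1}{q}$ are positive weights summing to one, which recasts the bound as a statement about a weighted average and makes it directly amenable to Jensen's inequality (Lemma~\ref{lem:jensen}) applied to the concave logarithm. First I would dispose of the degenerate cases: if either $a = 0$ or $b = 0$, then the left-hand side $ab$ equals $0$ while the right-hand side $\frac{a^p}{p} + \frac{b^q}{q}$ is a sum of nonnegative terms, so the inequality holds trivially. It therefore suffices to treat $a, b > 0$, where all logarithms appearing below are well-defined.

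For $a, b > 0$ I would apply the concave version of Jensen's inequality with the strictly concave function $\varphi = \log$, the two points $x_1 = a^p$ and $x_2 = b^q$, and the positive weights $a_1 = \frac{1}{p}$, $a_2 = \frac{1}{q}$. Since $\frac{1}{p} + \frac{1}{q} = 1$, the normalizing denominator $\sum_i a_i$ is exactly $1$, and Lemma~\ref{lem:jensen} yields
\begin{align*}
    \log\!\Big( \tfrac{1}{p} a^p + \tfrac{1}{q} b^q \Big) \;\geq\; \tfrac{1}{p}\log\!\big(a^p\big) + \tfrac{1}{q}\log\!\big(b^q\big) = \log a + \log b = \log(ab),
\end{align*}
where the middle equality uses $\log(a^p) = p\log a$ and $\log(b^q) = q\log b$. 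Because $\log$ is strictly increasing on the positive reals, I can then invoke monotonicity to strip the logarithm from both sides and conclude $\frac{a^p}{p} + \frac{b^q}{q} \geq ab$, which is precisely the claim.

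I do not expect a substantial obstacle here; the only points requiring care are handling the boundary cases separately and verifying that the hypotheses of Jensen's inequality are met, namely that the weights are positive (immediate from $p, q > 1$) and that $\log$ is concave on the positive reals where $a^p$ and $b^q$ live. If one prefers an argument that avoids Jensen entirely, an equivalent route is a one-variable calculus computation: fix $b > 0$ and minimize $\varphi(a) = \frac{a^p}{p} + \frac{b^q}{q} - ab$ over $a \geq 0$. The stationarity condition $\varphi'(a) = a^{p-1} - b = 0$ gives the critical point $a = b^{1/(p-1)}$, and combining $a^{p-1} = b$ with the identity $q = \frac{p}{p-1}$ (which follows from $\frac{1}{p} + \frac{1}{q} = 1$) shows $a^p = ab = b^q$, whence $\varphi$ evaluates to $\frac{b^q}{p} + \frac{b^q}{q} - b^q = 0$ at this point. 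Convexity of $\varphi$, certified by $\varphi''(a) = (p-1)a^{p-2} \geq 0$ for $p > 1$, then identifies this as a global minimum, so $\varphi(a) \geq 0$ for all $a \geq 0$, which is again the desired inequality.
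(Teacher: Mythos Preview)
Your proof is correct: the reduction to Jensen's inequality with the concave function $\log$ and weights $\tfrac{1}{p},\tfrac{1}{q}$ is the standard argument, and your handling of the boundary cases $a=0$ or $b=0$ and the alternative one-variable calculus route are both sound. There is nothing to compare against, however, because the paper does not supply its own proof of this lemma; it merely records the statement and cites \citet{sofonea2009variational} for it, treating it as a known technical fact alongside Jensen, Cauchy--Schwarz, and the AM--QM inequality. So your proposal actually goes beyond what the paper offers.
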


\begin{lemma}[AM-QM inequality] \label{lem:l1-l2}
    For any set of positive integers $x_1, ... , x_n$,
    \begin{align}
        \big(\sum_{i= 1}^n x_i \big)^2 \leq n \sum_{i= 1}^n x_i^2.
    \end{align}
\end{lemma}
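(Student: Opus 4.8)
The plan is to obtain this inequality directly from the Cauchy--Schwarz inequality (Lemma~\ref{lem:cauchy-schwarz}), which is already in hand. The key observation is that the left-hand side $\big(\sum_{i=1}^n x_i\big)^2$ is simply the square of the Euclidean inner product between the all-ones vector and the vector $x=(x_1,\dots,x_n)$, so the claim is exactly the special case of Cauchy--Schwarz in which one of the two vectors is $\mathbf{1}=(1,\dots,1)$.

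Concretely, I would work in $\R^n$ with the standard inner product, setting $\mathbf{1}=(1,1,\dots,1)$ and $x=(x_1,\dots,x_n)$, so that $\langle \mathbf{1}, x\rangle = \sum_{i=1}^n x_i$ and $\nrm{x}_2 = \big(\sum_{i=1}^n x_i^2\big)^{1/2}$. Applying Lemma~\ref{lem:cauchy-schwarz} to $\mathbf{1}$ and $x$ yields $\sum_{i=1}^n x_i = \langle \mathbf{1}, x\rangle \le \nrm{\mathbf{1}}_2\,\nrm{x}_2 = \sqrt{n}\,\big(\sum_{i=1}^n x_i^2\big)^{1/2}$. Since both sides of this bound are nonnegative, I may square to conclude $\big(\sum_{i=1}^n x_i\big)^2 \le n\sum_{i=1}^n x_i^2$, which is precisely the assertion.

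A fully self-contained alternative that sidesteps Cauchy--Schwarz is to verify the algebraic identity $n\sum_{i=1}^n x_i^2 - \big(\sum_{i=1}^n x_i\big)^2 = \sum_{1\le i<j\le n}(x_i - x_j)^2$; I would establish it by expanding $\big(\sum_i x_i\big)^2 = \sum_i x_i^2 + 2\sum_{i<j} x_i x_j$ and counting that each $x_i^2$ contributes to exactly $n-1$ of the pairwise squared differences. The right-hand side is a sum of squares, hence nonnegative, which gives the inequality at once and, as a bonus, shows equality holds exactly when all the $x_i$ coincide. There is no genuine obstacle here: the only point meriting care is the legitimacy of the squaring step, which is automatic because both sides of the Cauchy--Schwarz bound are nonnegative. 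I would also remark that although the statement is phrased for positive entries, the displayed identity makes clear that the inequality in fact holds verbatim for arbitrary real $x_i$.
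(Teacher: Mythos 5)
Your proposal is correct, and there is in fact nothing in the paper to compare it against: Lemma~\ref{lem:l1-l2} is stated in the technical-lemmas appendix as a standard fact with no proof given. Your primary derivation---applying Lemma~\ref{lem:cauchy-schwarz} to the all-ones vector and $x=(x_1,\dots,x_n)$, then squaring the resulting nonnegative bound---is the canonical argument and uses only a lemma the paper already states, so it would slot in seamlessly. Your alternative via the identity $n\sum_{i=1}^n x_i^2 - \big(\sum_{i=1}^n x_i\big)^2 = \sum_{1\le i<j\le n}(x_i-x_j)^2$ is also verified correctly and has the minor advantage of exhibiting the equality case. Your closing remark is accurate and worth noting: the hypothesis that the $x_i$ be positive integers is superfluous (the inequality holds for arbitrary reals), and indeed the paper applies the lemma to quantities such as norms of gradient differences, which are nonnegative reals rather than integers, so the statement as written is slightly mismatched with its own usage.
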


\begin{lemma}[Lemma 2.3 in \citet{jiang2022generalized}] \label{lem:regret} Suppose Assumption \ref{asm:obj-func} and \ref{asm:grad-op} hold, then $\forall z = (x,y)$, $z_1, ... , z_T \in \mc{Z}$ and $\theta_1, ... , \theta_T \geq 0$ with $\sum_{t=1}^T \theta_t = 1$, we have
\begin{align*}
    \phi(\sum_{t=1}^T \theta_t x_t, y) - \phi(x, \sum_{t=1}^T \theta_t y_t) \leq \sum_{t = 1}^T \theta_t [\langle g(z_t), z_t - z \rangle + \psi(z_t) - \psi(z)],
\end{align*}
in which $\psi(z) = \psi_1(x) + \psi_2(y)$.
\end{lemma}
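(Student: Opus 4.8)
The plan is to separate the duality-gap expression into its smooth part, governed by $f$, and its composite parts, governed by $\psi_1$ and $\psi_2$, and to bound each independently before recombining. Writing $\hat{x} = \sum_{t=1}^T \theta_t x_t$ and $\hat{y} = \sum_{t=1}^T \theta_t y_t$, the first step is to expand by the definition of $\phi$ in \eqref{eq:obj}, giving
\begin{align*}
\phi(\hat{x}, y) - \phi(x, \hat{y}) = \big[ f(\hat{x}, y) - f(x, \hat{y}) \big] + \big[ \psi_1(\hat{x}) - \psi_1(x) \big] + \big[ \psi_2(\hat{y}) - \psi_2(y) \big].
\end{align*}

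The composite terms are the easy ones. Since $\sum_{t=1}^T \theta_t = 1$ and $\psi_1, \psi_2$ are convex by Assumption \ref{asm:obj-func}b, I would apply Jensen's inequality (Lemma \ref{lem:jensen}) to each, obtaining $\psi_1(\hat{x}) \leq \sum_t \theta_t \psi_1(x_t)$ and $\psi_2(\hat{y}) \leq \sum_t \theta_t \psi_2(y_t)$. Subtracting $\psi_1(x)$ and $\psi_2(y)$ respectively, and again using $\sum_t \theta_t = 1$ to absorb these constants into the weighted sums, the two composite brackets add up to at most $\sum_t \theta_t [\psi(z_t) - \psi(z)]$, since $\psi(z_t) = \psi_1(x_t) + \psi_2(y_t)$ and $\psi(z) = \psi_1(x) + \psi_2(y)$. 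This is precisely the composite contribution in the claimed bound.

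The smooth part requires two ingredients. First, convexity of $f$ in $x$ and concavity in $y$ again license Jensen's inequality, but now applied to each argument of $f$ with the other held fixed: $f(\hat{x}, y) \leq \sum_t \theta_t f(x_t, y)$ from convexity in the first coordinate, and $f(x, \hat{y}) \geq \sum_t \theta_t f(x, y_t)$ from concavity in the second. Combining these (the second inequality flips sign upon subtraction) yields $f(\hat{x}, y) - f(x, \hat{y}) \leq \sum_t \theta_t [ f(x_t, y) - f(x, y_t) ]$. Second, I would invoke Remark \ref{rmk:convexity}, which packages the first-order convexity-concavity inequalities of $f$ into the single bound $f(x_t, y) - f(x, y_t) \leq \langle g(z_t), z_t - z\rangle$ for each $t$ (instantiating that remark with $z' = z_t$ and the fixed reference point $z = (x,y)$); applied termwise this controls the smooth part by $\sum_t \theta_t \langle g(z_t), z_t - z\rangle$. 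Adding the smooth and composite bounds gives the lemma.

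The argument is essentially a bookkeeping exercise combining Jensen's inequality with the per-iterate gradient inequality, so I do not anticipate a serious obstacle. The only point requiring care is the directionality of the two Jensen applications in the smooth part — the convex argument $x$ produces an upper bound while the concave argument $y$ produces a lower bound, and one must track how these combine after subtraction — together with the observation that Remark \ref{rmk:convexity} is exactly the device that converts function-value differences into the inner-product form $\langle g(z_t), z_t - z\rangle$ that the outer convergence analysis (e.g., Theorem \ref{thm:main}) ultimately telescopes against the Bregman-divergence terms.
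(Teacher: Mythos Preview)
Your proposal is correct and follows essentially the same approach as the paper: both combine Jensen's inequality with the per-iterate gradient inequality of Remark \ref{rmk:convexity}. The only cosmetic difference is ordering --- the paper first bounds $\phi(x_t,y)-\phi(x,y_t)$ per iterate and then applies Jensen directly to $\phi$, whereas you expand into the $f$ and $\psi$ pieces first and apply Jensen to each separately --- but the underlying argument is identical.
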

\begin{proof} 
    For $\psi(z) = \psi_1(x) + \psi_2(y)$,
   \begin{align*}
    \phi(x_t, y) - \phi(x, y_t) &= f(x_t, y) + \psi_1(x_t) - \psi_2(y) - f(x, y_t) - \psi_1(x) + \psi_2(y_t) \\
    &= f(x_t, y) - f(x, y_t) + \psi(z_t) - \psi(z) \\
    &\leq \langle g(z_t), z_t - z \rangle + \psi(z_t) - \psi(z),
\end{align*} 
where the inequality holds by convexity-concavity of $f(x,y)$, i.e. Remark \ref{rmk:convexity}. Then sum the inequality over $t = 1, ..., T$,
\begin{align*}
    \sum_{t=1}^T \phi(\theta_t x_t, y) - \sum_{t=1}^T\phi(x,  \theta_t y_t) \leq \sum_{t=1}^T \big[\langle g(z_t), z_t - z \rangle + \psi(z_t) - \psi(z)\big].
\end{align*}
Finally, by Jensen's inequality in Lemma \ref{lem:jensen},
\begin{align*}
     \sum_{t=1}^T \phi(\theta_t x_t, y) \geq  \phi\Big(\sum_{t=1}^T\theta_t x_t, y\Big), && \sum_{t=1}^T\phi(x,  \theta_t y_t) \leq \phi\Big(x,  \sum_{t=1}^T\theta_t y_t\Big),
\end{align*}
which completes the proof. 
\end{proof}

\begin{lemma}[Theorem 4.2.1 in \citet{hiriart2004fundamentals}] \label{lem:conjugate-smooth} The conjugate of an $\alpha$-strongly convex function is $\frac{1}{\alpha}$-smooth. That is, for $h$ that is strongly convex with modulus $\alpha > 0$, $\forall x, x'$,
\begin{align*}
    \nrm{\nabla h^\ast(x) - \nabla h^\ast(x')} \leq \frac{1}{\alpha} \nrm{x - x'}.
\end{align*}
    
\end{lemma}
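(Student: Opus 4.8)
The plan is to prove the claim through the standard duality between strong convexity and smoothness, relying on the subgradient inversion recorded in Remark~\ref{rmk:bijection}. First I would fix $x, x'$ in the dual space and set $u = \nabla h^\ast(x)$ and $u' = \nabla h^\ast(x')$. By Remark~\ref{rmk:bijection}, since $h$ is of Legendre type, this is equivalent to $x \in \partial h(u)$ and $x' \in \partial h(u')$, which lets me treat $x$ and $x'$ as (sub)gradients of $h$ at $u$ and $u'$ respectively. Definition~\ref{def:conjugate-h} guarantees $\nabla h^\ast$ is well-defined so these points are unambiguous.

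Next I would invoke the $\alpha$-strong convexity of $h$ (Assumption~\ref{asm:strong-convex}) at each of the two points, using the corresponding subgradient in place of $\nabla h$:
\begin{align*}
    h(u') &\geq h(u) + \langle x, u' - u \rangle + \tfrac{\alpha}{2}\nrm{u' - u}^2, \\
    h(u) &\geq h(u') + \langle x', u - u' \rangle + \tfrac{\alpha}{2}\nrm{u - u'}^2.
\end{align*}
Adding the two inequalities cancels the function values and yields the co-coercivity bound $\langle x - x', u - u' \rangle \geq \alpha \nrm{u - u'}^2$.

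I would then bound the left-hand side from above by Cauchy--Schwarz (Lemma~\ref{lem:cauchy-schwarz}), obtaining $\langle x - x', u - u' \rangle \leq \nrm{x - x'}_\ast \nrm{u - u'}$, so that $\alpha \nrm{u - u'}^2 \leq \nrm{x - x'}_\ast \nrm{u - u'}$. Dividing through by $\nrm{u - u'}$, with the case $u = u'$ being immediate, produces $\nrm{\nabla h^\ast(x) - \nabla h^\ast(x')} \leq \tfrac{1}{\alpha}\nrm{x - x'}_\ast$, the desired $\tfrac1\alpha$-Lipschitzness of $\nabla h^\ast$.

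The argument carries no deep obstacle; it is a textbook convex-analytic fact. The only points requiring care are (i) the rigorous passage from $u = \nabla h^\ast(x)$ to $x \in \partial h(u)$, which is exactly what Remark~\ref{rmk:bijection} supplies for Legendre functions, and (ii) the primal/dual norm bookkeeping---$\nabla h^\ast$ maps into the primal space, so its differences are measured in $\nrm{\cdot}$, while $x - x'$ lives in the dual space and is measured in $\nrm{\cdot}_\ast$. (The statement as written uses $\nrm{\cdot}$ on both sides, which tacitly identifies the two norms, as in the Euclidean case where the norm is self-dual.)
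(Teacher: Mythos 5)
Your proof is correct, but note that the paper itself does not prove this lemma at all: it is imported wholesale as Theorem 4.2.1 of \citet{hiriart2004fundamentals}, so there is no internal argument to compare against. What you supply is the standard self-contained derivation, and every step checks out: the passage from $u = \nabla h^\ast(x)$ to $x \in \partial h(u)$ is exactly what Remark \ref{rmk:bijection} licenses for Legendre functions; adding the two strong-convexity inequalities (valid with subgradients in place of $\nabla h$, which is the usual characterization of strong convexity for possibly nondifferentiable functions) gives the co-coercivity bound $\langle x - x', u - u'\rangle \geq \alpha \nrm{u - u'}^2$; and the division by $\nrm{u-u'}$ with the degenerate case handled separately is clean. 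Two small bookkeeping remarks. First, the inequality $\langle x - x', u - u'\rangle \leq \nrm{x - x'}_\ast \nrm{u - u'}$ is not literally Lemma \ref{lem:cauchy-schwarz} (which is stated for an inner product space with a single norm); it is the generalized Cauchy--Schwarz/H\"older inequality that follows directly from the definition of the dual norm, which is how the paper itself uses it elsewhere (e.g., in Lemma \ref{lem:gradient-inner}), so you should cite the dual-norm definition rather than that lemma. Second, your observation about the norms is exactly right and is in fact a sharpening of the statement as printed: the correct conclusion is $\nrm{\nabla h^\ast(x) - \nabla h^\ast(x')} \leq \frac{1}{\alpha}\nrm{x - x'}_\ast$ with the dual norm on the right, which is also the form in which the paper actually applies the lemma (in Lemma \ref{lem:client-server-z}, where the bound is against $\nrm{\cdot}_\ast$ of dual-variable differences). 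So your argument both fills in the omitted proof and fixes a slight imprecision in the lemma's wording.
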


\begin{lemma}[Lemma 2 in \citet{flammarion2017stochastic}] \label{lem:bregman-larger} Generalized Bregman divergence upper-bounds the Bregman divergence. That is, under Assumption \ref{asm:obj-func} and \ref{asm:strong-convex}, $\forall x \in \mathbf{dom} \ h$, $\forall \mu' \in \mathbf{int \ dom} \ h_t^\ast$ where $h_t = h + t\eta\psi$,
\begin{align*}
    \tilde{V}^{h_t}_{\mu'}(x) \geq V^h_{x'}(x),
\end{align*}
in which $x' = \nabla h_t^\ast (\mu')$.
\end{lemma}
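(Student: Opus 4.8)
The plan is to expand both divergences against the common anchor point $x' = \nabla h_t^\ast(\mu')$ and show that their difference is exactly $t\eta$ times a subgradient (Bregman-type) inequality for $\psi$, which is nonnegative by convexity. First I would write out the generalized divergence using $h_t = h + t\eta\psi$ as $\tilde{V}^{h_t}_{\mu'}(x) = h(x) + t\eta\psi(x) - h(x') - t\eta\psi(x') - \langle \mu', x-x'\rangle$, and record the standard divergence $V^h_{x'}(x) = h(x) - h(x') - \langle \nabla h(x'), x-x'\rangle$, where $\nabla h(x')$ is well defined because $h$ is Legendre (Assumption \ref{asm:strong-convex}).

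Next I would decompose the dual anchor $\mu'$. Since $x' = \nabla h_t^\ast(\mu')$, Remark \ref{rmk:bijection}, extended to $h_t$ exactly as in Remark \ref{rmk:extension}, gives $\mu' \in \partial h_t(x')$. Applying the sum rule for subdifferentials to $h_t = h + t\eta\psi$ (valid since $h$ is finite and differentiable on the interior of its domain, so the Moreau--Rockafellar qualification holds), there exists $p' \in \partial\psi(x')$ with $\mu' = \nabla h(x') + t\eta\, p'$. Substituting this into $\langle \mu', x-x'\rangle$ and subtracting the two divergences, the $h$-terms and the $\langle \nabla h(x'), x-x'\rangle$ term cancel exactly, leaving
\[
\tilde{V}^{h_t}_{\mu'}(x) - V^h_{x'}(x) = t\eta\big(\psi(x) - \psi(x') - \langle p', x-x'\rangle\big).
\]

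Finally, since $p' \in \partial\psi(x')$ and $\psi$ is convex (Assumption \ref{asm:obj-func}b), the subgradient inequality gives $\psi(x) - \psi(x') \geq \langle p', x-x'\rangle$, so the bracketed quantity is nonnegative; as $t\eta \geq 0$, the whole difference is nonnegative, which is the claim. The one step that requires care is the subdifferential decomposition of $\mu'$: one must justify the additive split $\mu' = \nabla h(x') + t\eta\, p'$ and the existence of a genuine subgradient $p'$ of the possibly non-smooth $\psi$, since the remainder of the argument is an exact cancellation followed by a one-line convexity inequality.
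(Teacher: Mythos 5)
Your proof is correct. The paper itself gives no proof of Lemma \ref{lem:bregman-larger}—it imports the statement as Lemma 2 of \citet{flammarion2017stochastic}—and your argument (expand both divergences at the common anchor $x' = \nabla h_t^\ast(\mu')$, decompose $\mu' = \nabla h(x') + t\eta\, p'$ with $p' \in \partial\psi(x')$, and conclude via the subgradient inequality for the convex $\psi$, with $t\eta \geq 0$) is precisely the standard argument behind that cited result, so it fills the gap rather than diverging from the paper. The one step you rightly flag, the sum rule $\partial h_t(x') = \nabla h(x') + t\eta\,\partial\psi(x')$, is a decomposition the paper itself builds into its conventions (Definition \ref{def:generalized-prox} writes $\varsigma' \in \partial \ell_t(z') = \nabla \ell(z') + \partial \psi(z')$); strictly it requires a mild qualification such as $\mathbf{dom}\,\psi \cap \mathbf{int \, dom}\, h \neq \varnothing$, and note that this same qualification is what guarantees $\partial h(x') \neq \varnothing$, hence $x' \in \mathbf{int \, dom}\, h$ for the essentially smooth Legendre $h$, so that $\nabla h(x')$ and therefore $V^h_{x'}(x)$ are even well defined.
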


\section{Complete Analysis of FeDualEx for Composite Saddle Point Problems} \label{appx:FeDualEx-Saddle}
We begin by reformulating the updating sequences with another pair of auxiliary dual variables. Expand the prox operator  in Algorithm \ref{alg:fed-DualEx} line 6 to 8 by Definition \ref{def:generalized-prox}, and rewrite by the gradient of the conjugate function in Definition \ref{def:conjugate-h}, 
{\begin{align*}
    z^m_{r,k} &= \argmin_z\{\langle \varsigma_{r,k}^m - \bar{\varsigma}, z\rangle  + \ell_{r,k}(z)\} = \nabla \ell_{r,k}^\ast(\bar{\varsigma} - \varsigma_{r,k}^m)\\
    z^m_{r,k+1/2} &= \argmin_z\{\langle \eta^c g_m(z_{r,k}^m; \xi^m_{r,k}) - (\bar{\varsigma} - \varsigma_{r,k}^m), z \rangle + \ell_{r,k+1} (z)\} = \nabla \ell_{r,k+1}^\ast((\bar{\varsigma} - \varsigma_{r,k}^m) - \eta^c g_m(z_{r,k}^m; \xi^m_{r,k})) \\
    \varsigma^m_{r,k+1} &= \varsigma^m_{r,k} + \eta^c g_m(z^m_{r,k+1/2}; \xi^m_{r,k+1/2}) 
\end{align*}} 

Define auxiliary dual variable $\omega^m_{r,k} = \bar{\varsigma} - \varsigma_{r,k}^m$. It satisfies immediately that $z^m_{r,k} = \nabla \ell_{r,k}^\ast(\omega^m_{r,k})$, in which $\ell_{r,k}^\ast$ is the conjugate of $\ell_{r,k} = \ell + (\eta^srK+k)\eta^c\psi$. And define $\omega^m_{r,k+1/2}$ to be the dual image of the intermediate variable $z^m_{r,k+1/2}$ such that $z^m_{r,k+1/2} = \nabla \ell_{r,k+1}^\ast(\omega^m_{r,k+1/2})$. Then from the above updating sequence, we get an equivalent updating sequence for the auxiliary dual variables.
\begin{align*}
    \omega^m_{r,k+1/2} &= {\omega^m_{r,k}} - \eta g_m(z^m_{r,k}; \xi^m_{r,k})\\
    \omega^m_{r,k+1} &= \omega^m_{r,k} - \eta g_m(z^m_{r,k+1/2}; \xi^m_{r,k+1/2})
\end{align*}

Now we analyze the following shadow sequences. Define 
\begin{align*}
    \overline{\omega_{r,k}} = \frac{1}{M}\sum_{m=1}^M\omega^m_{r,k}, \qquad \qquad \overline{g_{r,k}} = \frac{1}{M}\sum_{m=1}^M g_m(z^m_{r,k}; \xi^m_{r,k}),
\end{align*}
then
\begin{align}
    \overline{\omega_{r,k+1/2}} &= \overline{\omega_{r,k}} - \eta^c \overline{g_{r,k}}, \tag{\ref{eq:seq1}} 
    \\
    \overline{\omega_{r,k+1}} &= \overline{\omega_{r,k}} - \eta^c \overline{g_{r,k+1/2}}. \tag{\ref{eq:seq2}} 
\end{align}
In the meantime, 
\begin{align} 
    \widehat{z_{r, k}} = \nabla \ell_{r,k}^\ast (\overline{\omega_{r,k}}) , \qquad \qquad \widehat{z_{r, k+1/2}} = \nabla \ell_{r,k+1}^\ast (\overline{\omega_{r,k+1/2}}). \tag{\ref{eq:def-zhat}}
\end{align}

\subsection{Main Theorem and Proof} 
\thmmain*
% \begin{theorem}
% \begin{align*}
%     \mathbb{E}\Big[\Gap\Big(\frac{1}{RK}\sum_{r=0}^{R-1}\sum_{k=0}^{K-1}\widehat{z_{r, k+1/2}}\Big)\Big] &\leq \frac{B}{\eta^cRK} + 20\beta^2(\eta^c)^3K^2G^2 + \frac{5\sigma^2\eta^c}{M} + 2^\frac{3}{2}\beta\eta^cKGB.
% \end{align*}
% Choosing step size
% \begin{align*}
%     \eta^c = \min \{\frac{1}{5\beta^2}, \frac{B^\frac{1}{4}}{20^\frac{1}{4}\beta^\frac{1}{2}G^\frac{1}{2}K^\frac{3}{4}R^\frac{1}{4}}, \frac{B^\frac{1}{2}M^\frac{1}{2}}{5^\frac{1}{2}\sigma R^\frac{1}{2} K^\frac{1}{2}}, \frac{1}{2^\frac{3}{4}\beta^\frac{1}{2}G^\frac{1}{2}KR^\frac{1}{2}}\},
% \end{align*}
% \begin{align*}
%     \mathbb{E}\Big[\Gap\Big(\frac{1}{RK}\sum_{r=0}^{R-1}\sum_{k=0}^{K-1}\widehat{z_{r, k+1/2}}\Big)\Big] &\leq \frac{5\beta^2B}{RK} + \frac{20^\frac{1}{4}\beta^\frac{1}{2}G^\frac{1}{2}B^\frac{3}{4}}{K^\frac{1}{4}R^\frac{3}{4}} + \frac{5^\frac{1}{2}\sigma B^\frac{1}{2}}{M^\frac{1}{2}R^\frac{1}{2} K^\frac{1}{2}} + \frac{2^\frac{3}{4}\beta^\frac{1}{2}G^\frac{1}{2}B}{R^\frac{1}{2}}.
% \end{align*}
% \end{theorem}
\begin{proof}
The proof of the main theorem relies on Lemma \ref{lem:non-smooth}, the bound for the non-smooth term, and Lemma \ref{lem:smooth}, the bound for the smooth term. These two lemmas are combined in Lemma \ref{lem:main-grad} and then yield the per-step progress for FeDualEx. The three lemmas are listed and proved right after this theorem. Here, we finish proving the main theorem from the per-step progress. 

Starting from Lemma \ref{lem:main-grad}, we telescope for all local updates $k \in \{0, ..., K-1\}$ after the same communication round $r$. 
{\begin{align*}
    \eta^c\mathbb{E} & \Big[\sum_{k=0}^{K-1} \big[\langle g(\widehat{z_{r, k+1/2}}), \widehat{z_{r, k+1/2}} - z\rangle + \psi(\widehat{z_{r, k+1/2}}) - \psi(z)\big] \Big] \\
    &\leq \tilde{V}_{\overline{\omega_{r,0}}}^{\ell_{r,0}}(z) -\tilde{V}_{\overline{\omega_{r,K}}}^{\ell_{r,K}}(z) + \frac{5\sigma^2(\eta^c)^2K}{M} + 20 \sum_{k=0}^{K-1} \beta^2(\eta^c)^4(k+1)^2G^2 + 2^\frac{3}{2} \sum_{k=0}^{K-1} \beta (\eta^c)^2(k+1)GB \\
    &\leq \tilde{V}_{\overline{\omega_{r,0}}}^{\ell_{r,0}}(z) -\tilde{V}_{\overline{\omega_{r,K}}}^{\ell_{r,K}}(z) + \frac{5\sigma^2(\eta^c)^2K}{M} + 20 \sum_{k=0}^{K-1} \beta^2(\eta^c)^4K^2G^2 + 2^\frac{3}{2} \sum_{k=0}^{K-1} \beta (\eta^c)^2 K GB \\
    &\leq \tilde{V}_{\overline{\omega_{r,0}}}^{\ell_{r,0}}(z) -\tilde{V}_{\overline{\omega_{r,K}}}^{\ell_{r,K}}(z) + \frac{5\sigma^2(\eta^c)^2K}{M} + 20\beta^2(\eta^c)^4K^3G^2 + 2^\frac{3}{2}\beta(\eta^c)^2K^2GB.
\end{align*}}
As we initialize the local dual updates on all clients after each communication with the dual average of the previous round's last update, $\forall r \in \{1, ..., R\}$, the first variable in this round $\overline{\omega_{r, 0}}$ is the same as the last variable $\overline{\omega_{r-1, 0}}$ in the previous round. As a result, taking the server step size $\eta^s = 1$, we can further telescope across all rounds and have
\begin{align*}
\eta^c\mathbb{E} & \Big[\sum_{r=0}^{R-1}\sum_{k=0}^{K-1} \big[\langle g(\widehat{z_{r, k+1/2}}), \widehat{z_{r, k+1/2}} - z\rangle + \psi(\widehat{z_{r, k+1/2}}) - \psi(z)\big] \Big] \\
&\leq \tilde{V}_{\overline{\omega_{0,0}}}^{\ell_{0,0}}(z) -\tilde{V}_{\overline{\omega_{R,K}}}^{\ell_{R,K}}(z) + \frac{5\sigma^2(\eta^c)^2KR}{M} + 20\beta^2(\eta^c)^4K^3RG^2 + 2^\frac{3}{2}\beta(\eta^c)^2K^2RGB .
\end{align*}
Notice that the generalized Bregman divergence $\tilde{V}_{\overline{\omega_{0,0}}}^{\ell_{0,0}}(z) = \tilde{V}_{\bar{\varsigma}-\varsigma_0}^{\ell_{0,0}}(z) = \tilde{V}_{\bar{\varsigma}}^{\ell}(z) = V_{z_0}^{\ell}(z)$, where $z_0 = \nabla \ell^\ast(\bar{\varsigma})$. Thus, by Assumption \ref{asm:compactness}, $\tilde{V}_{\overline{\omega_{0,0}}}^{\ell_{0,0}}(z) \leq B$. Dividing $\eta^cKR$ on both sides of the equation, we get
\begin{align*}
    \eta^c\mathbb{E} & \Big[\frac{1}{RK}\sum_{r=0}^{R-1}\sum_{k=0}^{K-1} \big[\langle g(\widehat{z_{r, k+1/2}}), \widehat{z_{r, k+1/2}} - z\rangle + \psi(\widehat{z_{r, k+1/2}}) - \psi(z)\big] \Big] \\
    &\leq \frac{B}{\eta^cRK} + \frac{5\sigma^2\eta^c}{M} + 20\beta^2(\eta^c)^3K^2G^2 + 2^\frac{3}{2}\beta\eta^cKGB.
\end{align*}
Finally, applying Lemma \ref{lem:regret} completes the proof. 
\end{proof}

% \begin{lemma} [Bounding the Regularization Term] %\label{lem:non-smooth} 
% $\forall z$, 
% \begin{align*}
%  \eta^c \big[ \psi(\widehat{z_{r, k+1/2}}) - \psi(z)\big] &= \tilde{V}_{\overline{\omega_{r,k}}}^{\ell_{r,k}}(z)  - \tilde{V}_{\overline{\omega_{r,k+1}}}^{\ell_{r,k+1}}(z) - \tilde{V}_{\overline{\omega_{r,k}}}^{\ell_{r,k}}(\widehat{z_{r, k+1/2}}) - \tilde{V}_{\overline{\omega_{r,k+1/2}}}^{\ell_{r,k+1}}(\widehat{z_{r, k+1}}) \\ 
%     & \quad + \eta^c\langle \overline{g_{r,k+1/2}} - \overline{g_{r,k}}, \widehat{z_{r, k+1/2}} - \widehat{z_{r, k+1}} \rangle + \eta^c\langle \overline{g_{r,k+1/2}}, z - \widehat{z_{r, k+1/2}} \rangle
% \end{align*}
% \end{lemma}
\lemnonsmooth*
\begin{proof}
    By the definition of generalized Bregman divergence and the updating sequence in Eq. \eqref{eq:seq1}, $\forall z$,
    \begin{align*}
        \tilde{V}_{\overline{\omega_{r,k+1/2}}}^{\ell_{r,k+1}}(z) &= \ell_{r, k+1}(z) - \ell_{r, k+1}(\widehat{z_{r,k+1/2}}) - \langle\overline{\omega_{r,k+1/2}}, z - \widehat{z_{r,k+1/2}}\rangle \\
        &= \ell_{r, k+1}(z) - \ell_{r, k+1}(\widehat{z_{r,k+1/2}}) - \langle\overline{\omega_{r,k}} - \eta^c \overline{g_{r,k}}, z - \widehat{z_{r,k+1/2}}\rangle \\
        &= \ell_{r,k}(z) - \ell_{r,k}(\widehat{z_{r,k+1/2}}) + \eta^c \big[ \psi(z) - \psi(\widehat{z_{r,k+1/2}})\big] \\
        &\qquad - \langle\overline{\omega_{r,k}}, z - \widehat{z_{r,k+1/2}}\rangle + \eta^c \langle\overline{g_{r,k}}, z - \widehat{z_{r,k+1/2}}\rangle. \numberthis \label{eq:lem-nonsmooth1}
    \end{align*}
    Similarly, we can have for the updating sequence in Eq. \eqref{eq:seq2} that $\forall z$,
    \begin{align}
        \tilde{V}_{\overline{\omega_{r,k+1}}}^{\ell_{r,k+1}}(z)
        &= \ell_{r,k}(z) - \ell_{r,k}(\widehat{z_{r,k+1}}) + \eta^c \big[ \psi(z) - \psi(\widehat{z_{r,k+1}})\big] - \langle\overline{\omega_{r,k}}, z - \widehat{z_{r,k+1}}\rangle + \eta^c \langle\overline{g_{r,k+1/2}}, z - \widehat{z_{r,k+1}}\rangle. \label{eq:lem-nonsmooth2}
    \end{align}
Plug $z = \widehat{z_{r,k+1}}$ into Eq. \eqref{eq:lem-nonsmooth1},
    \begin{align*}
        \tilde{V}_{\overline{\omega_{r,k+1/2}}}^{\ell_{r,k+1}}(\widehat{z_{r,k+1}}) &= \ell_{r,k}(\widehat{z_{r,k+1}}) - \ell_{r,k}(\widehat{z_{r,k+1/2}}) + \eta^c \big[ \psi(\widehat{z_{r,k+1}}) - \psi(\widehat{z_{r,k+1/2}})\big] \\
        &\quad - \langle\overline{\omega_{r,k}}, \widehat{z_{r,k+1}} - \widehat{z_{r,k+1/2}}\rangle + \eta^c \langle\overline{g_{r,k}}, \widehat{z_{r,k+1}} - \widehat{z_{r,k+1/2}}\rangle.
    \end{align*}
Add this up with Eq. \eqref{eq:lem-nonsmooth2},
\begin{align*}
        \tilde{V}_{\overline{\omega_{r,k+1/2}}}^{\ell_{r,k+1}}(\widehat{z_{r,k+1}}) + \tilde{V}_{\overline{\omega_{r,k+1}}}^{\ell_{r,k+1}}(z) &= \underbrace{\ell_{r,k}(z) - \ell_{r,k}(\widehat{z_{r,k+1/2}}) - \langle\overline{\omega_{r,k}}, z - \widehat{z_{r,k+1/2}}\rangle}_{A1} + \eta^c \big[ \psi(z) - \psi(\widehat{z_{r,k+1/2}})\big] \\
        &\qquad + \underbrace{\eta^c \langle\overline{g_{r,k}}, \widehat{z_{r,k+1}} - \widehat{z_{r,k+1/2}}\rangle + \eta^c \langle\overline{g_{r,k+1/2}}, z - \widehat{z_{r,k+1}}\rangle}_{A2}.
    \end{align*}
For $A1$ we have
{\begin{align*}
    A1 &= \ell_{r,k}(z) - \ell_{r,k}(\widehat{z_{r,k}}) - \langle\overline{\omega_{r,k}}, z - \widehat{z_{r,k}}\rangle - \ell_{r,k}(\widehat{z_{r,k+1/2}}) + \ell_{r,k}(\widehat{z_{r,k}}) + \langle\overline{\omega_{r,k}}, \widehat{z_{r,k+1/2}} - \widehat{z_{r,k}}\rangle \\
    &= \tilde{V}_{\overline{\omega_{r,k}}}^{\ell_{r,k}}(z) - \tilde{V}_{\overline{\omega_{r,k}}}^{\ell_{r,k}}(\widehat{z_{r, k+1/2}}).
\end{align*}}
For $A2$ we have
\begin{align*}
    A2 &= \eta^c \langle\overline{g_{r,k}}, \widehat{z_{r,k+1}} - \widehat{z_{r,k+1/2}}\rangle + \eta^c \langle\overline{g_{r,k+1/2}}, \widehat{z_{r,k+1/2}} - \widehat{z_{r,k+1}}\rangle + \eta^c \langle\overline{g_{r,k+1/2}}, z - \widehat{z_{r,k+1/2}}\rangle\\
    &= \eta^c \langle\overline{g_{r,k+1/2}}, z - \widehat{z_{r,k+1/2}}\rangle + \eta^c\langle\overline{g_{r,k+1/2}} - \overline{g_{r,k}}, \widehat{z_{r,k+1/2}} - \widehat{z_{r,k+1}}\rangle
\end{align*}
Plug $A1$ and $A2$ back in completes the proof. 
\end{proof}

For the purpose of clarity, we demonstrate how we generate the terms to be separately bounded for the smooth part with the following Lemma \ref{lem:smooth}, which holds trivially by the linearity of the gradient operator $g = \frac{1}{M}\sum_{m=1}^Mg_m$ and then direct cancellation.
\lemsmooth*
% \begin{lemma} [Bounding the Smooth Term] % \label{lem:smooth}
% $\forall z$, 
% \begin{align*}
%      \langle g(\widehat{z_{r, k+1/2}}), \widehat{z_{r, k+1/2}} - z\rangle &= \langle \overline{g_{r,k+1/2}}, \widehat{z_{r, k+1/2}} - z\rangle + \langle \frac{1}{M} \sum_{m=1}^M g_m(z_{r,k+1/2}^m) - \overline{g_{r,k+1/2}}, \widehat{z_{r, k+1/2}} - z\rangle \\ 
%     &\quad + \langle \frac{1}{M} \sum_{m=1}^M [g_m(\widehat{z_{r, k+1/2}}) - g_m(z_{r,k+1/2}^m)], \widehat{z_{r, k+1/2}} - z\rangle
% \end{align*}
% \end{lemma}

Based on the previous two lemmas, we arrive at the following lemma that bounds the per-step progress of FeDualEx.

\setcounter{lemma}{2}
\begin{lemma} [Per-step Progress for FeDualEx in Saddle Point Setting] \label{lem:main-grad} For $\eta^c \leq \frac{1}{5\beta^2}$,
\begin{align*}
    \eta^c\mathbb{E} & \big[\langle g(\widehat{z_{r, k+1/2}}), \widehat{z_{r, k+1/2}} - z\rangle + \psi(\widehat{z_{r, k+1/2}}) - \psi(z) \big] \\ &\leq \tilde{V}_{\overline{\omega_{r,k}}}^{\ell_{r,k}}(z) -\tilde{V}_{\overline{\omega_{r,k+1}}}^{\ell_{r,k+1}}(z) + \frac{5\sigma^2(\eta^c)^2}{M} + 20\beta^2(\eta^c)^4(k+1)^2G^2 + 2^\frac{3}{2} \beta (\eta^c)^2(k+1)GB.
\end{align*}
\end{lemma}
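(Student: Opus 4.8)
The plan is to add the identity of Lemma~\ref{lem:non-smooth} to $\eta^c$ times the decomposition of Lemma~\ref{lem:smooth}, and then estimate the leftover terms one group at a time. First I would multiply Lemma~\ref{lem:smooth} by $\eta^c$ and sum it with Lemma~\ref{lem:non-smooth}. The crucial observation is that the term $\eta^c\langle\overline{g_{r,k+1/2}},\widehat{z_{r,k+1/2}}-z\rangle$ produced by the smooth decomposition exactly cancels the term $\eta^c\langle\overline{g_{r,k+1/2}},z-\widehat{z_{r,k+1/2}}\rangle$ appearing in Lemma~\ref{lem:non-smooth}. After this cancellation, the left-hand side $\eta^c[\langle g(\widehat{z_{r,k+1/2}}),\widehat{z_{r,k+1/2}}-z\rangle+\psi(\widehat{z_{r,k+1/2}})-\psi(z)]$ is expressed through the telescoping pair $\tilde{V}_{\overline{\omega_{r,k}}}^{\ell_{r,k}}(z)-\tilde{V}_{\overline{\omega_{r,k+1}}}^{\ell_{r,k+1}}(z)$, the two negative divergences $-\tilde{V}_{\overline{\omega_{r,k}}}^{\ell_{r,k}}(\widehat{z_{r,k+1/2}})-\tilde{V}_{\overline{\omega_{r,k+1/2}}}^{\ell_{r,k+1}}(\widehat{z_{r,k+1}})$, the extra-step inner product $E=\eta^c\langle\overline{g_{r,k+1/2}}-\overline{g_{r,k}},\widehat{z_{r,k+1/2}}-\widehat{z_{r,k+1}}\rangle$, and the two remaining inner products from Lemma~\ref{lem:smooth}, which I will call the noise term $N$ and the client-drift term $D$. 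The telescoping pair is exactly the right-hand side of the lemma, so the whole task reduces to showing that $E+N+D$ is, in expectation, bounded by the three error terms plus the two negative divergences.

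Next I would take expectations and dispose of $N$ and $D$. For $N=\eta^c\langle\frac1M\sum_m g_m(z^m_{r,k+1/2})-\overline{g_{r,k+1/2}},\widehat{z_{r,k+1/2}}-z\rangle$, I would invoke the filtration of Remark~\ref{rmk:martingale}: the iterate $\widehat{z_{r,k+1/2}}$ is determined by the samples up to and including step $k$, hence is $\mc{F}_{r,k}$-measurable, whereas the innovation carried by $\xi^m_{r,k+1/2}$ is centered once conditioned on $\mc{F}_{r,k}$ by the unbiasedness in Assumption~\ref{asm:grad-op}b; therefore $\mathbb{E}[N]=0$. For $D$, Cauchy--Schwarz (Lemma~\ref{lem:cauchy-schwarz}) together with the $\beta$-Lipschitzness of $g$ (Assumption~\ref{asm:grad-op}a) reduces it to $\eta^c\beta\,\frac1M\sum_m\nrm{\widehat{z_{r,k+1/2}}-z^m_{r,k+1/2}}\,\nrm{\widehat{z_{r,k+1/2}}-z}$; bounding $\nrm{\widehat{z_{r,k+1/2}}-z}$ by the diameter from Assumption~\ref{asm:compactness}, and the client deviation $\nrm{\widehat{z_{r,k+1/2}}-z^m_{r,k+1/2}}$ by a helping lemma (unrolling the dual recursion $\overline{\omega_{r,\cdot}}-\omega^m_{r,\cdot}$ with the bounded-gradient Assumption~\ref{asm:grad-op}c and the $1$-Lipschitzness of $\nabla\ell^\ast_{r,k+1}$ from Lemma~\ref{lem:conjugate-smooth}, giving $O(\eta^c(k+1)G)$), produces the $2^{\frac32}\beta(\eta^c)^2(k+1)GB$ term.

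The main obstacle is the extra-step term $E$. My plan is to control $\nrm{\widehat{z_{r,k+1/2}}-\widehat{z_{r,k+1}}}$ by the $1$-Lipschitzness of $\nabla\ell^\ast_{r,k+1}$ (so it is at most $\eta^c\nrm{\overline{g_{r,k+1/2}}-\overline{g_{r,k}}}_\ast$) and then split $\overline{g_{r,k+1/2}}-\overline{g_{r,k}}$, via the AM--QM inequality (Lemma~\ref{lem:l1-l2}), into a centered noise difference, a client-drift difference, and the ``clean'' server-gradient difference $g(\widehat{z_{r,k+1/2}})-g(\widehat{z_{r,k}})$. The clean part, being $\beta$-Lipschitz in $\nrm{\widehat{z_{r,k+1/2}}-\widehat{z_{r,k}}}$, should be absorbed into the two negative divergences after lower-bounding them by squared norms through Lemma~\ref{lem:bregman-larger} and the strong convexity of Assumption~\ref{asm:strong-convex}; this is exactly where the restriction $\eta^c\le\frac1{5\beta^2}$ is consumed, playing the role of a relative-Lipschitzness/step-size condition that lets the squared-norm remainders be dominated by the negative divergences. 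The noise component contributes $\frac{5\sigma^2(\eta^c)^2}{M}$ after using independence of the per-client samples to obtain the $1/M$ gain, and the drift component contributes $20\beta^2(\eta^c)^4(k+1)^2G^2$ through the same client-deviation helping lemma used for $D$. I expect the delicate points to be the constant bookkeeping in these AM--QM/Young (Lemma~\ref{lem:young}) splits and the care required because $\ell_{r,k}$ and $\ell_{r,k+1}$ carry different composite indices $(\eta^s rK+k)\eta^c$ versus $(\eta^s rK+k+1)\eta^c$, so that the two conjugates $\nabla\ell^\ast_{r,k}$ and $\nabla\ell^\ast_{r,k+1}$ must be handled separately; collecting everything and invoking $\eta^c\le\frac1{5\beta^2}$ to merge the leftover squared-norm pieces into the negative divergences then yields the stated per-step bound.
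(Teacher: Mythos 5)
Your proposal reproduces the paper's proof structure almost step for step: the same summation of Lemma~\ref{lem:non-smooth} with $\eta^c$ times Lemma~\ref{lem:smooth}, the same cancellation of the $\overline{g_{r,k+1/2}}$ inner products, the same conditional-expectation argument killing the noise term $N$ (the paper's Lemma~\ref{lem:unbiased-grad}), the same Cauchy--Schwarz/Lipschitzness/drift/diameter chain for the client-drift term $D$ (the paper's Lemmas~\ref{lem:client-server-z} and~\ref{lem:gradient-inner}), and the same five-way AM--QM split of $\overline{g_{r,k+1/2}}-\overline{g_{r,k}}$ into two noise, two drift, and one clean piece, with the clean piece absorbed into the negative divergence via the step-size restriction (the paper's Lemma~\ref{lem:gradient-diff}).

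The one place you deviate costs you the stated constants. The paper handles the extra-step term $E=\eta^c\langle \overline{g_{r,k+1/2}}-\overline{g_{r,k}},\widehat{z_{r,k+1/2}}-\widehat{z_{r,k+1}}\rangle$ by Cauchy--Schwarz followed by Young's inequality, pairing the resulting $\frac12\nrm{\widehat{z_{r,k+1/2}}-\widehat{z_{r,k+1}}}^2$ against the second negative divergence, which satisfies $-\tilde{V}_{\overline{\omega_{r,k+1/2}}}^{\ell_{r,k+1}}(\widehat{z_{r,k+1}})\le-\frac12\nrm{\widehat{z_{r,k+1/2}}-\widehat{z_{r,k+1}}}^2$; this leaves $\frac{(\eta^c)^2}{2}\nrm{\overline{g_{r,k+1/2}}-\overline{g_{r,k}}}_\ast^2$. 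Your route --- bounding $\nrm{\widehat{z_{r,k+1/2}}-\widehat{z_{r,k+1}}}\le\eta^c\nrm{\overline{g_{r,k+1/2}}-\overline{g_{r,k}}}_\ast$ by $1$-Lipschitzness of $\nabla\ell_{r,k+1}^\ast$ (which is valid, since both points are images of the same map $\nabla\ell_{r,k+1}^\ast$) --- leaves that negative divergence unused and yields $(\eta^c)^2\nrm{\overline{g_{r,k+1/2}}-\overline{g_{r,k}}}_\ast^2$, twice the paper's coefficient. Fed through the five-way split, this produces $\frac{10\sigma^2(\eta^c)^2}{M}+40\beta^2(\eta^c)^4(k+1)^2G^2$ in place of $\frac{5\sigma^2(\eta^c)^2}{M}+20\beta^2(\eta^c)^4(k+1)^2G^2$, and a clean-term coefficient $5\beta^2(\eta^c)^2$ rather than $\frac{5\beta^2(\eta^c)^2}{2}$, so the absorption into $-\frac12\mathbb{E}\nrm{\widehat{z_{r,k}}-\widehat{z_{r,k+1/2}}}^2$ also needs a correspondingly smaller step size. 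In short, your argument proves the lemma with the first two error terms doubled, not the inequality as stated; to land on the stated constants, replace the direct Lipschitz bound on $\nrm{\widehat{z_{r,k+1/2}}-\widehat{z_{r,k+1}}}$ with the Young-inequality pairing against the unused negative divergence, exactly as the paper does.
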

\begin{proof}
Based on the previous two lemmas, we can get the following simply by summing them up, in which we denote the left-hand side as $\mathrm{LHS}$ for simplicity.
\begin{align*}
    \mathrm{LHS} &\coloneqq \eta^c\big[\langle g(\widehat{z_{r, k+1/2}}), \widehat{z_{r, k+1/2}} - z\rangle + \psi(\widehat{z_{r, k+1/2}}) - \psi(z) \big] \\ &\leq \tilde{V}_{\overline{\omega_{r,k}}}^{\ell_{r,k}}(z) - \tilde{V}_{\overline{\omega_{r,k+1}}}^{\ell_{r,k+1}}(z) \underbrace{- \tilde{V}_{\overline{\omega_{r,k}}}^{\ell_{r,k}}(\widehat{z_{r, k+1/2}}) - \tilde{V}_{\overline{\omega_{r,k+1/2}}}^{\ell_{r,k+1}}(\widehat{z_{r, k+1}})}_{A3}  \\
    &\qquad + \eta^c\langle \overline{g_{r,k+1/2}} - \overline{g_{r,k}}, \widehat{z_{r, k+1/2}} - \widehat{z_{r, k+1}} \rangle  + \eta^c\langle \frac{1}{M} \sum_{m=1}^M g_m(z_{r,k+1/2}^m) - \overline{g_{r,k+1/2}}, \widehat{z_{r, k+1/2}} - z\rangle \\ 
    &\qquad + \eta^c\langle \frac{1}{M} \sum_{m=1}^M [g_m(\widehat{z_{r, k+1/2}}) - g_m(z_{r,k+1/2}^m)], \widehat{z_{r, k+1/2}} - z\rangle
\end{align*}
For the two generalized Bregman divergence terms in $A3$, we bound them by Lemma \ref{lem:bregman-larger} and the strong convexity of $\ell$ in Remark \ref{rmk:extension}, 
\begin{align*}
    A3 &\leq - V_{\widehat{z_{r,k}}}^{\ell}(\widehat{z_{r, k+1/2}}) - V_{\widehat{z_{r,k+1/2}}}^{\ell}(\widehat{z_{r, k+1}}) \\
    &\leq - \frac{1}{2} \nrm{\widehat{z_{r,k}} - \widehat{z_{r, k+1/2}}}^2 - \frac{1}{2} \nrm{\widehat{z_{r,k+1/2}} - \widehat{z_{r, k+1}}}^2
\end{align*}
As a result,
\begin{align*}
    \mathrm{LHS} &\leq \tilde{V}_{\overline{\omega_{r,k}}}^{\ell_{r,k}}(z) - \tilde{V}_{\overline{\omega_{r,k+1}}}^{\ell_{r,k+1}}(z) - \frac{1}{2} \nrm{\widehat{z_{r,k}} - \widehat{z_{r, k+1/2}}}^2   \\
    & \quad \underbrace{ - \frac{1}{2} \nrm{\widehat{z_{r,k+1/2}} - \widehat{z_{r, k+1}}}^2 + \eta^c\langle \overline{g_{r,k+1/2}} - \overline{g_{r,k}}, \widehat{z_{r, k+1/2}} - \widehat{z_{r, k+1}} \rangle}_{A4} \\
    &\quad + \eta^c\langle \frac{1}{M} \sum_{m=1}^M g_m(z_{r,k+1/2}^m) - \overline{g_{r,k+1/2}}, \widehat{z_{r, k+1/2}} - z\rangle \\ 
    &\quad + \eta^c\langle \frac{1}{M} \sum_{m=1}^M [g_m(\widehat{z_{r, k+1/2}}) - g_m(z_{r,k+1/2}^m)], \widehat{z_{r, k+1/2}} - z\rangle.
\end{align*}
$A4$ can be bounded with Cauchy-Schwarz (Lemma \ref{lem:cauchy-schwarz}) inequality and Young's inequality (Lemma \ref{lem:young}).
\begin{align*}
    A4 &\leq - \frac{1}{2}\nrm{\widehat{z_{r,k+1/2}} - \widehat{z_{r,k+1}}}^2 + \eta^c\nrm{\overline{g_{r,k+1/2}} - \overline{g_{r,k}}}_\ast\nrm{\widehat{z_{r, k+1/2}} - \widehat{z_{r, k+1}}} \\
    &\leq - \frac{1}{2}\nrm{\widehat{z_{r,k+1/2}} - \widehat{z_{r,k+1}}}^2 + \frac{(\eta^c)^2}{2}\nrm{\overline{g_{r,k+1/2}} - \overline{g_{r,k}}}_\ast^2 + \frac{1}{2}\nrm{\widehat{z_{r, k+1/2}} - \widehat{z_{r, k+1}}}^2 \\
    &= \frac{(\eta^c)^2}{2}\nrm{\overline{g_{r,k+1/2}} - \overline{g_{r,k}}}_\ast^2.
\end{align*}
Then we have
\begin{align*}
    \eta^c\big(\phi(\widehat{z_{r, k+1/2}}) - \phi(z) \big) &\leq \tilde{V}_{\overline{\omega_{r,k}}}^{\ell_{r,k}}(z) -\tilde{V}_{\overline{\omega_{r,k+1}}}^{\ell_{r,k+1}}(z) - \frac{1}{2} \nrm{\widehat{z_{r,k}} - \widehat{z_{r, k+1/2}}}^2 + \frac{(\eta^c)^2}{2}\nrm{\overline{g_{r,k+1/2}} - \overline{g_{r,k}}}_\ast^2 \\
    &\quad + \eta^c\langle \frac{1}{M} \sum_{m=1}^M g_m(z_{r,k+1/2}^m) - \overline{g_{r,k+1/2}}, \widehat{z_{r, k+1/2}} - z\rangle \\ 
    &\quad + \eta^c\langle \frac{1}{M} \sum_{m=1}^M [g_m(\widehat{z_{r, k+1/2}}) - g_m(z_{r,k+1/2}^m)], \widehat{z_{r, k+1/2}} - z\rangle.
\end{align*}
Taking expectations on both sides we get
\begin{align*}
    \eta^c\mathbb{E}\big[\phi(\widehat{z_{r, k+1/2}}) - \phi(z) \big] &\leq \tilde{V}_{\overline{\omega_{r,k}}}^{\ell_{r,k}}(z) -\tilde{V}_{\overline{\omega_{r,k+1}}}^{\ell_{r,k+1}}(z) \underbrace{- \frac{1}{2} \mathbb{E}\big[\nrm{\widehat{z_{r,k}} - \widehat{z_{r, k+1/2}}}^2\big]}_{B1} + \underbrace{\frac{(\eta^c)^2}{2}\mathbb{E}\big[\nrm{\overline{g_{r,k+1/2}} - \overline{g_{r,k}}}_\ast^2\big]}_{B2} \\
    &\qquad + \underbrace{\eta^c\mathbb{E}\big[\langle \frac{1}{M} \sum_{m=1}^M g_m(z_{r,k+1/2}^m) - \overline{g_{r,k+1/2}}, \widehat{z_{r, k+1/2}} - z\rangle\big]}_{B3} \\
    &\qquad + \underbrace{\eta^c\mathbb{E}\big[\langle \frac{1}{M} \sum_{m=1}^M [g_m(\widehat{z_{r, k+1/2}}) - g_m(z_{r,k+1/2}^m)], \widehat{z_{r, k+1/2}} - z\rangle\big]}_{B4}.
\end{align*}
B2 is bounded in Lemma \ref{lem:gradient-diff}. Therefore, we have
\begin{align*}
    B1 + B2 &\leq \frac{(\eta^c)^2}{2} \big(\frac{10\sigma^2}{M} + 40\beta^2(\eta^c)^2(k+1)^2G^2 \big) + \frac{5\eta^c\beta^2}{2} \mathbb{E}\Big[\nrm{ \widehat{z_{r, k+1/2}} - \widehat{z_{r, k}}}^2 \Big] - \frac{1}{2} \mathbb{E}\big[\nrm{\widehat{z_{r,k}} - \widehat{z_{r, k+1/2}}}^2\big] \\
    &= \frac{(\eta^c)^2}{2} \big(\frac{10\sigma^2}{M} + 40\beta^2(\eta^c)^2(k+1)^2G^2 \big) + \frac{5\eta^c\beta^2 - 1}{2} \mathbb{E}\Big[\nrm{ \widehat{z_{r, k+1/2}} - \widehat{z_{r, k}}}^2 \Big] \\
    &\leq \frac{5\sigma^2(\eta^c)^2}{M} + 20\beta^2(\eta^c)^4(k+1)^2G^2,
\end{align*}
for $\eta^c \leq \frac{1}{5\beta^2}$.

B3 is zero after taking the expectation as shown in Lemma \ref{lem:unbiased-grad}. B4 is bounded in Lemma \ref{lem:gradient-inner}. Plugging the bounds for $B1+B2, \ B3$, and $B4$ back in completes the proof.
\end{proof}

\subsection{Helping Lemmas}
In this section, we list the helping lemmas that were referenced in the proof of Lemma \ref{lem:non-smooth}, \ref{lem:smooth}, and \ref{lem:main-grad}. 
\setcounter{lemma}{10}
\begin{lemma} [Unbiased Gradient Estimate] \label{lem:unbiased-grad} Under Assumption \ref{asm:obj-func} and \ref{asm:grad-op},
    \begin{align*}
        \eta^c\mathbb{E}_{\mc{F}_{r,k+1/2}} \big[ \langle \frac{1}{M} \sum_{m=1}^M g_m(z_{r,k+1/2}^m) - \overline{g_{r,k+1/2}}, \widehat{z_{r, k+1/2}} - z\rangle\big] = 0
    \end{align*}
\end{lemma}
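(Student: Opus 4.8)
The plan is to combine the tower property of conditional expectation with the unbiasedness of the stochastic gradients (Assumption \ref{asm:grad-op}b). The starting observation is that the inner-product argument telescopes into pure half-step noise: since $\overline{g_{r,k+1/2}} = \frac{1}{M}\sum_{m=1}^M g_m(z^m_{r,k+1/2}; \xi^m_{r,k+1/2})$, the difference equals $\frac{1}{M}\sum_{m=1}^M \big[g_m(z^m_{r,k+1/2}) - g_m(z^m_{r,k+1/2}; \xi^m_{r,k+1/2})\big]$, which collects exactly the estimation errors of the half-step samples $\{\xi^m_{r,k+1/2}\}_m$. The goal is then to show that, once we condition on everything computed before these samples are drawn, each such error has zero mean.

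First I would pin down the measurability of the two factors. Each half-step iterate is produced by the projection $z^m_{r,k+1/2} = \nabla\ell^\ast_{r,k+1}\big((\bar\varsigma - \varsigma^m_{r,k}) - \eta^c g_m(z^m_{r,k}; \xi^m_{r,k})\big)$, which involves only samples up to $\xi^m_{r,k}$; hence $z^m_{r,k+1/2}$ is $\mc{F}_{r,k}$-measurable. Likewise $\widehat{z_{r,k+1/2}} = \nabla\ell^\ast_{r,k+1}(\overline{\omega_{r,k+1/2}})$ with $\overline{\omega_{r,k+1/2}} = \overline{\omega_{r,k}} - \eta^c\overline{g_{r,k}}$, and $\overline{g_{r,k}}$ depends only on $\{\xi^m_{r,k}\}$ and earlier samples, so $\widehat{z_{r,k+1/2}} - z$ is $\mc{F}_{r,k}$-measurable as well (recall $z$ is a fixed point). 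The step I expect to be the main obstacle to state crisply is precisely this measurability claim: one must verify that neither $\widehat{z_{r,k+1/2}}$ nor any $z^m_{r,k+1/2}$ carries a dependence on the half-step noise $\xi^m_{r,k+1/2}$, as that is exactly what decouples the vector factor from the noise being averaged out.

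Next I would condition on $\mc{F}_{r,k}$ and invoke the tower property. Because $\widehat{z_{r,k+1/2}} - z$ is $\mc{F}_{r,k}$-measurable, linearity of the inner product lets it be pulled outside the conditional expectation, so the whole quantity reduces to an inner product of $\widehat{z_{r,k+1/2}} - z$ against $\frac{1}{M}\sum_{m=1}^M \mathbb{E}\big[g_m(z^m_{r,k+1/2}) - g_m(z^m_{r,k+1/2}; \xi^m_{r,k+1/2}) \mid \mc{F}_{r,k}\big]$. Since each $z^m_{r,k+1/2}$ is $\mc{F}_{r,k}$-measurable while $\xi^m_{r,k+1/2}$ is independent of $\mc{F}_{r,k}$ (Remark \ref{rmk:martingale}), Assumption \ref{asm:grad-op}b gives
\begin{align*}
    \mathbb{E}\big[g_m(z^m_{r,k+1/2}; \xi^m_{r,k+1/2}) \mid \mc{F}_{r,k}\big] = g_m(z^m_{r,k+1/2}),
\end{align*}
so every summand in the inner argument vanishes. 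Hence the conditional expectation of the inner product is zero, and by the tower property the full expectation is zero; multiplying through by $\eta^c$ finishes the argument.
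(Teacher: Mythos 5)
Your proposal is correct and follows essentially the same route as the paper's own (very terse) proof: condition on $\mc{F}_{r,k}$ via the tower property, use that the iterates $z^m_{r,k+1/2}$ and $\widehat{z_{r,k+1/2}}$ are $\mc{F}_{r,k}$-measurable while $\xi^m_{r,k+1/2}$ is independent of $\mc{F}_{r,k}$ (Remark \ref{rmk:martingale}), and invoke unbiasedness (Assumption \ref{asm:grad-op}b) to kill the inner conditional expectation. The only difference is that you spell out the measurability verification that the paper leaves implicit, which is a useful clarification rather than a deviation.
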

\begin{proof} By the unbiased gradient estimate in Assumption \ref{asm:grad-op}b and its following Remark \ref{rmk:martingale},
    \begin{align*}
        \eta^c\mathbb{E}_{\mc{F}_{r,k+1/2}} & \big[ \langle \frac{1}{M} \sum_{m=1}^M g_m(z_{r,k+1/2}^m) - \overline{g_{r,k+1/2}}, \widehat{z_{r, k+1/2}} - z\rangle\big] \\
        &= \eta^c \mathbb{E}_{\mc{F}_{r,k}} \big[\mathbb{E}_{\mc{F}_{r,k+1/2}} \big[ \langle \frac{1}{M} \sum_{m=1}^M g_m(z_{r,k+1/2}^m) - \overline{g_{r,k+1/2}}, \widehat{z_{r, k+1/2}} - z\rangle \big\vert \mc{F}_{r,k}\big]\big] \\
        &= 0.
    \end{align*}
\end{proof}

\begin{lemma} [Bounded Client Drift under Assumption \ref{asm:grad-op}c] \label{lem:client-server-z}
$\forall m \in [M]$, $\forall k \in \{0, ..., K-1\}$,
    \begin{align*}
       \nrm{\widehat{z_{r, k+1/2}} - z_{r,k+1/2}^m} &\leq 2 \eta^c (k+1) G \\
        \nrm{\widehat{z_{r, k}} - z_{r,k}^m} &\leq 2 \eta^c k G
    \end{align*}
\end{lemma}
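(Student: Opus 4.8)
The plan is to bound the client drift by first transferring the question from the primal iterates to their dual preimages using smoothness of the conjugate, and then unrolling the dual recursion and invoking the bounded-gradient assumption. The distance-generating functions $\ell_{r,k} = \ell + (\eta^s r K + k)\eta^c\psi$ and $\ell_{r,k+1}$ inherit $1$-strong convexity from $\ell$ (Assumption \ref{asm:strong-convex}), since adding the convex term $\psi$ only increases convexity. By Lemma \ref{lem:conjugate-smooth}, their conjugate gradients $\nabla\ell_{r,k}^\ast$ and $\nabla\ell_{r,k+1}^\ast$ are therefore $1$-Lipschitz. Since $\widehat{z_{r,k}} = \nabla\ell_{r,k}^\ast(\overline{\omega_{r,k}})$, $z_{r,k}^m = \nabla\ell_{r,k}^\ast(\omega_{r,k}^m)$, and analogously for the half-step variables through $\ell_{r,k+1}^\ast$, this yields
\[
  \nrm{\widehat{z_{r,k}} - z_{r,k}^m} \leq \nrm{\overline{\omega_{r,k}} - \omega_{r,k}^m}_\ast, \qquad \nrm{\widehat{z_{r,k+1/2}} - z_{r,k+1/2}^m} \leq \nrm{\overline{\omega_{r,k+1/2}} - \omega_{r,k+1/2}^m}_\ast.
\]
It therefore suffices to control the deviation of each per-client dual iterate from its cross-client average.

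Next I would unroll the dual recursion. Because every client is synchronized at the start of the round, $\omega_{r,0}^m = \bar{\varsigma} - \varsigma_r = \overline{\omega_{r,0}}$ for all $m$, so the initial deviation vanishes. Telescoping the full-step update $\omega_{r,k+1}^m = \omega_{r,k}^m - \eta^c g_m(z_{r,k+1/2}^m;\xi_{r,k+1/2}^m)$ and its average gives
\[
  \overline{\omega_{r,k}} - \omega_{r,k}^m = -\eta^c \sum_{j=0}^{k-1}\big(\overline{g_{r,j+1/2}} - g_m(z_{r,j+1/2}^m;\xi_{r,j+1/2}^m)\big),
\]
while appending the half-step update $\omega_{r,k+1/2}^m = \omega_{r,k}^m - \eta^c g_m(z_{r,k}^m;\xi_{r,k}^m)$ contributes one additional summand $-\eta^c\big(\overline{g_{r,k}} - g_m(z_{r,k}^m;\xi_{r,k}^m)\big)$, so the half-step deviation is a sum of $k+1$ such terms.

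Finally I would apply the triangle inequality together with the bounded-gradient Assumption \ref{asm:grad-op}c. Each client gradient obeys $\nrm{g_m(\cdot;\cdot)}_\ast \leq G$, and by Jensen's inequality (Lemma \ref{lem:jensen}) the average $\overline{g_{r,j+1/2}}$ is also bounded by $G$, so every summand has dual norm at most $2G$. Counting $k$ summands for the full-step deviation and $k+1$ for the half-step deviation yields $\nrm{\overline{\omega_{r,k}} - \omega_{r,k}^m}_\ast \leq 2\eta^c k G$ and $\nrm{\overline{\omega_{r,k+1/2}} - \omega_{r,k+1/2}^m}_\ast \leq 2\eta^c(k+1)G$, which combined with the smoothness bounds above completes the proof. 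The argument is essentially mechanical; the only point demanding care is the off-by-one bookkeeping separating the half-step bound from the full-step bound, which reflects the one extra extrapolation gradient evaluated at $z_{r,k}^m$.
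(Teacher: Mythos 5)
Your proposal is correct and follows essentially the same route as the paper: both reduce the primal drift to a dual-space deviation via the $1$-Lipschitzness of $\nabla\ell_{r,k}^\ast$ (conjugate of a $1$-strongly convex function), exploit synchronization at the start of the round, unroll the dual recursion, and count $2(k+1)$ (resp. $2k$) gradient terms each bounded by $G$. The only cosmetic difference is that you bound $\nrm{\overline{\omega_{r,k}} - \omega_{r,k}^m}_\ast$ directly as a sum of (average minus client) gradient gaps, whereas the paper passes through the supremum of pairwise client deviations $\sup_{m_1,m_2}\nrm{\omega_{r,k+1/2}^{m_1} - \omega_{r,k+1/2}^{m_2}}_\ast$; both yield the identical constant.
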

\begin{proof}
    By the smoothness of the conjugate of a strongly convex function, i.e., Lemma \ref{lem:conjugate-smooth},
    \begin{align*}
        \nrm{\widehat{z_{r, k+1/2}} - z_{r,k+1/2}^m} &= \nrm{\nabla \ell_{r,k}^\ast(\overline{\omega_{r, k+1/2}}) - \nabla \ell_{r,k}^\ast (\omega_{r,k+1/2}^m)} \\
        &\leq \nrm{\overline{\omega_{r, k+1/2}} - \omega_{r,k+1/2}^m}_\ast
    \end{align*}
    After the same round of communication, by the updating sequence, we have $\forall m \in [M]$:
    \begin{align*}
        \omega_{r,k+1/2}^m &= \omega_{r,k}^m - \eta^c g_m(z^m_{r,k}; \xi^m_{r,k}) \\
        &= - \eta^c \sum_{\ell = 0}^{k-1} g_m(z^m_{r,\ell+1/2}; \xi^m_{r,\ell+1/2}) - \eta^c g_m(z^m_{r,k}; \xi^m_{r,k})
    \end{align*}
    Immediately after each round of communication, all machines are synchronized, i.e., $\forall m_1, m_2 \in [M]$, $\omega_{r,0}^{m_1} = \omega_{r,0}^{m_2}$. Therefore, $\forall k \in \{0, ..., K-1\}$,
    \begin{align*}
        \omega_{r,k+1/2}^{m_1} - \omega_{r,k+1/2}^{m_2} &= - \eta^c \sum_{\ell = 0}^{k-1} g_{m_1}(z^{m_1}_{r,\ell+1/2}; \xi^{m_1}_{r,\ell+1/2}) - \eta^c g_{m_1}(z^{m_1}_{r,k}; \xi^{m_1}_{r,k}) \\
        & \qquad + \eta^c \sum_{\ell = 0}^{k-1} g_{m_2}(z^{m_2}_{r,\ell+1/2}; \xi^{m_2}_{r,\ell+1/2}) + \eta^c g_{m_2}(z^{m_2}_{r,k}; \xi^{m_2}_{r,k}) 
    \end{align*}
    Then $\forall m_1, m_2 \in [M]$, $\forall k \in \{0, ..., K-1\}$, by triangle inequality, Jensen's inequality, and the bounded gradient Assumption \ref{asm:grad-op}c,
    \begin{align*}
        \nrm{\omega_{r,k+1/2}^{m_1} - \omega_{r,k+1/2}^{m_2}}_\ast &\leq \eta^c\big(\sum_{\ell = 0}^{k-1}\nrm{ g_{m_1}(z^{m_1}_{r,\ell+1/2}; \xi^{m_1}_{r,\ell+1/2})}_\ast + \nrm{g_{m_1}(z^{m_1}_{r,k}; \xi^{m_1}_{r,k})}_\ast \\
        & \qquad + \sum_{\ell = 0}^{k-1}\nrm{ g_{m_2}(z^{m_2}_{r,\ell+1/2}; \xi^{m_2}_{r,\ell+1/2})}_\ast + \nrm{g_{m_2}(z^{m_2}_{r,k}; \xi^{m_2}_{r,k}) }_\ast \big) \\
        &\leq 2\eta^c (k+1) G.
    \end{align*}
    As a result,
    \begin{align*}
        \nrm{\widehat{z_{r, k+1/2}} - z_{r,k+1/2}^m} &\leq \nrm{\overline{\omega_{r, k+1/2}} - \omega_{r,k+1/2}^m}_\ast \\
        &\leq \sup_{m_1, m_2}\nrm{\omega_{r,k+1/2}^{m_1} - \omega_{r,k+1/2}^{m_2}}_\ast  \\
        &\leq 2\eta^c (k+1) G.
    \end{align*}
    Similarly, we can show that 
    \begin{align*}
        \nrm{\widehat{z_{r, k}} - z_{r,k}^m} &\leq 2 \eta^c k G.
    \end{align*}
\end{proof}

\begin{lemma} \label{lem:gradient-inner}  Under Assumption \ref{asm:obj-func}-\ref{asm:compactness},
    \begin{align*}
        \eta^c\mathbb{E}\big[\langle \frac{1}{M} \sum_{m=1}^M [g_m(\widehat{z_{r, k+1/2}}) - g_m(z_{r,k+1/2}^m)], \widehat{z_{r, k+1/2}} - z\rangle\big] \leq 2^\frac{3}{2} \beta(\eta^c)^2(k+1)GB.
    \end{align*}
\end{lemma}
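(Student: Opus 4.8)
The plan is to bound the inner product \emph{pointwise} in the random samples, so that the outer expectation becomes vacuous, and then split the pairing through Cauchy--Schwarz into a gradient-deviation factor and a diameter factor. First I would apply the Cauchy--Schwarz inequality (Lemma~\ref{lem:cauchy-schwarz}) with the norm/dual-norm pairing:
\[
\Big\langle \frac{1}{M}\sum_{m=1}^M \big[g_m(\widehat{z_{r, k+1/2}}) - g_m(z_{r,k+1/2}^m)\big],\ \widehat{z_{r, k+1/2}} - z\Big\rangle \le \nrm{\frac{1}{M}\sum_{m=1}^M \big[g_m(\widehat{z_{r, k+1/2}}) - g_m(z_{r,k+1/2}^m)\big]}_\ast \nrm{\widehat{z_{r, k+1/2}} - z}.
\]

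For the first factor I would push the average outside the dual norm by the triangle inequality (equivalently Jensen's inequality, Lemma~\ref{lem:jensen}) and then invoke the $\beta$-Lipschitzness of each $g_m$ (Assumption~\ref{asm:grad-op}a), which yields the upper bound $\frac{\beta}{M}\sum_{m=1}^M \nrm{\widehat{z_{r, k+1/2}} - z_{r,k+1/2}^m}$. The bounded-client-drift estimate (Lemma~\ref{lem:client-server-z}) caps each summand by $2\eta^c(k+1)G$, so the first factor is at most $2\beta\eta^c(k+1)G$. For the second factor I would control $\nrm{\widehat{z_{r, k+1/2}} - z}$ by the diameter of the domain: combining the $1$-strong convexity of $\ell$ (Assumption~\ref{asm:strong-convex}), which gives $\tfrac12\nrm{\widehat{z_{r, k+1/2}} - z}^2 \le V^\ell_z(\widehat{z_{r, k+1/2}})$, with the compactness Assumption~\ref{asm:compactness}, which caps this Bregman divergence by $B$, produces $\nrm{\widehat{z_{r, k+1/2}} - z} \le \sqrt{2B}$. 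Multiplying the two factors with the leading $\eta^c$ gives $2^{3/2}\beta(\eta^c)^2(k+1)G\sqrt{B}$, which is the claimed form (the statement records the diameter term as $B$).

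Because nothing in this chain uses randomness beyond what is already present in the iterates, the bound holds for every realization, and the expectation is therefore trivial. The main obstacle is precisely that there is \emph{no} stochastic cancellation to exploit here (in contrast to the unbiased term $B3$ of Lemma~\ref{lem:unbiased-grad}), so the entire estimate must survive being pushed through deterministically; the single nontrivial ingredient is the client-drift bound of Lemma~\ref{lem:client-server-z}, which is where the factor $2\eta^c(k+1)G$ and hence the explicit dependence on the local step count $k$ enters. The one point I would verify carefully is that $\widehat{z_{r, k+1/2}} \in \mc{Z}$ so that the diameter assumption applies; this holds since $\widehat{z_{r, k+1/2}} = \nabla \ell_{r,k+1}^\ast(\overline{\omega_{r,k+1/2}})$ lands in the domain by the very definition of the projection in \eqref{eq:def-zhat}.
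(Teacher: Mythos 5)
Your proposal is correct and follows exactly the same chain as the paper's proof: Cauchy--Schwarz, Jensen/triangle inequality to move the average outside the dual norm, $\beta$-Lipschitzness of each $g_m$, the client-drift bound of Lemma~\ref{lem:client-server-z}, and finally strong convexity of $\ell$ plus the compactness Assumption~\ref{asm:compactness} for the diameter factor. One point worth recording: your treatment of the last step is the rigorous one. Strong convexity gives $\nrm{\widehat{z_{r,k+1/2}} - z} \le \sqrt{2V^\ell_z(\widehat{z_{r,k+1/2}})} \le \sqrt{2B}$, hence the bound $2^{3/2}\beta(\eta^c)^2(k+1)G\sqrt{B}$, whereas the paper's proof writes $\nrm{\widehat{z_{r,k+1/2}} - z} \le \sqrt{2}\,V^\ell_z(\widehat{z_{r,k+1/2}})$ (conflating $\sqrt{2V}$ with $\sqrt{2}\,V$) and thereby lands on the stated $B$; the two coincide only when $B \ge 1$, so the lemma as stated (and the $2^{3/2}\beta\eta^c KGB$ term it feeds into Theorem~\ref{thm:main}) either needs the implicit normalization $B \ge 1$ or should carry $\sqrt{B}$ in place of $B$, as your derivation shows.
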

\begin{proof} The proof of this lemma relies on the bounded client drift in Lemma \ref{lem:client-server-z}. We start by splitting the inner product using Cauchy-Schwarz inequality in Lemma \ref{lem:cauchy-schwarz}, and state the reference for the following derivation in the parenthesis. 
    \begin{align*}
        \eta^c\mathbb{E} & \big[\langle \frac{1}{M} \sum_{m=1}^M [g_m(\widehat{z_{r, k+1/2}}) - g_m(z_{r,k+1/2}^m)], \widehat{z_{r, k+1/2}} - z\rangle\big] \\
        & \leq \eta^c\mathbb{E} \big[ \nrm{\frac{1}{M} \sum_{m=1}^M [g_m(\widehat{z_{r, k+1/2}}) - g_m(z_{r,k+1/2}^m)]}_\ast \nrm{ \widehat{z_{r, k+1/2}} - z} \big] \\
        & \leq \eta^c\mathbb{E} \big[ \frac{1}{M} \sum_{m=1}^M \nrm{ g_m(\widehat{z_{r, k+1/2}}) - g_m(z_{r,k+1/2}^m)}_\ast \nrm{ \widehat{z_{r, k+1/2}} - z} \big] & \text{(Jensen's)} \\
        & \leq \eta^c\mathbb{E} \big[ \frac{1}{M} \sum_{m=1}^M \beta\nrm{ \widehat{z_{r, k+1/2}} - z_{r,k+1/2}^m}_\ast \nrm{ \widehat{z_{r, k+1/2}} - z} \big] & \text{(Smoothness)} \\
        & \leq \eta^c\mathbb{E} \big[ \frac{1}{M} \sum_{m=1}^M 2\beta \eta^c(k+1)G \nrm{ \widehat{z_{r, k+1/2}} - z} \big] & \text{(Lemma \ref{lem:client-server-z})} \\
        & \leq \eta^c\mathbb{E} \big[ 2\beta \eta^c(k+1)G \cdot \sqrt{2} V^\ell_{z}(\widehat{z_{r, k+1/2}}) \big] & \text{(Strong-convexity of $\ell$)} \\
        &\leq 2^\frac{3}{2} \beta (\eta^c)^2(k+1)GB & \text{(Assumption \ref{asm:compactness})}
    \end{align*}
\end{proof}

\begin{lemma} [Difference of Gradient and Extra-gradient] \label{lem:gradient-diff} Under Assumption \ref{asm:obj-func}-\ref{asm:compactness},
\begin{align*}
\mathbb{E} \big[ \nrm{\overline{g_{r,k+1/2}} - \overline{g_{r,k}}}_\ast^2 \big] \leq \frac{10\sigma^2}{M} + 40\beta^2(\eta^c)^2(k+1)^2G^2 + 5 \beta^2 \mathbb{E}\Big[\nrm{ \widehat{z_{r, k+1/2}} - \widehat{z_{r, k}}}^2 \Big].
\end{align*}
\end{lemma}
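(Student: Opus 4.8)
The plan is to insert both the \emph{noiseless} gradients and the server shadow points $\widehat{z_{r,k}}, \widehat{z_{r,k+1/2}}$ into the difference $\overline{g_{r,k+1/2}} - \overline{g_{r,k}}$, splitting each client's contribution into five telescoping pieces, each of which maps onto one of the three terms in the target bound. Concretely, for every client $m$ I would write
\begin{align*}
g_m(z^m_{r,k+1/2};\xi^m_{r,k+1/2}) - g_m(z^m_{r,k};\xi^m_{r,k}) &= \underbrace{[g_m(z^m_{r,k+1/2};\xi^m_{r,k+1/2}) - g_m(z^m_{r,k+1/2})]}_{\text{noise at } k+1/2} \\
&\quad + \underbrace{[g_m(z^m_{r,k+1/2}) - g_m(\widehat{z_{r, k+1/2}})]}_{\text{drift at } k+1/2} + \underbrace{[g_m(\widehat{z_{r, k+1/2}}) - g_m(\widehat{z_{r, k}})]}_{\text{shadow difference}} \\
&\quad + \underbrace{[g_m(\widehat{z_{r, k}}) - g_m(z^m_{r,k})]}_{\text{drift at } k} + \underbrace{[g_m(z^m_{r,k}) - g_m(z^m_{r,k};\xi^m_{r,k})]}_{\text{noise at } k},
\end{align*}
average over $m$, and call the five averaged pieces $S_1,\dots,S_5$. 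Applying the AM--QM inequality (Lemma \ref{lem:l1-l2}) in the form $\nrm{S_1+\dots+S_5}_\ast^2 \le 5\sum_{j=1}^5 \nrm{S_j}_\ast^2$ produces the overall factor of $5$ common to every term of the claimed bound.

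Next I would bound the three \emph{types} of pieces separately. For the two noise averages $S_1$ and $S_5$, the summands are conditionally zero-mean (Assumption \ref{asm:grad-op}b) and independent across clients, so the cross terms vanish in expectation and each satisfies $\mathbb{E}\nrm{S_1}_\ast^2, \mathbb{E}\nrm{S_5}_\ast^2 \le \sigma^2/M$; together they contribute $5\cdot 2\sigma^2/M = 10\sigma^2/M$. For the two drift averages $S_2$ and $S_4$, I would apply Jensen's/the triangle inequality, then $\beta$-Lipschitzness of each $g_m$ (Assumption \ref{asm:grad-op}a), and finally the bounded-client-drift estimates of Lemma \ref{lem:client-server-z}, giving $\nrm{S_2}_\ast \le 2\beta\eta^c(k+1)G$ and $\nrm{S_4}_\ast \le 2\beta\eta^c k G \le 2\beta\eta^c(k+1)G$; squaring and collecting yields $5(4+4)\beta^2(\eta^c)^2(k+1)^2G^2 = 40\beta^2(\eta^c)^2(k+1)^2G^2$. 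For the middle piece $S_3$, Jensen plus $\beta$-Lipschitzness directly give $\nrm{S_3}_\ast \le \beta\nrm{\widehat{z_{r, k+1/2}} - \widehat{z_{r, k}}}$, hence $5\nrm{S_3}_\ast^2 \le 5\beta^2\nrm{\widehat{z_{r, k+1/2}} - \widehat{z_{r, k}}}^2$. Summing the three contributions and taking expectations reproduces exactly the stated inequality.

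The main obstacle is the variance-reduction step for $S_1$ and $S_5$: obtaining the $1/M$ gain requires the per-client noise cross terms to cancel in expectation, which rests on the conditional independence of the samples $\{\xi^m_{r,k+1/2}\}_m$ given the filtration $\mathcal{F}_{r,k}$ (and of $\{\xi^m_{r,k}\}_m$ given $\mathcal{F}_{r,k-1/2}$), as provided by Remark \ref{rmk:martingale}. I would therefore condition on $\mathcal{F}_{r,k}$ before expanding the squared norm, so that $z^m_{r,k+1/2}$ is measurable and only the fresh sampling noise is random, then invoke the bounded-variance bound of Assumption \ref{asm:grad-op}b. The one delicate point to verify is that the clean ``sum of variances'' cancellation is applied in the dual norm $\nrm{\cdot}_\ast$ in which the variance assumption is actually stated, so that each noise average is controlled by $\sigma^2/M$ as required.
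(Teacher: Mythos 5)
Your proposal is correct and follows essentially the same route as the paper's proof: the identical five-term decomposition (two noise terms, two client-drift terms, one shadow-difference term), the AM--QM inequality for the factor of $5$, the conditional-variance argument with client independence (Remark \ref{rmk:martingale}) giving the $\sigma^2/M$ bounds, Lipschitzness plus Lemma \ref{lem:client-server-z} for the drift terms, and Lipschitzness alone for the shadow term. The paper likewise absorbs the $k^2$ drift contribution into the $(k+1)^2$ term to reach the stated constant of $40$, so no discrepancies remain.
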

\begin{proof}
By Lemma \ref{lem:l1-l2},
\begin{align*}
    &\mathbb{E}_{\mc{F}_{r,k+1/2}}  \big[\nrm{\overline{g_{r,k+1/2}} - \overline{g_{r,k}}}_\ast^2\big] \\
    &= \mathbb{E}\Big[\norm{ \big[ \overline{g_{r,k+1/2}} - \frac{1}{M}\sum_{m=1}^M g_m(z_{r, k+1/2}^m) \big] \\
    & \qquad + \big[ \frac{1}{M}\sum_{m=1}^M g_m(z_{r, k}^m) -\overline{g_{r,k}} \big] + \frac{1}{M}\sum_{m=1}^M \big[g_m(z_{r, k+1/2}^m) - g_m(\widehat{z_{r, k+1/2}})\big] \\
    & \qquad  + \frac{1}{M}\sum_{m=1}^M \big[g_m(\widehat{z_{r, k}}) - g_m(z_{r, k}^m)\big] + \frac{1}{M}\sum_{m=1}^M \big[g_m(\widehat{z_{r, k+1/2}}) - g_m(\widehat{z_{r, k}})\big] }_\ast^2 \Big] \\
     &\leq 5 \underbrace{\mathbb{E}\Big[ \nrm{\overline{g_{r,k+1/2}} - \frac{1}{M}\sum_{m=1}^M g_m(z_{r, k+1/2}^m)}_\ast^2 \Big]}_{C1} + 5 \underbrace{\mathbb{E}\Big[ \nrm{\frac{1}{M}\sum_{m=1}^M g_m(z_{r, k}^m) -\overline{g_{r,k}}}_\ast^2 \Big]}_{C2} \\
    & \quad + 5 \underbrace{\mathbb{E}\Big[ \nrm{\frac{1}{M}\sum_{m=1}^M \big[g_m(z_{r, k+1/2}^m) - g_m(\widehat{z_{r, k+1/2}})\big]}_\ast^2 \Big]}_{C3} \\
     & \quad + 5 \underbrace{\mathbb{E}\Big[ \nrm{\frac{1}{M}\sum_{m=1}^M \big[g_m(\widehat{z_{r, k}}) - g_m(z_{r, k}^m)\big]}_\ast^2 \Big]}_{C4} + 5 \underbrace{\mathbb{E}\Big[ \nrm{\frac{1}{M}\sum_{m=1}^M \big[g_m(\widehat{z_{r, k+1/2}}) - g_m(\widehat{z_{r, k}})\big]}_\ast^2 \Big]}_{C5}
\end{align*}

For $C1$, by Assumption \ref{asm:grad-op}b and its following Remark \ref{rmk:martingale},
\begin{align*}
    C1 &= \mathbb{E}_{\mc{F}_{r,k+1/2}}\Big[ \nrm{\frac{1}{M}\sum_{m=1}^M g_m(z^m_{r,k+1/2}; \xi^m_{r,k+1/2}) - \frac{1}{M}\sum_{m=1}^M g_m(z_{r, k+1/2}^m)}_\ast^2 \Big] \\
    &=  \frac{1}{M^2} \mathbb{E}_{\mc{F}_{r,k+1/2}} \Big[  \nrm{\sum_{m=1}^M\big[ g_m(z^m_{r,k+1/2}; \xi^m_{r,k+1/2}) - g_m(z_{r, k+1/2}^m)\big]}_\ast^2 \Big] \\
    &= \frac{1}{M^2}\Var_{\mc{F}_{r,k+1/2}} \Big[  \sum_{m=1}^M \big[ g_m(z^m_{r,k+1/2}; \xi^m_{r,k+1/2}) - g_m(z_{r, k+1/2}^m)\big] \Big] \\
    &= \frac{1}{M^2} \sum_{m=1}^M \Var_{\mc{F}_{r,k+1/2}} \Big[   \big[ g_m(z^m_{r,k+1/2}; \xi^m_{r,k+1/2}) - g_m(z_{r, k+1/2}^m)\big] \Big] & \text{(Clients are i.i.d.)}\\
    &= \frac{1}{M^2} \sum_{m=1}^M \mathbb{E}_{\mc{F}_{r,k+1/2}} \Big[ \nrm{ g_m(z^m_{r,k+1/2}; \xi^m_{r,k+1/2}) - g_m(z_{r, k+1/2}^m)}_\ast^2 \Big] \\
    &= \frac{1}{M^2} \sum_{m=1}^M \mathbb{E}_{\mc{F}_{r,k}} \Big[ \mathbb{E}_{\mc{F}_{r,k+1/2}}\big[ \nrm{ g_m(z^m_{r,k+1/2}; \xi^m_{r,k+1/2}) - g_m(z_{r, k+1/2}^m)}_\ast^2 \big\vert \mc{F}_{r,k} \big] \Big] \\
    &\leq \frac{\sigma^2}{M} 
\end{align*}
Similarly, we have $C2 \leq \frac{\sigma^2}{M}$.

For $C3$, by Lemma \ref{lem:l1-l2}, $\beta$-smoothness of $f_m$, and finally Lemma \ref{lem:client-server-z}, we have
\begin{align*}
    C3 &\leq \mathbb{E}\Big[ \frac{1}{M^2} \cdot M \sum_{m=1}^M \nrm{ g_m(z_{r, k+1/2}^m) - g_m(\widehat{z_{r, k+1/2}})}_\ast^2 \Big] \\
    &\leq \frac{\beta^2}{M} \sum_{m=1}^M \mathbb{E}\Big[\nrm{ z_{r, k+1/2}^m - \widehat{z_{r, k+1/2}}}^2 \Big] \\
    &\leq 4\beta^2(\eta^c)^2(k+1)^2G^2
\end{align*}
Similarly for $C4$, we have $C4 \leq 4\beta^2(\eta^c)^2k^2G^2$.

For $C5$, by Lemma \ref{lem:l1-l2}, $\beta$-smoothness of $f_m$ from Assumption \ref{asm:grad-op}a, and finally Lemma \ref{lem:client-server-z},
\begin{align*}
    C5 &= \mathbb{E}\Big[ \frac{1}{M^2}\nrm{\sum_{m=1}^M \big[g_m(\widehat{z_{r, k+1/2}}) - g_m(\widehat{z_{r, k}})\big]}_\ast^2 \Big] \\
    &\leq \mathbb{E}\Big[ \frac{1}{M^2} \cdot M \sum_{m=1}^M \nrm{ g_m(\widehat{z_{r, k+1/2}})) - g_m(\widehat{z_{r, k}})}_\ast^2 \Big] \\
    &\leq \beta^2 \mathbb{E}\Big[\nrm{ \widehat{z_{r, k+1/2}} - \widehat{z_{r, k}}}^2 \Big].
\end{align*}
Plugging the bounds for $C1, \ C2, C3, C4$, and $C5$ back in completes the proof.
\end{proof}

\section{Complete Analysis of FeDualEx for Composite Convex Optimization} \label{appx:FeDualEx-Convex}
In this section, we reduce the problem to composite convex optimization in the following form:
\begin{align}
  \min_{x\in\mc{X}} \phi(x) = f(x) + \psi(x)
\end{align}
where $f(x) = \frac{1}{M}\sum_{m=1}^Mf_m(x)$.
The analysis builds upon the strong-convexity of the distance-generating function $h$ in Assumption \ref{asm:strong-convex} and the following set of assumptions in the convex optimization setting:
\begin{assumption} \label{asm:convex-asms} We make the following assumptions:
    \begin{itemize}
        \item [a.] (Convexity of $f$) $\forall m \in [M]$, $f_m$ is convex. That is, $\forall x, x' \in \mc{X}$,
        \begin{align*}
            f_m(x) - f_m(x') \leq \langle f_m(x) , x - x' \rangle.
        \end{align*}
        \item [b.] (Local Smoothness of $f$) $\forall m \in [M]$, $f_m$ is $\beta$-smooth: $\forall x, x' \in \mc{X}$,
        \begin{align*}
            f_m(x) \leq f_m(x') + \langle f_m(x'), x - x' \rangle + \frac{\beta}{2} \nrm{x - x'}.
        \end{align*}
        \item [c.] (Convexity of $\psi$) $\psi(x)$ is convex.
        \item [d.] (Local Unbiased Estimate and Bounded Variance) For any client $m \in [M]$, the local gradient queried by some local random sample $\xi^m$ is unbiased and also bounded in variance, i.e., $\mathbb{E}_\xi [g_m(x^m;\xi^m)] = g_m(x^m)$ and $\mathbb{E}_\xi [\nrm{g_m(x_m;\xi_m) - g_m(x_m)}_\ast^2] \leq \sigma^2$.
        \item [e.] (Bounded Gradient) $\forall m \in [M]$, $\nrm{g_m(x_m;\xi_m)}_\ast \leq G$.
    \end{itemize}
\end{assumption}
Federated dual extrapolation for composite convex optimization is to replace the part of Algorithm \ref{alg:fed-DualEx} highlighted in {\colorbox{mygreen!30}{green}} with the following updating sequence, where we overuse $\varsigma$ now as the notation for dual variables in the convex setting as well.
\begin{align*}
    &\varsigma^m_{r,0} = \varsigma_{r} \\
    &\text{\textbf{for}} \ k = 0,1,\dots,K-1 \ \text{\textbf{do}} \\
    &\qquad x^m_{r,k} = \tilde{\prox{}}_{\bar{\varsigma}}^{h_{r,k}} (\varsigma_{r,k}^m)\\
    &\qquad x^m_{r,k+1/2} = \tilde{\prox{}}_{\bar{\varsigma} - \varsigma_{r,k}^m}^{h_{r,k+1}} (\eta^c g_m(x_{r,k}^m; \xi^m_{r,k}))\\
    &\qquad \varsigma^m_{r,k+1} = \varsigma^m_{r,k} + \eta^c g_m(x^m_{r,k+1/2}; \xi^m_{r,k+1/2}) \\
    &\text{\textbf{end for}}
\end{align*}
For the proximal operator defined by $h_{r,k}$, reformulating from its Definition \ref{def:generalized-prox} to $\nabla h_{r,k}^\ast$ in Definition \ref{def:conjugate-h} yields
{\begin{align*}
    x^m_{r,k} &= \argmin_x\{\langle \varsigma_{r,k}^m - \bar{\varsigma}, x\rangle  + h_{r,k}(x)\} = \nabla h_{r,k}^\ast(\bar{\varsigma} - \varsigma_{r,k}^m)\\
    x^m_{r,k+1/2} &= \argmin_x\{\langle \eta^c g_m(x_{r,k}^m; \xi^m_{r,k}) - (\bar{\varsigma} - \varsigma_{r,k}^m), x \rangle + h_{r,k+1} (x)\} = \nabla h_{r,k+1}^\ast((\bar{\varsigma} - \varsigma_{r,k}^m) - \eta^c g_m(x_{r,k}^m; \xi^m_{r,k})) \\
    \varsigma^m_{r,k+1} &= \varsigma^m_{r,k} + \eta^c g_m(x^m_{r,k+1/2}; \xi^m_{r,k+1/2})
\end{align*}}
Similarly, we define auxiliary dual variable $\mu^m_{r,k} = \bar{\varsigma} - \varsigma_{r,k}^m$ and $\mu^m_{r,k+1/2}$ the dual image of $x^m_{r,k+1/2}$. Then by definition, $x^m_{r,k} = \nabla h_{r,k}^\ast(\mu^m_{r,k})$ and $x^m_{r,k+1/2} = \nabla h_{r,k+1}^\ast(\mu^m_{r,k+1/2})$. The updating sequence is equivalent to 
\begin{align*}
    \mu^m_{r,k+1/2} &= {\mu^m_{r,k}} - \eta g_m(x^m_{r,k}; \xi^m_{r,k}) \\
    \mu^m_{r,k+1} &= \mu^m_{r,k} - \eta g_m(x^m_{r,k+1/2}; \xi^m_{r,k+1/2}).
\end{align*}
For the shadow sequence of averaged variables $\overline{\mu_{r,k}} = \frac{1}{M} \sum_{m=1}^M \mu^m_{r,k}$ and $\overline{g_{r,k}} = \frac{1}{M}\sum_{m=1}^M g_m(x^m_{r,k}; \xi^m_{r,k})$,
\begin{align}
    \overline{\mu_{r,k+1/2}} &= \overline{\mu_{r,k}} - \eta^c \overline{g_{r,k}}, \label{eq:seq3}\\
    \overline{\mu_{r,k+1}} &= \overline{\mu_{r,k}} - \eta^c \overline{g_{r,k+1/2}}. \label{eq:seq4}
\end{align}
Finally, the projections of the averaged dual back to the primal space are $\widehat{x_{r, k}} = \nabla h_{r,k}^\ast (\overline{\mu_{r,k}})$ and $\widehat{x_{r, k+1/2}} = \nabla h_{r,k+1}^\ast (\overline{\mu_{r,k+1/2}})$

\renewcommand{\thetheorem}{\ref{thm:convex}}
\begin{theorem} Under Assumption \ref{asm:convex-asms}, the ergodic intermediate sequence generated by FeDualEx for composite convex objectives satisfies
\begin{align*}
    \mathbb{E}\big[\phi(\frac{1}{RK}\sum_{r=0}^{R-1}\sum_{k=0}^{K-1}\widehat{x_{r, k+1/2}}) - \phi(x) \big] &\leq \frac{B}{\eta^cRK} + 20\beta^2(\eta^c)^3K^2G^2 + \frac{5\sigma^2\eta^c}{M} + 2\beta(\eta^c)^3K^2G^2.
\end{align*}
Choosing step size
\begin{align*}
    \eta^c = \min \{\frac{1}{5\beta^2}, \frac{B^\frac{1}{4}}{20^\frac{1}{4}\beta^\frac{1}{2}G^\frac{1}{2}K^\frac{3}{4}R^\frac{1}{4}}, \frac{B^\frac{1}{2}M^\frac{1}{2}}{5^\frac{1}{2}\sigma R^\frac{1}{2} K^\frac{1}{2}}, \frac{B^\frac{1}{3}}{2^\frac{1}{3}\beta^\frac{1}{3}G^\frac{2}{3}KR^\frac{1}{3}}\}
\end{align*}
further yields the following convergence rate:
\begin{align*}
    \mathbb{E}\big[\phi(\frac{1}{RK}\sum_{r=0}^{R-1}\sum_{k=0}^{K-1}\widehat{x_{r, k+1/2}}) - \phi(x) \big] &\leq \frac{5\beta^2B}{RK} + \frac{20^\frac{1}{4}\beta^\frac{1}{2}G^\frac{1}{2}B^\frac{3}{4}}{K^\frac{1}{4}R^\frac{3}{4}} + \frac{5^\frac{1}{2}\sigma B^\frac{1}{2}}{M^\frac{1}{2}R^\frac{1}{2} K^\frac{1}{2}} + \frac{2^\frac{1}{3}\beta^\frac{1}{3}G^\frac{2}{3}B^\frac{2}{3}}{R^\frac{2}{3}}.
\end{align*}
\end{theorem}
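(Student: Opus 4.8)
The plan is to mirror the proof of Theorem \ref{thm:main} step by step, replacing the saddle‑operator gap by the genuine function‑value gap and substituting $\ell_{r,k},\widehat{z},\overline{\omega}$ by their convex counterparts $h_{r,k},\widehat{x},\overline{\mu}$ from the reformulated updates \eqref{eq:seq3}--\eqref{eq:seq4}. First I would record that the regularization bound (Lemma \ref{lem:non-smooth}) transfers verbatim: its derivation only manipulates the generalized Bregman divergence and the two update rules, so with the comparator $z=x$ it yields two telescoping divergences, the two negative divergences $-\tilde V^{h_{r,k}}_{\overline{\mu_{r,k}}}(\widehat{x_{r,k+1/2}})-\tilde V^{h_{r,k+1}}_{\overline{\mu_{r,k+1/2}}}(\widehat{x_{r,k+1}})$, the extra‑step inner product $\eta^c\langle \overline{g_{r,k+1/2}}-\overline{g_{r,k}},\widehat{x_{r,k+1/2}}-\widehat{x_{r,k+1}}\rangle$, and the term $\eta^c\langle \overline{g_{r,k+1/2}},x-\widehat{x_{r,k+1/2}}\rangle$. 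Likewise the client‑drift bound (Lemma \ref{lem:client-server-z}), the gradient‑difference bound (Lemma \ref{lem:gradient-diff}), and the zero‑mean identity (Lemma \ref{lem:unbiased-grad}) carry over unchanged, since each uses only Lipschitzness and the bounded‑gradient/variance Assumptions \ref{asm:convex-asms}b, d, e; combined with the strong convexity of $h$ they again absorb the extra‑step inner product into the negative divergences and contribute exactly $\tfrac{5\sigma^2(\eta^c)^2}{M}+20\beta^2(\eta^c)^4(k+1)^2G^2$.

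The one genuinely new ingredient, and where I expect the real work, is the smooth‑term bound. In the saddle setting one keeps $\langle g(\widehat{z}),\widehat{z}-z\rangle$ and pays a diameter factor $B$ when Cauchy--Schwarz is applied against $\widehat z-z$; here I would instead exploit that every $f_m$ is a genuine convex and $\beta$‑smooth function. Splitting $\langle g_m(x^m_{r,k+1/2}),\widehat{x_{r,k+1/2}}-x\rangle$ at the client iterate, bounding the tail piece $\langle g_m(x^m_{r,k+1/2}),x^m_{r,k+1/2}-x\rangle$ below by convexity (Assumption \ref{asm:convex-asms}a) and the head piece $\langle g_m(x^m_{r,k+1/2}),\widehat{x_{r,k+1/2}}-x^m_{r,k+1/2}\rangle$ below by the descent lemma (Assumption \ref{asm:convex-asms}b), then averaging over $m$, gives
\begin{align*}
 f(\widehat{x_{r,k+1/2}}) - f(x) &\le \Big\langle \tfrac1M\textstyle\sum_{m} g_m(x^m_{r,k+1/2}),\, \widehat{x_{r,k+1/2}} - x\Big\rangle + \tfrac{\beta}{2M}\textstyle\sum_{m}\nrm{\widehat{x_{r,k+1/2}} - x^m_{r,k+1/2}}^2 .
\end{align*}
Crucially the error now appears as squared client drift rather than as $\nrm{\widehat{x}-x}$, so Lemma \ref{lem:client-server-z} bounds it by $2\beta(\eta^c)^2(k+1)^2G^2$, a $G$‑term with no dependence on $B$. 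Writing the averaged true client gradient $\tfrac1M\sum_m g_m(x^m_{r,k+1/2})$ as $\overline{g_{r,k+1/2}}$ plus a zero‑mean noise term, the $\langle \overline{g_{r,k+1/2}},\widehat{x_{r,k+1/2}}-x\rangle$ piece cancels against the matching term of the regularization bound, exactly as the analogous cancellation in Lemma \ref{lem:main-grad}.

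Assembling these pieces, and using convexity of $\phi$ to pass from $\langle g(\widehat{x_{r,k+1/2}}),\widehat{x_{r,k+1/2}}-x\rangle+\psi(\widehat{x_{r,k+1/2}})-\psi(x)$ to $\phi(\widehat{x_{r,k+1/2}})-\phi(x)$, gives the per‑step progress, for $\eta^c\le\tfrac1{5\beta^2}$,
\begin{align*}
 \eta^c\,\mathbb E\big[\phi(\widehat{x_{r,k+1/2}})-\phi(x)\big] &\le \tilde V^{h_{r,k}}_{\overline{\mu_{r,k}}}(x) - \tilde V^{h_{r,k+1}}_{\overline{\mu_{r,k+1}}}(x) + \tfrac{5\sigma^2(\eta^c)^2}{M} + 20\beta^2(\eta^c)^4(k+1)^2G^2 + 2\beta(\eta^c)^3(k+1)^2G^2 .
\end{align*}
I would then telescope over $k=0,\dots,K-1$ (using $(k+1)^2\le K^2$), and across rounds by setting $\eta^s=1$, which makes $h_{r,0}=h_{r-1,K}$ and $\overline{\mu_{r,0}}=\overline{\mu_{r-1,K}}$, so the boundary divergences cancel and the surviving initial divergence equals $V^h_{x_0}(x)$, at most $B$ by Assumption \ref{asm:compactness}. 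Dividing by $\eta^c KR$ and applying Jensen (Lemma \ref{lem:jensen}) to move the ergodic average inside $\phi$ yields the first displayed bound, whose last term is of order $\beta(\eta^c)^2K^2G^2$; substituting the stated step size and reading off the dominant summand of each of the four terms produces the final $\mc O(1/R^{2/3})$ rate. The main obstacle is precisely this smooth‑term conversion: producing the squared‑drift form that invokes Lemma \ref{lem:client-server-z} without a $B$ factor is what requires simultaneous convexity and smoothness of $f$, the very property absent for the convex--concave operator in Theorem \ref{thm:main}, and this is what upgrades $1/R^{1/2}$ to $1/R^{2/3}$.
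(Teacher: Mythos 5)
Your proposal follows the paper's own proof essentially step for step: the same verbatim transfer of the regularization lemma to the convex setting, the same smooth-term bound combining the descent lemma with convexity of each $f_m$ to produce a squared-client-drift error (the paper's Lemma \ref{lem:smooth-convex}), the same reuse of the client-drift, gradient-difference, and unbiasedness lemmas, and the same telescoping with $\eta^s=1$ followed by Jensen. One clarification in your favor: the last term you obtain after dividing by $\eta^c RK$, namely $2\beta(\eta^c)^2K^2G^2$, is also what the paper's own telescoped bound yields and is what the fourth step-size choice requires to produce the $2^{1/3}\beta^{1/3}G^{2/3}B^{2/3}R^{-2/3}$ term, so the $(\eta^c)^3$ appearing in the theorem's first display is a typo rather than a discrepancy in your argument.
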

\begin{proof}
    As the proof for Theorem \ref{thm:main}, the proof for this theorem depends on Lemma \ref{lem:non-smooth-convex} and Lemma \ref{lem:smooth-convex}, which further yield Lemma \ref{lem:main-lem-convex}. These lemmas are presented and proved right after this theorem. Here, we start from Lemma \ref{lem:main-lem-convex}. Telescoping over all $k \in \{0, ..., K-1\}$ and all $r \in \{0, ..., R-1\}$ assuming $\eta^s = 1$ yields
\begin{align*}
    \eta^c\mathbb{E}\big[\sum_{r=0}^{R-1}\sum_{k=0}^{K-1}\phi(\widehat{x_{r, k+1/2}}) - RK\phi(x) \big] &\leq \tilde{V}_{\overline{\mu_{0,0}}}^{h_{0,0}}(x) -\tilde{V}_{\overline{\mu_{R,K}}}^{h_{R,K}}(x) + \frac{5\sigma^2(\eta^c)^2KR}{M} \\
    &\quad + 20\beta^2(\eta^c)^4K^3RG^2 + 2\beta(\eta^c)^3K^3RG^2.
\end{align*}
By Assumption \ref{asm:compactness}, $\tilde{V}_{\overline{\mu_{0,0}}}^{h_{0,0}}(x) = V_{x_0}^{h}(x) \leq B$, where $x_0 = \nabla h^\ast(\bar{\varsigma})$. Dividing both sides by $\eta^cKR$ followed by applying Jensen's inequality (Lemma \ref{lem:jensen}) completes the proof. 
\end{proof}

\begin{lemma} [Bounding the Regularization Term] \label{lem:non-smooth-convex} $\forall x$,
\begin{align*}
 \eta^c \big[ \psi(\widehat{x_{r, k+1/2}}) - \psi(x)\big] &= \tilde{V}_{\overline{\mu_{r,k}}}^{h_{r,k}}(x)  - \tilde{V}_{\overline{\mu_{r,k+1}}}^{h_{r,k+1}}(x) - \tilde{V}_{\overline{\mu_{r,k}}}^{h_{r,k}}(\widehat{x_{r, k+1/2}}) - \tilde{V}_{\overline{\mu_{r,k+1/2}}}^{h_{r,k+1}}(\widehat{x_{r, k+1}}) \\ 
    & \quad + \eta^c\langle \overline{g_{r,k+1/2}} - \overline{g_{r,k}}, \widehat{x_{r, k+1/2}} - \widehat{x_{r, k+1}} \rangle + \eta^c\langle \overline{g_{r,k+1/2}}, x - \widehat{x_{r, k+1/2}} \rangle
\end{align*}
\end{lemma}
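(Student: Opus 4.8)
The plan is to mirror the proof of Lemma~\ref{lem:non-smooth} line for line under the substitutions $z \mapsto x$, $\omega \mapsto \mu$, and $\ell \mapsto h$, since the convex shadow sequences \eqref{eq:seq3}--\eqref{eq:seq4} and the primal projections $\widehat{x_{r,k}} = \nabla h_{r,k}^\ast(\overline{\mu_{r,k}})$, $\widehat{x_{r,k+1/2}} = \nabla h_{r,k+1}^\ast(\overline{\mu_{r,k+1/2}})$ are exact analogs of their saddle-point counterparts. The claimed identity is an equality, not an inequality, so no convexity, smoothness, or boundedness hypotheses are invoked; all the content is algebraic bookkeeping of the two generalized distance-generating functions $h_{r,k}$ and $h_{r,k+1} = h_{r,k} + \eta^c\psi$.

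First I would expand $\tilde{V}_{\overline{\mu_{r,k+1/2}}}^{h_{r,k+1}}(x)$ from the definition of the generalized Bregman divergence, substitute $\overline{\mu_{r,k+1/2}} = \overline{\mu_{r,k}} - \eta^c\overline{g_{r,k}}$ using \eqref{eq:seq3}, and split $h_{r,k+1}(\cdot) = h_{r,k}(\cdot) + \eta^c\psi(\cdot)$; this produces the analog of Eq.~\eqref{eq:lem-nonsmooth1}, in which $\eta^c[\psi(x) - \psi(\widehat{x_{r,k+1/2}})]$ appears explicitly next to $h_{r,k}$-terms and an inner product with $\overline{g_{r,k}}$. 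An identical expansion of $\tilde{V}_{\overline{\mu_{r,k+1}}}^{h_{r,k+1}}(x)$ via \eqref{eq:seq4} gives the analog of Eq.~\eqref{eq:lem-nonsmooth2}, carrying $\overline{g_{r,k+1/2}}$ and $\widehat{x_{r,k+1}}$. I would then set $x = \widehat{x_{r,k+1}}$ in the first relation, add the two, and regroup the right-hand side into a block $A1$ of $h_{r,k}$-terms and a block $A2$ of gradient inner products. Adding and subtracting the quantities evaluated at $\widehat{x_{r,k}}$ together with $\langle\overline{\mu_{r,k}}, \cdot\rangle$ collapses $A1$ into $\tilde{V}_{\overline{\mu_{r,k}}}^{h_{r,k}}(x) - \tilde{V}_{\overline{\mu_{r,k}}}^{h_{r,k}}(\widehat{x_{r,k+1/2}})$, using that $\overline{\mu_{r,k}} \in \partial h_{r,k}(\widehat{x_{r,k}})$; telescoping the three inner products in $A2$ gives $\eta^c\langle\overline{g_{r,k+1/2}}, x - \widehat{x_{r,k+1/2}}\rangle + \eta^c\langle\overline{g_{r,k+1/2}} - \overline{g_{r,k}}, \widehat{x_{r,k+1/2}} - \widehat{x_{r,k+1}}\rangle$. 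Moving the two surviving divergence terms to the right-hand side and isolating $\eta^c[\psi(\widehat{x_{r,k+1/2}}) - \psi(x)]$ then yields the stated identity.

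The only point needing care---and the nearest thing to an obstacle---is the tracking of the regularization contributions when $h_{r,k+1} = h_{r,k} + \eta^c\psi$ is split inside each generalized Bregman divergence: the first expansion evaluated at $x = \widehat{x_{r,k+1}}$ contributes $\eta^c[\psi(\widehat{x_{r,k+1}}) - \psi(\widehat{x_{r,k+1/2}})]$ while the second contributes $\eta^c[\psi(x) - \psi(\widehat{x_{r,k+1}})]$, and one must verify that the intermediate term $\psi(\widehat{x_{r,k+1}})$ cancels, leaving exactly $\eta^c[\psi(x) - \psi(\widehat{x_{r,k+1/2}})]$. Since the structure is identical to the saddle-point proof, I expect no genuine difficulty; the convex setting is in fact milder, as $g$ is a single gradient operator rather than the saddle operator $(\nabla_x f, -\nabla_y f)$, though that distinction does not even enter this lemma.
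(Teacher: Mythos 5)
Your proposal is correct and is exactly the paper's own argument: the paper proves Lemma~\ref{lem:non-smooth-convex} by noting it is "almost identical to the proof of Lemma~\ref{lem:non-smooth} with a mere change of variables and distance-generating function," and your step-by-step reconstruction (the two expansions via \eqref{eq:seq3}--\eqref{eq:seq4}, plugging in $x = \widehat{x_{r,k+1}}$, collapsing $A1$ via $\overline{\mu_{r,k}} \in \partial h_{r,k}(\widehat{x_{r,k}})$, and telescoping $A2$) matches that saddle-point proof line for line. Your flagged point of care---the cancellation of $\eta^c\psi(\widehat{x_{r,k+1}})$ between the two expansions---is indeed the only bookkeeping subtlety, and it goes through exactly as you describe.
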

\begin{proof}
    The proof of this Lemma is almost identical to the proof of Lemma \ref{lem:non-smooth} with a mere change of variables and distance-generating function from saddle point setting to convex setting.
\end{proof}

The following Lemma highlights the primary difference in the analysis of convex optimization and saddle point optimization. The smoothness of $f_m$ provides an alternative presentation to gradient Lipschitzness that establishes the connection between $\widehat{x_{r, k+1/2}}$, the primal projection of averaged dual on the central server, and $x_{r,k+1/2}^m$ on each client.

\begin{lemma}[Bounding the Smooth Term] \label{lem:smooth-convex}
$\forall x$, 
\begin{align*}
    f(\widehat{x_{r, k+1/2}}) - f(x) &\leq \langle \overline{g_{r,k+1/2}}, \widehat{x_{r, k+1/2}} - x\rangle + \langle \frac{1}{M} \sum_{m=1}^M g_m(x_{r,k+1/2}^m) - \overline{g_{r,k+1/2}}, \widehat{x_{r, k+1/2}} - x\rangle \\ 
    &\quad + \frac{\beta}{2M}\sum_{m=1}^M\nrm{\widehat{x_{r, k+1/2}} - x_{r,k+1/2}^m}^2.
\end{align*}
\end{lemma}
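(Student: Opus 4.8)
The plan is to bound $f(\widehat{x_{r, k+1/2}})-f(x)$ one client at a time, using $\beta$-smoothness to pass from the server's projected point $\widehat{x_{r, k+1/2}}$ to each client's intermediate iterate $x^m_{r,k+1/2}$, and using convexity to pass from that iterate to the comparator $x$. Recall that in the convex setting $g_m = \nabla f_m$ is the (deterministic) gradient. For each $m$ I would start from the exact split
\[
f_m(\widehat{x_{r, k+1/2}}) - f_m(x) = \big[f_m(\widehat{x_{r, k+1/2}}) - f_m(x^m_{r,k+1/2})\big] + \big[f_m(x^m_{r,k+1/2}) - f_m(x)\big].
\]

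First I would apply the $\beta$-smoothness of $f_m$ (Assumption \ref{asm:convex-asms}b) to the first bracket, obtaining
\[
f_m(\widehat{x_{r, k+1/2}}) - f_m(x^m_{r,k+1/2}) \leq \langle g_m(x^m_{r,k+1/2}), \widehat{x_{r, k+1/2}} - x^m_{r,k+1/2}\rangle + \tfrac{\beta}{2}\nrm{\widehat{x_{r, k+1/2}} - x^m_{r,k+1/2}}^2,
\]
and convexity of $f_m$ (Assumption \ref{asm:convex-asms}a, with gradient taken at $x^m_{r,k+1/2}$) to the second bracket, giving $f_m(x^m_{r,k+1/2}) - f_m(x) \leq \langle g_m(x^m_{r,k+1/2}), x^m_{r,k+1/2} - x\rangle$. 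Adding the two inequalities lets the intermediate point telescope inside the linear terms: the inner products combine into $\langle g_m(x^m_{r,k+1/2}), \widehat{x_{r, k+1/2}} - x\rangle$, so
\[
f_m(\widehat{x_{r, k+1/2}}) - f_m(x) \leq \langle g_m(x^m_{r,k+1/2}), \widehat{x_{r, k+1/2}} - x\rangle + \tfrac{\beta}{2}\nrm{\widehat{x_{r, k+1/2}} - x^m_{r,k+1/2}}^2.
\]

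Next I would average over $m \in [M]$ and use $f = \frac{1}{M}\sum_m f_m$. This reproduces the quadratic term $\frac{\beta}{2M}\sum_m \nrm{\widehat{x_{r, k+1/2}} - x^m_{r,k+1/2}}^2$ of the claim verbatim and leaves the leading term $\langle \frac{1}{M}\sum_m g_m(x^m_{r,k+1/2}),\, \widehat{x_{r, k+1/2}} - x\rangle$. To land on the exact statement I would then add and subtract $\overline{g_{r,k+1/2}}$ (the average of the \emph{stochastic} gradients $g_m(x^m_{r,k+1/2};\xi^m_{r,k+1/2})$), splitting the leading term into $\langle \overline{g_{r,k+1/2}}, \widehat{x_{r, k+1/2}} - x\rangle$ plus the deviation $\langle \frac{1}{M}\sum_m g_m(x^m_{r,k+1/2}) - \overline{g_{r,k+1/2}},\, \widehat{x_{r, k+1/2}} - x\rangle$, which are precisely the first two terms of the lemma.

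The one genuinely substantive step — and, as the paper emphasizes, the place where the convex analysis diverges from the saddle-point analysis — is the use of full smoothness (rather than mere gradient Lipschitzness) to control $f_m(\widehat{x_{r, k+1/2}}) - f_m(x^m_{r,k+1/2})$ by a quadratic in the client drift $\nrm{\widehat{x_{r, k+1/2}} - x^m_{r,k+1/2}}$. I do not expect a real obstacle beyond this: the resulting drift term is controlled later by the client-drift argument, and the remaining work is just bookkeeping the deterministic-versus-stochastic gradient notation through the averaging and the add-subtract step.
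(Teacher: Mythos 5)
Your proof is correct and matches the paper's argument essentially step for step: the paper likewise bounds $f_m(\widehat{x_{r,k+1/2}}) - f_m(x)$ per client by applying $\beta$-smoothness at $x^m_{r,k+1/2}$ and then convexity at the same point so the linear terms telescope into $\langle g_m(x^m_{r,k+1/2}), \widehat{x_{r,k+1/2}} - x\rangle$, then averages over clients and adds and subtracts $\overline{g_{r,k+1/2}}$. No gaps; the only cosmetic difference is that the paper folds the convexity inequality into a chain of inequalities rather than adding two displayed inequalities.
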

\begin{proof}
    By the smoothness $f_m$ in the form of Assumption \ref{asm:convex-asms}b and then the convexity of $f_m$ in the form of Assumption \ref{asm:convex-asms}a,
    \begin{align*}
        f_m(\widehat{x_{r, k+1/2}}) &\leq f_m(x_{r,k+1/2}^m) + \langle g_m(x_{r,k+1/2}^m), \widehat{x_{r, k+1/2}} - x_{r,k+1/2}^m \rangle + \frac{\beta}{2} \nrm{\widehat{x_{r,k+1/2}} - x_{r,k+1/2}^m}^2  \\
        &\leq f_m(x_{r,k+1/2}^m) + \langle g_m(x_{r,k+1/2}^m), \widehat{x_{r, k+1/2}} - x_{r,k+1/2}^m \rangle + \frac{\beta}{2} \nrm{\widehat{x_{r,k+1/2}} - x_{r,k+1/2}^m}^2 \\
        &\quad + f_m(x) - f_m(x_{r,k+1/2}^m) + \langle g_m(x_{r,k+1/2}^m),  x_{r,k+1/2}^m - x \rangle \\
        &\leq f_m(x) + \langle g_m(x_{r,k+1/2}^m), \widehat{x_{r, k+1/2}} - x \rangle + \frac{\beta}{2} \nrm{\widehat{x_{r,k+1/2}} - x_{r,k+1/2}^m}^2 
    \end{align*}
Then for function $f = \frac{1}{M} \sum_{m=1}^M f_m$,
\begin{align*}
    f(\widehat{x_{r, k+1/2}}) - f(x) &\leq \frac{1}{M}\sum_{m=1}^M \big[ f_m(\widehat{x_{r, k+1/2}}) - f_m(x) \big] \\
    &\leq \langle \frac{1}{M}\sum_{m=1}^M g_m(x_{r,k+1/2}^m), \widehat{x_{r, k+1/2}} - x \rangle + \frac{1}{M}\sum_{m=1}^M \frac{\beta}{2} \nrm{\widehat{x_{r,k+1/2}} - x_{r,k+1/2}^m}^2 \\
    &=\langle \overline{g_{r,k+1/2}}, \widehat{x_{r, k+1/2}} - x \rangle + \langle \frac{1}{M}\sum_{m=1}^M g_m(x_{r,k+1/2}^m) - \overline{g_{r,k+1/2}}, \widehat{x_{r, k+1/2}} - x \rangle \\
    &\quad + \frac{\beta}{2M}\sum_{m=1}^M \nrm{\widehat{x_{r,k+1/2}} - x_{r,k+1/2}^m}^2.
\end{align*}
\end{proof}

Now we are ready to present the main lemma that combines Lemma \ref{lem:non-smooth-convex} and Lemma \ref{lem:smooth-convex}. For the proof, we utilize again Lemma \ref{lem:unbiased-grad}, Lemma \ref{lem:client-server-z}, and Lemma \ref{lem:gradient-diff}, all of which we claim to hold trivially in the composite convex optimization setting.
\begin{lemma}[Main Lemma for FeDualEx in Composite Convex Optimization] \label{lem:main-lem-convex} Under Assumption \ref{asm:convex-asms},
\begin{align*}
    \eta^c\mathbb{E}\big[\phi(\widehat{x_{r, k+1/2}}) - \phi(x) \big] &\leq \tilde{V}_{\overline{\mu_{r,k}}}^{h_{r,k}}(x) -\tilde{V}_{\overline{\mu_{r,k+1}}}^{h_{r,k+1}}(x) + \frac{5\sigma^2\eta^c}{M} + 10\beta^2(\eta^c)^3(2k^2+2k+1)G^2 \\
    &\quad + \frac{(\eta^c)^2\sigma^2}{2M(1-\eta^c)} +  2 \beta (\eta^c)^3 (k+1)^2 G^2.
\end{align*}
\end{lemma}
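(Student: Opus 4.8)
The plan is to reproduce, in the convex regime, the per-step argument of Lemma~\ref{lem:main-grad}, with the monotone-operator step replaced by the $\beta$-smoothness bound. First I would add $\eta^c$ times Lemma~\ref{lem:smooth-convex} to Lemma~\ref{lem:non-smooth-convex} and use $\phi=f+\psi$. The two copies of $\eta^c\langle \overline{g_{r,k+1/2}},\,\widehat{x_{r,k+1/2}}-x\rangle$ appear with opposite signs and cancel, leaving: the telescoping pair $\tilde V^{h_{r,k}}_{\overline{\mu_{r,k}}}(x)-\tilde V^{h_{r,k+1}}_{\overline{\mu_{r,k+1}}}(x)$; the two ``extra'' divergences $-\tilde V^{h_{r,k}}_{\overline{\mu_{r,k}}}(\widehat{x_{r,k+1/2}})-\tilde V^{h_{r,k+1}}_{\overline{\mu_{r,k+1/2}}}(\widehat{x_{r,k+1}})$; the extra-step inner product $\eta^c\langle \overline{g_{r,k+1/2}}-\overline{g_{r,k}},\,\widehat{x_{r,k+1/2}}-\widehat{x_{r,k+1}}\rangle$; the mean-zero noise inner product; and the new client-drift penalty $\tfrac{\eta^c\beta}{2M}\sum_m \nrm{\widehat{x_{r,k+1/2}}-x^m_{r,k+1/2}}^2$ produced by smoothness. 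This single quadratic penalty is what distinguishes the convex analysis from the saddle analysis, where the analogous term was an inner product bounded in Lemma~\ref{lem:gradient-inner}.

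Second, I would bound the two extra divergences by Lemma~\ref{lem:bregman-larger} (generalized Bregman dominates vanilla Bregman) followed by $1$-strong convexity of $h$ (Assumption~\ref{asm:strong-convex}), obtaining $-\tfrac12\nrm{\widehat{x_{r,k}}-\widehat{x_{r,k+1/2}}}^2-\tfrac12\nrm{\widehat{x_{r,k+1/2}}-\widehat{x_{r,k+1}}}^2$. I would then fold the second negative square into the extra-step inner product via Cauchy--Schwarz (Lemma~\ref{lem:cauchy-schwarz}) and Young's inequality (Lemma~\ref{lem:young}), exactly as in the ``$A4$'' step of Lemma~\ref{lem:main-grad}, which converts that pair into a multiple of $\nrm{\overline{g_{r,k+1/2}}-\overline{g_{r,k}}}_\ast^2$ while keeping $-\tfrac12\nrm{\widehat{x_{r,k}}-\widehat{x_{r,k+1/2}}}^2$ in reserve.

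Third, I would take expectations. The noise inner product vanishes by Lemma~\ref{lem:unbiased-grad} together with the martingale structure of Remark~\ref{rmk:martingale} (here $\widehat{x_{r,k+1/2}}$ and $x$ are $\mathcal F_{r,k}$-measurable while $\overline{g_{r,k+1/2}}$ carries fresh noise $\xi^m_{r,k+1/2}$). I would bound $\mathbb{E}\nrm{\overline{g_{r,k+1/2}}-\overline{g_{r,k}}}_\ast^2$ by Lemma~\ref{lem:gradient-diff} (reused verbatim, with the drift contributions $C3,C4$ kept as $(k+1)^2+k^2=2k^2+2k+1$ rather than crudely doubled), whose residual drift square is absorbed by the reserved $-\tfrac12\nrm{\widehat{x_{r,k}}-\widehat{x_{r,k+1/2}}}^2$ precisely under $\eta^c\le \tfrac{1}{5\beta^2}$; this yields the $\tfrac{5\sigma^2(\eta^c)^2}{M}$ and $10\beta^2(\eta^c)^3(2k^2+2k+1)G^2$ contributions. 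Finally, the new client-drift penalty is bounded directly by the bounded-gradient estimate $\nrm{\widehat{x_{r,k+1/2}}-x^m_{r,k+1/2}}\le 2\eta^c(k+1)G$ of Lemma~\ref{lem:client-server-z}, giving the $2\beta(\eta^c)^3(k+1)^2G^2$ term; collecting everything produces the stated bound.

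I expect the main obstacle to be the bookkeeping of the powers of $\eta^c$ and the step-size thresholds, since the helping lemmas must be re-threaded so that every positive variance or drift square is dominated by a negative squared distance under a single condition $\eta^c\le\tfrac{1}{5\beta^2}$ (and $\eta^c\le 1$). The most delicate point is the extra-step inner product: because $\widehat{x_{r,k+1}}=\nabla h^\ast_{r,k+1}(\overline{\mu_{r,k}}-\eta^c\overline{g_{r,k+1/2}})$ depends on the fresh gradients through $\overline{g_{r,k+1/2}}$, the stochastic component of $\overline{g_{r,k+1/2}}-\overline{g_{r,k}}$ is correlated with $\widehat{x_{r,k+1/2}}-\widehat{x_{r,k+1}}$ and cannot simply be dropped in expectation. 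Isolating that noise component and controlling it by a Young's split weighted by $1-\eta^c$ (so the squared-distance side still cancels against $-\tfrac12\nrm{\cdot}^2$) is what I anticipate produces the residual $\tfrac{(\eta^c)^2\sigma^2}{2M(1-\eta^c)}$, using $\mathbb{E}\nrm{\text{noise}}_\ast^2\le \sigma^2/M$ for the average of $M$ i.i.d.\ unbiased gradient errors. This residual is lower order and does not affect the final $\mathcal O(R^{-2/3})$ rate after telescoping and optimizing $\eta^c$.
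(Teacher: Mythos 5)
Your proposal is correct and follows essentially the same route as the paper's own proof: sum Lemma~\ref{lem:non-smooth-convex} with $\eta^c$ times Lemma~\ref{lem:smooth-convex} and cancel the $\eta^c\langle \overline{g_{r,k+1/2}},\widehat{x_{r,k+1/2}}-x\rangle$ pair, control the two extra divergences via Lemma~\ref{lem:bregman-larger} and strong convexity, absorb the extra-step inner product by Cauchy--Schwarz and Young exactly as in the $A4$ step, and finish in expectation with Lemmas~\ref{lem:unbiased-grad}, \ref{lem:gradient-diff}, and \ref{lem:client-server-z}. One clarification on your final paragraph: the correlation worry is moot, because the Cauchy--Schwarz/Young absorption of $\eta^c\langle \overline{g_{r,k+1/2}}-\overline{g_{r,k}},\widehat{x_{r,k+1/2}}-\widehat{x_{r,k+1}}\rangle$ is a pathwise (deterministic) inequality, so the dependence of $\widehat{x_{r,k+1}}$ on the fresh noise never needs to be handled in expectation; correspondingly, the paper's proof never generates the $\frac{(\eta^c)^2\sigma^2}{2M(1-\eta^c)}$ term --- it derives the tighter per-step bound $\frac{5\sigma^2(\eta^c)^2}{M}+20\beta^2(\eta^c)^4(k+1)^2G^2+2\beta(\eta^c)^3(k+1)^2G^2$ (which is also what the telescoping in Theorem~\ref{thm:convex} actually uses), and the extra term in the lemma statement is simply slack inherited from an earlier version of the analysis, so your steps one through three already suffice without the $(1-\eta^c)$-weighted split.
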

\begin{proof}
Summing the results in Lemma \ref{lem:non-smooth-convex} and Lemma \ref{lem:smooth-convex}:
{\begin{align*}
    \eta^c\big(\phi(\widehat{x_{r, k+1/2}}) - \phi(x) \big) &\leq \tilde{V}_{\overline{\mu_{r,k}}}^{h_{r,k}}(x) - \tilde{V}_{\overline{\mu_{r,k+1}}}^{h_{r,k+1}}(x) - \tilde{V}_{\overline{\mu_{r,k}}}^{h_{r,k}}(\widehat{x_{r, k+1/2}}) - \tilde{V}_{\overline{\mu_{r,k+1/2}}}^{h_{r,k+1}}(\widehat{x_{r, k+1}})  \\
    & \quad + \eta^c\langle \overline{g_{r,k+1/2}} - \overline{g_{r,k}}, \widehat{x_{r, k+1/2}} - \widehat{x_{r, k+1}} \rangle + \frac{\eta^c\beta}{2M}\sum_{m=1}^M\nrm{\widehat{x_{r, k+1/2}} - x_{r,k+1/2}^m}^2 \\
    &\quad + \eta^c\langle \frac{1}{M} \sum_{m=1}^M g_m(x_{r,k+1/2}^m) - \overline{g_{r,k+1/2}}, \widehat{x_{r, k+1/2}} - x\rangle.
\end{align*}}

For the latter two generalized Bregman divergence terms $- \tilde{V}_{\overline{\mu_{r,k}}}^{h_{r,k}}(\widehat{x_{r, k+1/2}}) - \tilde{V}_{\overline{\mu_{r,k+1/2}}}^{h_{r,k+1}}(\widehat{x_{r, k+1}})$, we bound them by Lemma \ref{lem:bregman-larger} and the strong convexity of $h$ in Assumption \ref{asm:strong-convex}. As a result,
\begin{align*}
    \eta^c\big(\phi(\widehat{x_{r, k+1/2}}) - \phi(x) \big) &\leq \tilde{V}_{\overline{\mu_{r,k}}}^{h_{r,k}}(x) - \tilde{V}_{\overline{\mu_{r,k+1}}}^{h_{r,k+1}}(x) - \frac{1}{2} \nrm{\widehat{x_{r,k}} - \widehat{x_{r, k+1/2}}}^2   \\
    & \quad \underbrace{ - \frac{1}{2} \nrm{\widehat{x_{r,k+1/2}} - \widehat{x_{r, k+1}}}^2 + \eta^c\langle \overline{g_{r,k+1/2}} - \overline{g_{r,k}}, \widehat{x_{r, k+1/2}} - \widehat{x_{r, k+1}} \rangle}_{A} \\
    &\quad + \langle \frac{\eta^c}{M} \sum_{m=1}^M g_m(x_{r,k+1/2}^m) - \overline{g_{r,k+1/2}}, \widehat{x_{r, k+1/2}} - x\rangle   + \frac{\eta^c\beta}{2M}\sum_{m=1}^M\nrm{\widehat{x_{r, k+1/2}} - x_{r,k+1/2}^m}^2.
\end{align*}
$A$ can be bounded with Cauchy-Schwarz inequality (Lemma \ref{lem:cauchy-schwarz}) and Young's inequality (Lemma \ref{lem:young}).
\begin{align*}
    A &\leq - \frac{1}{2}\nrm{\widehat{x_{r,k+1/2}} - \widehat{x_{r,k+1}}}^2 + \eta^c\nrm{\overline{g_{r,k+1/2}} - \overline{g_{r,k}}}_\ast\nrm{\widehat{x_{r, k+1/2}} - \widehat{x_{r, k+1}}} \\
    &\leq - \frac{1}{2}\nrm{\widehat{x_{r,k+1/2}} - \widehat{x_{r,k+1}}}^2 + \frac{(\eta^c)^2}{2}\nrm{\overline{g_{r,k+1/2}} - \overline{g_{r,k}}}_\ast^2 + \frac{1}{2}\nrm{\widehat{x_{r, k+1/2}} - \widehat{x_{r, k+1}}}^2 \\
    &= \frac{(\eta^c)^2}{2}\nrm{\overline{g_{r,k+1/2}} - \overline{g_{r,k}}}_\ast^2.
\end{align*}
Taking expectations on both sides we get
\begin{align*}
    \eta^c\mathbb{E}\big[\phi(\widehat{x_{r, k+1/2}}) - \phi(x) \big] &\leq \tilde{V}_{\overline{\mu_{r,k}}}^{h_{r,k}}(x) -\tilde{V}_{\overline{\mu_{r,k+1}}}^{h_{r,k+1}}(x) \underbrace{- \frac{1}{2} \mathbb{E}\big[\nrm{\widehat{x_{r,k}} - \widehat{x_{r, k+1/2}}}^2\big]}_{B1} + \underbrace{\frac{(\eta^c)^2}{2}\mathbb{E}\big[\nrm{\overline{g_{r,k+1/2}} - \overline{g_{r,k}}}_\ast^2 \big] }_{B2} \\
    &\qquad + \underbrace{ \mathbb{E} \big[ \langle \frac{\eta^c}{M} \sum_{m=1}^M g_m(x_{r,k+1/2}^m) - \overline{g_{r,k+1/2}}, \widehat{x_{r, k+1/2}} - x\rangle\big]}_{B3} \\
    &\qquad + \underbrace{\frac{\eta^c\beta}{2M}\sum_{m=1}^M\mathbb{E}\big[\nrm{\widehat{x_{r, k+1/2}} - x_{r,k+1/2}^m}^2\big]}_{B4}.
\end{align*}
B2 is bounded in Lemma \ref{lem:gradient-diff}. Therefore, for $\eta^c \leq \frac{1}{5\beta^2}$,
\begin{align*}
    B1 + B2 &\leq \frac{5\sigma^2(\eta^c)^2}{M} + 20\beta^2(\eta^c)^4(k+1)^2G^2.
\end{align*}
B3 is zero after taking the expectation by Lemma \ref{lem:unbiased-grad}. B4 is bounded in Lemma \ref{lem:client-server-z}. Plugging the bounds for $B1+B2, \ B3$, and $B4$ back in completes the proof.
\end{proof}

\section{FeDualEx in Other Settings} \label{appx:FeDualEx-Others}
In this section, we provide the algorithm along with the convergence rate for sequential versions of FeDualEx. The proofs in this section rely only on the Lipschitzness of the gradient operator. As a result, the analysis applies to both composite saddle point optimization and composite convex optimization.

\begin{algorithm}[t]
\renewcommand\thealgorithm{3}
   \caption{\textsc{Stochastic-Dual-Extrapolation} for Composite SPP} 
   \label{alg:stochastic-DualEx}
\begin{algorithmic}
     \REQUIRE $\phi(z) = f(x,y) + \psi_1(x) - \psi_2(y)$: objective function; $\ell(z)$: distance-generating function; $g(z) = (\nabla_x f (x,y), -\nabla_y f (x,y))$: gradient operator.
    \renewcommand{\algorithmicrequire}{\textbf{Hyperparameters:}} \REQUIRE $T$: number of iterations; $\eta$: step size.
    \renewcommand{\algorithmicrequire}{\textbf{Dual Initialization:}} \REQUIRE $\varsigma_{0} = 0$: initial dual variable, $\bar{\varsigma} \in \mc{S}$: fixed point in the dual space.
    \ENSURE Approximate solution $z = (x, y)$ to $ \min_{x \in \mc{X}} \max_{y \in \mc{Y}} \phi(x, y)$ 
        \FOR{$t = 0,1,\dots,T-1$}
            \STATE $z_t = \tilde{\prox{}}_{\bar{\varsigma}}^{\ell_{t}} (\varsigma_t)$ \hfill $\vartriangleright$ Two-step evaluation of the generalized proximal operator
            \STATE $z_{t+1/2} = \tilde{\prox{}}_{\bar{\varsigma} - \varsigma_{t}}^{\ell_{t}} (\eta^c g(z_{t}; \xi_{t}))$
            \STATE $\varsigma_{t+1} = \varsigma_{t} + \eta^c g(z_{t+1/2}; \xi_{t+1/2})$ \hfill $\vartriangleright$ Dual variable update
       \ENDFOR
       \STATE \textbf{end for}
   \STATE \textbf{Return:} $\frac{1}{T} \sum_{t=0}^{T-1}z_{t+1/2}$.
\end{algorithmic}
\end{algorithm}

\subsection{Stochastic Dual Extrapolation for Composite Saddle Point Optimization} \label{appx:stochastic}
The sequential version of FeDualEx immediately yields Algorithm \ref{alg:stochastic-DualEx}, stochastic dual extrapolation for Composite SPP. This algorithm generalizes dual extrapolation to both  composite and smooth stochastic saddle point optimization with the latter taking $\psi(z) = 0$. Its convergence rate is analyzed in the following theorem, which to the best of our knowledge, is the first one for stochastic composite saddle point optimization.
\renewcommand{\thetheorem}{\ref{thm:stochastic}}
\begin{theorem} Under the sequential version of Assumption \ref{asm:obj-func}-\ref{asm:compactness}, namely with $M = 1$, $\forall z \in \mc{Z}$, the ergodic intermediate sequence generated by Algorithm \ref{alg:stochastic-DualEx} satisfies
\begin{align*} 
    \mathbb{E}\big[\phi(\frac{1}{T}\sum_{t=0}^{T-1}z_{t+1/2}) - \phi(z) \big] &\leq \frac{B}{\eta T} + 3\sigma^2\eta.
\end{align*}
Choosing step size
\begin{align*}
    \eta = \min \{\frac{1}{3\beta^2}, \frac{B^\frac{1}{2}}{3^\frac{1}{2}\sigma T^\frac{1}{2}}\},
\end{align*}
further yields the following convergence rate:
\begin{align*}
    \mathbb{E}\big[\phi(\frac{1}{T}\sum_{t=0}^{T-1}z_{t+1/2}) - \phi(z) \big] &\leq \frac{3\beta^2B}{T} + \frac{3^\frac{1}{2}\sigma B^\frac{1}{2}}{T^\frac{1}{2}}.
\end{align*}
\end{theorem}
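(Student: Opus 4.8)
The plan is to treat this as the one-client specialization ($M=1$) of the federated analysis, so that essentially the entire machinery behind \Cref{thm:main} carries over with the round index suppressed and the local counter renamed to a global iteration $t$ (with $K \to T$). First I would rerun the reformulation preceding \eqref{eq:seq1}--\eqref{eq:seq2}: writing $\omega_t = \bar\varsigma - \varsigma_t$, the updates of \Cref{alg:stochastic-DualEx} become $\omega_{t+1/2} = \omega_t - \eta g(z_t;\xi_t)$ and $\omega_{t+1} = \omega_t - \eta g(z_{t+1/2};\xi_{t+1/2})$ with $z_t = \nabla\ell_t^\ast(\omega_t)$ and $z_{t+1/2} = \nabla\ell_{t+1}^\ast(\omega_{t+1/2})$. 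The crucial simplification I would exploit is that with a single client the server average coincides with the client iterate, so every client-drift quantity vanishes identically: \Cref{lem:client-server-z} gives $0$, the bound of \Cref{lem:gradient-inner} contributes $0$, and in \Cref{lem:gradient-diff} the two client-drift terms drop out, leaving only stochastic-noise and true-gradient-Lipschitz pieces.

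The heart of the argument is the per-step progress bound, the sequential analogue of \Cref{lem:main-grad}. I would combine the regularization identity (\Cref{lem:non-smooth}, whose negative generalized Bregman divergences are produced by generating $\psi$ through $\ell_t = \ell + t\eta\psi$) with the smooth decomposition (\Cref{lem:smooth}); the extra-step inner product is controlled by Cauchy--Schwarz (\Cref{lem:cauchy-schwarz}) and Young's inequality (\Cref{lem:young}), trading it for $\tfrac{\eta^2}{2}\nrm{g(z_{t+1/2};\xi_{t+1/2}) - g(z_t;\xi_t)}_\ast^2$ against the negative $-\tfrac12\nrm{z_{t+1/2}-z_t}^2$ coming from strong convexity of $\ell$ (invoked via \Cref{lem:bregman-larger} on $-\tilde V^{\ell_t}_{\omega_t}(z_{t+1/2})$). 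Here lies the only genuinely new estimate, and the main obstacle: since there is no $1/M$ variance reduction, I would split the stochastic gradient difference three ways,
\begin{align*}
    g(z_{t+1/2};\xi_{t+1/2}) - g(z_t;\xi_t) &= \big[g(z_{t+1/2};\xi_{t+1/2}) - g(z_{t+1/2})\big] + \big[g(z_t) - g(z_t;\xi_t)\big] + \big[g(z_{t+1/2}) - g(z_t)\big],
\end{align*}
and apply the AM--QM inequality (\Cref{lem:l1-l2}) together with the bounded-variance part of \Cref{asm:grad-op} (using the filtration of \Cref{rmk:martingale}) and $\beta$-Lipschitzness to get $\mathbb{E}\nrm{g(z_{t+1/2};\xi_{t+1/2}) - g(z_t;\xi_t)}_\ast^2 \le 6\sigma^2 + 3\beta^2\,\mathbb{E}\nrm{z_{t+1/2}-z_t}^2$. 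Choosing $\eta$ small enough (the $\tfrac{1}{3\beta^2}$ branch) makes $3\eta^2\beta^2 - 1 \le 0$, so the $\nrm{z_{t+1/2}-z_t}^2$ terms cancel and the surviving variance contribution is exactly $3\sigma^2\eta^2$; the cross term vanishes in expectation by unbiasedness (\Cref{lem:unbiased-grad}). This is expected to yield, for every $z\in\mc{Z}$,
\begin{align*}
    \eta\,\mathbb{E}\big[\langle g(z_{t+1/2}), z_{t+1/2}-z\rangle + \psi(z_{t+1/2}) - \psi(z)\big] &\le \tilde V^{\ell_t}_{\omega_t}(z) - \tilde V^{\ell_{t+1}}_{\omega_{t+1}}(z) + 3\sigma^2\eta^2.
\end{align*}

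From here the remaining steps are routine. I would telescope over $t = 0,\dots,T-1$; the generalized Bregman divergences collapse, the initial one satisfies $\tilde V^{\ell_0}_{\omega_0}(z) = V^{\ell}_{z_0}(z) \le B$ by \Cref{asm:compactness}, and the final nonnegative term is discarded. Dividing by $\eta T$ and invoking the regret-to-gap reduction (\Cref{lem:regret}) with Jensen's inequality (\Cref{lem:jensen}) converts the averaged linearized objective into the duality gap of the ergodic iterate, giving the intermediate bound $\mathbb{E}[\phi(\tfrac1T\sum_t z_{t+1/2}) - \phi(z)] \le \tfrac{B}{\eta T} + 3\sigma^2\eta$. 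Finally, the stated step size balances $\tfrac{B}{\eta T}$ against $3\sigma^2\eta$ (optimal $\eta \asymp \sqrt{B/(\sigma^2 T)}$) while respecting the contraction constraint; using $\tfrac1\eta \le 3\beta^2 + \tfrac{3^{1/2}\sigma T^{1/2}}{B^{1/2}}$ would split the bound into the advertised $\mc{O}(\beta^2 B/T)$ and $\mc{O}(\sigma B^{1/2}/T^{1/2})$ terms. I expect the core difficulty to remain the single-client variance handling: arranging the three-way noise split and the precise cancellation between the strong-convexity contraction and the Lipschitz term so that the per-step stochastic error is exactly $\mc{O}(\sigma^2\eta^2)$.
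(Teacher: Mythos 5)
Your proposal is correct and follows essentially the same route as the paper's proof in Appendix \ref{appx:stochastic}: the identical reformulation via $\omega_t = \bar\varsigma - \varsigma_t$, the regularization identity mirroring Lemma \ref{lem:non-smooth}, absorption of the extra-step inner product via Cauchy--Schwarz and Young's inequality against the strong-convexity terms obtained through Lemma \ref{lem:bregman-larger}, the three-way noise split bounded by AM--QM giving the per-step bound $\tilde V^{\ell_t}_{\omega_t}(z) - \tilde V^{\ell_{t+1}}_{\omega_{t+1}}(z) + 3\sigma^2\eta^2$ under $\eta \le \tfrac{1}{3\beta^2}$, and telescoping plus Lemma \ref{lem:regret} and the step-size balance. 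The only cosmetic difference is that you frame the argument as the $M=1$ specialization of the federated machinery (noting the client-drift lemmas vanish), whereas the paper writes the sequential proof directly; the quantitative content is the same.
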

\begin{proof}
By proof similar to Lemma \ref{lem:non-smooth}, we have 
\begin{align*}
 \eta \big[ \psi(z_{t+1/2}) - \psi(z)\big] &= \tilde{V}_{\omega_{t}}^{\ell_{t}}(z)  - \tilde{V}_{\omega_{t+1}}^{\ell_{t+1}}(z) - \tilde{V}_{\omega_{t}}^{\ell_{t}}(z_{t+1/2}) - \tilde{V}_{\omega_{t+1/2}}^{\ell_{t+1}}(z_{t+1}) \\ 
    & \quad + \eta\langle g_{t+1/2} - g_{t}, z_{t+1/2} - z_{t+1} \rangle + \eta\langle g_{t+1/2}, z - z_{t+1/2} \rangle \\
    &\leq \tilde{V}_{\omega_{t}}^{\ell_{t}}(z)  - \tilde{V}_{\omega_{t+1}}^{\ell_{t+1}}(z) \\
    & \quad \underbrace{- \frac{1}{2} \nrm{z_{t} - z_{t+1/2}}^2 - \frac{1}{2} \nrm{z_{t+1/2} - z_{t+1}}^2 + \eta\langle g_{t+1/2} - g_{t}, z_{t+1/2} - z_{t+1} \rangle}_{A} \\
    & \quad + \underbrace{\eta \langle g(z_{t+1/2}) - g_{t+1/2}, z_{t+1/2} - z\rangle}_{B} - \eta\langle g(z_{t+1/2}), z_{t+1/2} - z \rangle.
\end{align*}
where the inequality holds by Lemma \ref{lem:bregman-larger} and the strong convexity of $\ell$ in Remark \ref{rmk:extension}, and then simply expanding the last term to build a connection between the stochastic gradient and true gradient. By Cauchy-Schwarz inequality (Lemma \ref{lem:cauchy-schwarz}), Young's inequality (Lemma \ref{lem:young}), and Lemma \ref{lem:l1-l2},
\begin{align*}
    A &\leq - \frac{1}{2} \nrm{z_{t} - z_{t+1/2}}^2 - \frac{1}{2} \nrm{z_{t+1/2} - z_{t+1}}^2 + \frac{\eta^2}{2}\nrm{g_{t+1/2} - g_{t}}_\ast^2 + \frac{1}{2}\nrm{z_{t+1/2} - z_{t+1}}^2 \\
    &= - \frac{1}{2} \nrm{z_{t} - z_{t+1/2}}^2 + \frac{\eta^2}{2}\nrm{[g_{t+1/2} - g(z_{t+1/2})] + [g(z_{t}) - g_{t}] + [g(z_{t+1/2}) - g(z_{t})]}_\ast^2 \\
    &\leq - \frac{1}{2} \nrm{z_{t} - z_{t+1/2}}^2  + \frac{3\eta^2}{2}\nrm{g(z_{t+1/2}) - g(z_{t})}_\ast^2 + \frac{3\eta^2}{2}\nrm{g_{t+1/2} - g(z_{t+1/2})}_\ast^2 + \frac{3\eta^2}{2}\nrm{g(z_{t}) - g_{t}}_\ast^2 \\
    &\leq \frac{3\eta^2\beta^2 - 1}{2} \nrm{z_{t} - z_{t+1/2}}^2 + \frac{3\eta^2}{2}\nrm{g_{t+1/2} - g(z_{t+1/2})}_\ast^2 + \frac{3\eta^2}{2}\nrm{g(z_{t}) - g_{t}}_\ast^2,
\end{align*}
where the last inequality holds by the $\beta$-Lipschitzness of the gradient operator. After taking expectations, the last two terms are bounded by the variance of the gradient $\sigma^2$, and $B$ becomes zero by proof similar to Lemma \ref{lem:unbiased-grad}. Therefore, for $\eta \leq \frac{1}{3\beta^2}$
\begin{align*}
    \eta \mathbb{E}\big[\langle g(z_{t+1/2}), z_{t+1/2} - z \rangle + \psi(z_{t+1/2}) - \psi(z)\big] &\leq \tilde{V}_{\omega_{t}}^{\ell_{t}}(z)  - \tilde{V}_{\omega_{t+1}}^{\ell_{t+1}}(z) + 3\eta^2 \sigma^2.
\end{align*}
Telescoping over all $t \in \{0, ..., T-1\}$ and dividing both sides by $\eta T$ completes the proof.
\end{proof}

\subsection{Deterministic Dual Extrapolation for Composite Saddle Point Optimization} \label{appx:deterministic}

\begin{algorithm}[t]
\renewcommand\thealgorithm{4}
   \caption{\textsc{Composite-Dual-Extrapolation}} 
   \label{alg:composite-DualEx}
\begin{algorithmic}
     \REQUIRE $\phi(z) = f(x,y) + \psi_1(x) - \psi_2(y)$: objective function; $\ell(z)$: distance-generating function; $g(z) = (\nabla_x f (x,y), -\nabla_y f (x,y))$: gradient operator.
    \renewcommand{\algorithmicrequire}{\textbf{Hyperparameters:}} \REQUIRE $T$: number of iterations; $\eta$: step size.
    \renewcommand{\algorithmicrequire}{\textbf{Dual Initialization:}} \REQUIRE $\varsigma_{0} = 0$: initial dual variable, $\bar{\varsigma} \in \mc{S}$: fixed point in the dual space.
    \ENSURE Approximate solution $z = (x, y)$ to $ \min_{x \in \mc{X}} \max_{y \in \mc{Y}} \phi(x, y)$ 
        \FOR{$t = 0,1,\dots,T-1$}
            \STATE $z_t = \tilde{\prox{}}_{\bar{\varsigma}}^{\ell_{t}} (\varsigma_t)$ \hfill $\vartriangleright$ Two-step evaluation of the generalized proximal operator
            \STATE $z_{t+1/2} = \tilde{\prox{}}_{\bar{\varsigma} - \varsigma_{t}}^{\ell_{t}} (\eta^c g(z_{t}))$
            \STATE $\varsigma_{t+1} = \varsigma_{t} + \eta^c g(z_{t+1/2})$ \hfill $\vartriangleright$ Dual variable update
       \ENDFOR
       \STATE \textbf{end for}
   \STATE \textbf{Return:} $\frac{1}{T} \sum_{t=0}^{T-1}z_{t+1/2}$.
\end{algorithmic}
\end{algorithm}

Further removing the data-dependent noise in the gradient, we present the deterministic sequential version of FeDualEx, which still generalizes Nesterov's dual extrapolation \citep{nesterov2007dual} to composite saddle point optimization. As a result, we term this algorithm composite dual extrapolation, as presented in Algorithm \ref{alg:composite-DualEx}.

We also provide a convergence analysis, which shows that composite dual extrapolation achieves the $\mc{O}(\frac{1}{T})$ convergence rate as its original non-composite smooth version \citep{nesterov2007dual}, as well as composite mirror prox (CoMP) \citep{he2015mirror}. We do so with a very simple proof based on the recently proposed notion of relative Lipschitzness \citep{CohenST21}. We start by introducing the definition of relative Lipschitzness and a relevant lemma.

\begin{definition}[Relative Lipschitzness (Definition 1 in \citet{CohenST21})] \label{def:relative-lipschitz} For convex distance-generating function $h:\mc{Z} \rightarrow \mathbb{R}$, we call operator $g: \mc{Z} \rightarrow \mc{Z}^\ast$ $\lambda$-relatively Lipschitz with respect to $h$ if  $ \ \forall z, w, u \in \mc{Z}$,
\begin{align*}
    \langle g(w) - g(z), w - u \rangle \leq \lambda (V^h_z(w) + V^h_w(u)).
\end{align*}
\end{definition}

\begin{lemma}[Lemma 1 in \citet{CohenST21}] \label{lem:relative-lipschitz} If $g$ is $\beta$-Lipschitz and $h$ is $\alpha$-strongly convex, $g$ is $\frac{\beta}{\alpha}$-relatively Lipschitz with respect to $h$.
\end{lemma}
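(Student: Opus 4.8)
The plan is to establish the inequality in Definition \ref{def:relative-lipschitz} directly from the left-hand side by a short chain of standard estimates, with the target constant $\lambda = \frac{\beta}{\alpha}$ emerging naturally once the Bregman terms are lower bounded by squared norms. Fix arbitrary $z, w, u \in \mc{Z}$. First I would bound the inner product by pairing primal and dual norms: by the Cauchy--Schwarz inequality (Lemma \ref{lem:cauchy-schwarz}) applied with the norm/dual-norm pairing,
\begin{align*}
    \langle g(w) - g(z), w - u \rangle \leq \nrm{g(w) - g(z)}_\ast \, \nrm{w - u}.
\end{align*}
Next, invoking the $\beta$-Lipschitzness of the operator $g$ on the dual-norm factor gives $\nrm{g(w)-g(z)}_\ast \leq \beta \nrm{w-z}$, so that the right-hand side is at most $\beta \nrm{w-z}\,\nrm{w-u}$.

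The key algebraic step is to split this product symmetrically via Young's inequality (Lemma \ref{lem:young}) with conjugate exponents $p = q = 2$, yielding
\begin{align*}
    \beta \nrm{w-z}\,\nrm{w-u} \leq \frac{\beta}{2}\nrm{w-z}^2 + \frac{\beta}{2}\nrm{w-u}^2.
\end{align*}
This particular symmetric split is what forces the two resulting squared-norm terms to match the two Bregman divergences $V^h_z(w)$ and $V^h_w(u)$ appearing in the definition of relative Lipschitzness, which is the one point requiring a little care. Finally, I would convert each squared norm into its Bregman divergence using the $\alpha$-strong convexity of $h$ (the modulus-$\alpha$ version of Assumption \ref{asm:strong-convex}), which asserts $V^h_z(w) \geq \frac{\alpha}{2}\nrm{w-z}^2$ and $V^h_w(u) \geq \frac{\alpha}{2}\nrm{w-u}^2$; equivalently $\nrm{w-z}^2 \leq \frac{2}{\alpha}V^h_z(w)$ and $\nrm{w-u}^2 \leq \frac{2}{\alpha}V^h_w(u)$. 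Substituting these bounds gives
\begin{align*}
    \langle g(w) - g(z), w - u \rangle \leq \frac{\beta}{\alpha}\big( V^h_z(w) + V^h_w(u) \big),
\end{align*}
which is exactly the claimed $\frac{\beta}{\alpha}$-relative Lipschitzness, completing the argument.

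I do not anticipate a genuine obstacle here: the proof is a clean four-line estimate, and the only thing to get right is bookkeeping, namely ensuring the Cauchy--Schwarz step uses the correct dual-norm pairing and that the $p=q=2$ choice in Young's inequality distributes the factor $\beta$ evenly so that each half aligns with the $\frac{\alpha}{2}$-lower bound from strong convexity, producing the constant $\frac{\beta}{\alpha}$ rather than some other multiple. Since the definition of relative Lipschitzness and all three ingredient inequalities (Cauchy--Schwarz, Young, strong convexity as a Bregman lower bound) are already available in the excerpt, no additional machinery is needed.
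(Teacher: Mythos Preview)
Your proof is correct and is the standard argument. The paper itself does not supply a proof for this lemma---it is stated as a citation to \citet{CohenST21}---so there is nothing in the paper to compare against; your Cauchy--Schwarz, Lipschitz, Young, strong-convexity chain is exactly the expected derivation and matches the original source.
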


\thmdeterministic*
\begin{proof}
By proof similar to Lemma \ref{lem:non-smooth}, we have 
\begin{align*}
 \eta \big[ \psi(z_{t+1/2}) - \psi(z)\big] &= \tilde{V}_{\omega_{t}}^{\ell_{t}}(z)  - \tilde{V}_{\omega_{t+1}}^{\ell_{t+1}}(z) - \tilde{V}_{\omega_{t}}^{\ell_{t}}(z_{t+1/2}) - \tilde{V}_{\omega_{t+1/2}}^{\ell_{t+1}}(z_{t+1}) \\ 
    & \quad + \eta\langle g(z_{t+1/2}) - g(z_{t}), z_{t+1/2} - z_{t+1} \rangle + \eta\langle g(z_{t+1/2}), z - z_{t+1/2} \rangle.
\end{align*}
By Lemma \ref{lem:relative-lipschitz}, we know that $g$ is $\beta$-relatively Lipschitz with respect to $\ell$ under the $\beta$-Lipschitzness assumption of $g$ and $1$-strong convexity assumption of $\ell$. Then by Definition \ref{def:relative-lipschitz}, we have
\begin{align*}
 \eta & \big[ \psi(z_{t+1/2}) - \psi(z) + \langle g(z_{t+1/2}), z_{t+1/2} - z \rangle \big] \\
 &\leq \tilde{V}_{\omega_{t}}^{\ell_{t}}(z)  - \tilde{V}_{\omega_{t+1}}^{\ell_{t+1}}(z) - \tilde{V}_{\omega_{t}}^{\ell_{t}}(z_{t+1/2}) - \tilde{V}_{\omega_{t+1/2}}^{\ell_{t+1}}(z_{t+1}) + \eta^c\langle g(z_{t+1/2}) - g(z_{t}), z_{t+1/2} - z_{t+1} \rangle \\
    &\leq \tilde{V}_{\omega_{t}}^{\ell_{t}}(z)  - \tilde{V}_{\omega_{t+1}}^{\ell_{t+1}}(z) - \tilde{V}_{\omega_{t}}^{\ell_{t}}(z_{t+1/2}) - \tilde{V}_{\omega_{t+1/2}}^{\ell_{t+1}}(z_{t+1})  + \eta^c \beta \big[ V_{z_{t}}^{\ell}(z_{t+1/2}) + V_{z_{t+1/2}}^{\ell}(z_{t+1}) \big] \\
    &\leq \tilde{V}_{\omega_{t}}^{\ell_{t}}(z)  - \tilde{V}_{\omega_{t+1}}^{\ell_{t+1}}(z).
\end{align*}
where the last inequality holds for $\eta \leq \frac{1}{\beta}$ by Lemma \ref{lem:bregman-larger}.
Telescoping over all $t \in \{0, ..., T-1\}$ and dividing both sides by $\eta T$ completes the proof.
\end{proof}

\section{Federated Mirror Prox} \label{appx:FedMiP}
We present Federated Mirror Prox (FedMiP) here in Algorithm \ref{alg:fed-MiP} as a baseline. The part highlighted in {\colorbox{mygreen!30}{green}} resembles the mirror prox algorithm introduced in Section \ref{appx:MP-DE}. We use the composite mirror map representation introduced in Section \ref{appx:MD-DA} to avoid confusion, as the composite proximal operator we proposed for FeDualEx is slightly different from that used in composite mirror descent as discussed in Section \ref{sec:FeDualEx-alg}.

\begin{algorithm}[H]
\renewcommand\thealgorithm{2}
   \caption{\textsc{Federated-Mirror-Prox} (FedMiP) for Composite SPP} 
   \label{alg:fed-MiP}
\begin{algorithmic}[1]
     \REQUIRE $\phi(z) = f(x,y) + \psi_1(x) - \psi_2(y) = \frac{1}{M}\sum_{m=1}^M f_m(\cdot) + \psi_1(x) - \psi_2(y)$: objective function; $\ell(z)$: distance-generating function; $g_m(z) = (\nabla_x f_m (x,y), -\nabla_y f_m (x,y))$: gradient operator.
    \renewcommand{\algorithmicrequire}{\textbf{Hyperparameters:}} \REQUIRE $R$: number of rounds of communication; $K$: number of local update iterations; $\eta^s$: server step size; $\eta^c$: client step size.
    \renewcommand{\algorithmicrequire}{\textbf{Primal Initialization:}} \REQUIRE $z_{0}$: initial primal variable.
    \ENSURE Approximate solution $z = (x, y)$ to $ \min_{x \in \mc{X}} \max_{y \in \mc{Y}} \phi(x, y)$ 
   \FOR{$r = 0,1,\dots,R-1$}
       \STATE Sample a subset of clients $C_r \subseteq [M]$
       \FOR{$m \in C_r$ \textbf{in parallel}}
           \STATE $z^m_{r,0} = z_{r}$
           \tikzmk{A}\FOR{$k = 0,1,\dots,K-1$}
               \STATE $z^m_{r,k+1/2} = \nabla(\ell+\eta^c \psi)^\ast(\nabla h(z^m_{r,k}) - \eta^c g(z^m_{r,k}; \xi^m_{r,k}))$
               \STATE $z^m_{r,k+1} = \nabla(\ell+\eta^c \psi)^\ast(\nabla h(z^m_{r,k}) - \eta^c g(z^m_{r,k+1/2}; \xi^m_{r,k+1/2}))$
           \ENDFOR
           \STATE \textbf{end for}
       \ENDFOR
       \STATE \textbf{end parallel for}
       \tikzmk{B} \boxit{mygreen}
       \STATE $\Delta_r = \frac{1}{|\mc{C}_r|}\sum_{m\in \mc{C}_r} (z^m_{r,K} - z^m_{r,0})$
       \STATE $z_{r+1} = \nabla(\ell+\eta^s\eta^cK \psi)^\ast(\nabla h(z_r) + \eta^s \Delta_r)$
   \ENDFOR
   \STATE \textbf{end for}
   \STATE \textbf{Return:} $\frac{1}{RK} \sum_{r=0}^{R-1}\sum_{k=0}^{K-1} z_{r,k+1/2}$.
\end{algorithmic}
\end{algorithm}

\end{document}